\def\bA{{\mathbf A}}
\def\bX{{\mathbf X}}
\def\bx{{\mathbf x}}
\def\bv{{\mathbf v}}
\def\bZ{{\mathbf Z}}
\def\bTheta{{\boldsymbol \Theta}}
\def\bsigma{{\boldsymbol \sigma}}
\def\tr{\mathop{\text{tr}}\kern.2ex}
\def\tt{{\tilde t}}
\def\tbv{{\tilde {\mathbf v}}}
\def\tR{{\tilde R}}
\def\P{{\mathbb P}}
\def\E{{\mathbb E}}
\def\cA{{\mathcal{A}}}
\def\cB{{\mathcal{B}}}
\def\cC{{\mathcal{C}}}
\def\cD{{\mathcal{D}}}
\def\cE{{\mathcal{E}}}
\def\cK{{\mathcal{K}}}
\def\cL{{\mathcal{L}}}
\def\cN{{\mathcal{N}}}
\def\cO{{\mathcal{O}}}
\def\cR{{\mathcal{R}}}
\def\cS{{\mathcal{S}}}
\def\cT{{\mathcal{T}}}
\def\cU{{\mathcal{U}}}
\def\cV{{\mathcal{V}}}
\def\cW{{\mathcal{W}}}
\def\cX{{\mathcal{X}}}
\def\mfS{{\mathfrak{S}}}
\def\tr{{\text{Tr}}}
\def\mat{{\text{mat}}}
\def\th{{\text{th}}}
\def\true{\text{true}}
\def\sign{\mathop{\text{sign}}}
\def\supp{\mathop{\text{supp}}}
\def\md{\mathrm{d}}
\def\rank{\mathrm{rank}}
\long\def\comment#1{}
\def\vec{\mathop{\text{vec}}}
\def\tr{\mathop{\text{Tr}}}
\newcommand{\bel}{\begin{eqnarray}\label}
\newcommand{\eel}{\end{eqnarray}}
\newcommand{\bes}{\begin{eqnarray*}}
\newcommand{\ees}{\end{eqnarray*}}
\def\betah{{\widehat \beta}}
\def\bThetah{{\widehat \bTheta}}
\def\Sighat{{\widehat \Sigma}}
\def\Thetahat{{\widehat {\boldsymbol \Theta}}}
\def\Deltahat{{\widehat {\boldsymbol \Delta}}}
\newcommand{\la}{\langle}
\newcommand{\ra}{\rangle}
\let\emptyset\varnothing
\def \rg {{\rm rg}}
\def \RG {{\rm RG}}
\def\##1\#{\begin{align}#1\end{align}}
\def\$#1\${\begin{align*}#1\end{align*}}
\begin{document}

\title{\LARGE Provably Efficient High-Dimensional Bandit Learning with Batched Feedbacks}
\author
{Jianqing Fan \qquad Zhaoran Wang \qquad Zhuoran Yang \qquad Chenlu Ye
}
\date{}
\maketitle

\def\r#1{\textcolor{red}{\bf #1}}
\def\b#1{\textcolor{blue}{\bf #1}}

\setlength{\abovedisplayskip}{6pt}
\setlength{\belowdisplayskip}{6pt}

\begin{abstract}
\begin{spacing}{1.35}

We study high-dimensional multi-armed contextual bandits with batched feedback where the $T$ steps of online  interactions are divided into $L$ batches. 
In specific, each batch collects data according to a policy that depends on previous batches
and the rewards are revealed only at the end of the batch. 
Such a feedback structure is popular in applications such as personalized medicine and online advertisement, where the online data often do not arrive in a fully serial manner. 
We consider high-dimensional and linear settings where the reward function of the bandit model admits either a  sparse or low-rank structure and ask how small a number of batches are needed for a comparable performance with fully dynamic data in which $L = T$. For these  settings, we design a  provably sample-efficient algorithm  which achieves a $   \mathcal{\tilde O}(s_0^2 \log^2  T)$ regret in the sparse case and  $ \mathcal{\tilde O} ( r ^2  \log^2  T)$ regret in the low-rank case, using only
$L = \mathcal{O}( \log T)$ batches. Here $s_0$ and $r$ are  the sparsity and rank of the reward  parameter in sparse and low-rank cases, respectively, and 
$ \mathcal{\tilde O}(\cdot)$ omits logarithmic factors involving the feature  dimensions. 
In other words, our algorithm achieves regret bounds comparable to those in fully sequential setting with only  $\mathcal{O}( \log T)$ batches. 
Our algorithm features a novel batch allocation method that adjusts the batch sizes according to the estimation accuracy within each batch and cumulative regret.  Furthermore, we also conduct experiments with synthetic and real-world data to validate our theory.

\end{spacing}


\end{abstract}
\section{Introduction}
With the growing availability of user-specific data and the increasing demand for personalized service, high-dimensional contextual bandits have received tremendous attention in various industries, including healthcare \citep{ameko2020offline}, individual content recommendations \citep{li2010contextual}, dynamic pricing \citep{qiang2016dynamic,fan2023policy}, and talent searching \citep{geyik2018session}. Furthermore, data dimensions are rapidly increasing in the era of big data. For example, patients' information in healthcare includes thousands of components, such as medical history records, clinical test results, and biomarker profiles. Sometimes, this information has to be framed into a low-rank matrix model, like X-ray images and genetic biomarker profiles. These problems are generally high-dimensional but often represented by parameters with sparse structures, such as vectors with low effective dimensions or matrices with low ranks. 


In the face of high-dimensional data, traditional sequential bandits encounter a problem of inefficiency and unrealistic feedback acquisition. Because the feedback may not be revealed instantly, making optimizations under high dimensions consumes significant time and resources. For example, in clinical trials, it takes a period to judge the effectiveness of treatments. Still, new patients usually need to be treated urgently and cannot wait for the previous patient to finish the trial. Similarly, in individual music recommendations, a vast number of users log in simultaneously and are impatient, so the system must interact with them simultaneously. Additionally, there are some applications where feedback is naturally divided into batches, such as in vaccine treatment, where vaccines are allocated into batches according to demand, and each batch corresponds to the vaccine given to people in each period.

Motivated by these problems, we study high-dimensional contextual bandits with batched feedback, formulated as high-dimensional linear contextual bandits or low-rank matrix bandits with sparse parameters, respectively. ``Batched feedback" means that feedback is not revealed instantly; instead, they are divided into several batches, and rewards in the same batch are revealed simultaneously at the end of the batch. Hence, the agent's knowledge is only updated once per batch. For the batched bandit problem, the agent 
needs to decide how to allocate $T$ rounds of interactions into $L$ batches and how to take actions in each batch.

Our problem involves three coupled challenges – (a) the exploration-exploitation tradeoff, (b) batched feedback, and (c) estimation in high-dimensional statistical models. Challenge (a) arises from the need to balance gathering enough samples to estimate the model accurately with the need to choose greedily based on the limited information available, known as the exploration-exploitation tradeoff. More importantly, since our feedback is divided into batches, we have to balance exploration and exploitation even before observing the contexts (data in the next batch). Challenge (b) is unique to our problem since we need to design the batch allocation policy so that the order of the expected regret will not be much affected by the batched feedback, thus making the decision process more challenging than the standard LASSO bandit. Finally, learning the reward functions for high-dimensional covariates naturally involves Challenge (c).

To address these challenges, we propose a novel algorithm that incorporates three key components: (a) a batch allocation policy that uses gradually decelerated-increasing batches, (b) an exploration strategy that employs $\epsilon$-decay forced sampling and arm-elimination, and (c) high-dimensional estimation techniques such as LASSO and nuclear-norm regularization. Specifically, for (a), we propose a novel technique that guarantees sufficient i.i.d. samples for estimation convergence and develop an analysis to demonstrate that the regret bound under batched scenarios almost matches that under sequential ones. Our approach is based on the intuition that our estimations become more accurate as time passes so that the updating frequency can be reduced. Additionally, as the convergence rate of estimations slows down, the growth rate of batch sizes should be decelerated accordingly. For (b), we introduce $\epsilon$-decay forced sampling due to the lack of knowledge about the actions at the beginning, so more importance should be attached to exploration. As more knowledge is obtained, the proportion of exploration decays. Moreover, arm elimination excludes poorly performing (sub-optimal) actions and identifies the optimal policy. For (c), under high-dimensional linear contextual bandits, the estimator will converge if a constant fraction of the non-i.i.d. sample set is i.i.d., as proved in \cite{bastani2015online}. In the low-rank matrix bandit scenario, to the best of our knowledge, we first establish a tail inequality for a low-rank estimator with a subset of enough i.i.d. samples. Notably, the proof is significantly different and novel compared with the one for the LASSO estimator \citep{bastani2015online}, where we show matrix concentration inequalities.

We theoretically demonstrate that for the sparse cases with $T$ users and $d$ covariate dimensions, only $L=\Theta(\log T)$ batches are sufficient to achieve a $\cO(\log d (\log^2 T+\log d))$ regret. Notably, the batched bandit algorithm achieves nearly the same order of regret bounds as the sequential one (LASSO bandit with $L=T$), which means that our algorithm attains a desired bound in the batched version. For a low-rank case with $T$ users and $d_1\times d_2$ matrix dimensions, the algorithm can guarantee the regret bound on the order of $\cO(\log(d_1d_2)(\log^2 T+\log(d_1d_2)))$ with only $L=\Theta(\log T)$ batches. Notably, we believe that our work is the first in a low-rank matrix setting that bounds the expected reward by the logarithmic dependence both on the sample size and on the covariate matrix dimension.

\subsection{Main Contributions}

Our contributions are three-fold. 
\begin{itemize}
    \item For high-dimensional linear bandits, we propose a computationally efficient algorithm that is based on a novel batch allocation policy. Our algorithm adjusts each batch's length according to the estimation error's convergence rate. Additionally, we show that our policy ensures sufficient i.i.d. samples during the estimation process. Our algorithm guarantees a regret bound of $\cO(s_0^2\log d(\log^2 T + \log d))$, where $s_0 \ll d$ is the number of effective dimensions and $T$ is the total number of users. This bound almost matches that in the sequential case ($\cO(s_0^2(\log T + \log d)^2)$ \citep{bastani2015online}).
    \item When extending to the low-rank matrix bandit problem, we incorporate the batch allocation policy into the algorithm. Furthermore, we demonstrate that the low-rank estimator will converge if a constant fraction of the samples are independently and identically distributed. In terms of regret bounds, the algorithm guarantees an $\cO(r^2\log(d_1d_2)(\log^2 T + \log(d_1d_2)))$ regret bound, where $r$ is the upper bound of the effective matrix rank and $d_1\times d_2$ is the dimension of the matrix. To the best of our knowledge, this is the first bound that achieves a logarithmic square dependence on both the sample size and the matrix dimensions.
    \item We conduct experiments on synthetic and real datasets to validate the performance of our algorithms. Particularly, we compare our algorithm under certain batches with the sequential version ($L=T$). Our results show that the performance of our algorithms nearly approximates the sequential one.
\end{itemize}

\subsection{Related Work}
Our research on batched high-dimensional bandit problems is built upon the fields of contextual bandits, high-dimensional statistics, and batched literature. This section will provide a brief overview of these three areas.

The contextual bandit problem is characterized by the exploration-exploitation tradeoff, which arises from the bandit feedback setting where only the feedback from the chosen decision is accessible to the agent. There are two main approaches to address the tradeoff. The first approach is to explore and exploit simultaneously by comparing the confidence bounds for all the policies, which are represented by UCB-type algorithms \citep{auer2002using,dani2008stochastic,rusmevichientong2010linearly,abbasi2013online,deshpande2012linear}. The second approach is to arrange pure-exploration steps, which was first proposed by \citet{goldenshluger2013linear}. They introduce a forced sampling method to generate enough i.i.d. samples and prove that the OLS bandit algorithm attains a regret bound logarithmically dependent on the sample size. However, their algorithm does not efficiently apply to high-dimensional settings.

In terms of high-dimensional contextual linear bandits, following the i.i.d. covariate setting and the approaches in \citet{goldenshluger2013linear}, \citet{bastani2015online} propose the LASSO bandit under the sparsity condition $|\beta|_0\leq s_0$. With a tighter regret analysis and convergence on LASSO estimators, their algorithm achieves a poly-logarithmic dependence on the sample size and covariate dimension: $\cO(s_0^2(\log T+\log d)^2)$. Additionally, by adopting techniques from LASSO bandit and using the MCP method, \citet{wang2018online} propose a $\epsilon$-decay sampling method. The regret bound yielded by their G-MCP-Bandit algorithm is optimal on the sample size $\cO(\log T)$ and covariate dimension $\cO(\log d)$.

Following LASSO bandit, much bandit literature has focused on the exploration-free version of high-dimensional contextual linear bandits. By requiring more diversity for the covariate distribution (relaxed symmetry condition),  \citet{ariu2020thresholded,oh2021sparsity} neither proceed with exploration nor require knowledge of the sparsity parameter $s_0$ without the marginal condition. Thus, their algorithms are parameter-free. Furthermore, \citet{bastani2021mostly} propose the Greedy-First algorithm that determines whether the greedy policy fails with a hypothesis test on the covariates and rewards.  Additionally, there is an emerging body of literature on
high-dimensional contextual linear bandit problems that propose algorithms to tackle various scenarios \citep{kim2019doubly,hao2020high,oh2021sparsity, ariu2022thresholded}. Further effort has been made to investigate the challenges in sparse linear Markov decision process (MDP) by \citep{hao2021online}.

In low-rank matrix scenarios, significant progress has been made recently. In statistics, \citet{negahban2011estimation} derive estimation error bounds for the trace regression model under nuclear norm penalization. Then, \citet{fan2021shrinkage} analyze the robust low-rank matrix recovery for heavy-tailed data by developing a robust quadratic loss function. When utilizing the low-rank structure in the bandit problem, \citet{jun2019bilinear} and its generalization \citet{lu2021low} propose online computation algorithms that bound the expected regret with $\tilde{\cO}((d_1+d_2)^{3/2}\sqrt{rT})$, where $\tilde{\cO}$ omits poly-logarithmic factors of the covariate matrix dimension $d_1, d_2$, matrix rank $r$, and sample size $T$. Additionally, \citet{li2022simple} provide a novel analysis technique by applying the matrix Bernstein inequality. However, due to some gaps, their approach cannot lead to regret depending logarithmically on $d_1, d_2$. In this paper, we introduce the statistical analysis of low-rank estimators and prove the convergence result for non-i.i.d. samples. With more rigorous analysis, we achieve an expected regret of $\cO(r^2\log(d_1d_2)(\log^2 T+\log(d_1d_2)))$.

Ultimately, our work is related to batched settings for high-dimensional sparse linear contextual bandits \citep{ren2020dynamic}. The field of bandits and Markov decision processes with batched feedback has been rapidly developing in recent years \citep{perchet2016batched,gao2019batched,han2020sequential,wang2021provably,karbasi2021parallelizing} due to their broad applications in real-world problems. Specifically, \citet{wang2020online} study batched high-dimensional sparse bandits with similar settings to our work. However, they set the batch size as fixed, so the regret bound is multiplied by the batch size number. In contrast, we design a fine-grained grid according to the estimation error bound. Meanwhile, \cite{kalkanli2021batched} propose batched Thompson sampling for the multi-armed bandit (MAB) via a dynamic batch allocation. To the best of our knowledge, our algorithm is the first to attain a regret bound in almost the same order as the LASSO bandit algorithm.

\subsection{Notation}\label{notation}
For any integer $n$, let $[n]$ be the set $\{1, \ldots, n\}$. For any vector $\beta\in\RR^d$ and index subset $\mathcal I \subset [d]$, denote by $\beta_{\mathit{I}}\in\RR^d$ the vector composed of nonzero elements of $\beta$ and by $\supp(\beta)$ the index set of nonzero entries of $\beta$. For any data matrix $\mathbf{Z} \in \RR^{n \times d}$, let $\hat{\Sigma}(\mathbf{Z})=\mathbf{Z}^{\mathrm{T}}\mathbf{Z}/n$ refer to its sample covariance matrix. For any subset $\mathcal{A} \subset [n]$, let $\hat{\Sigma}(\mathcal{A})$ represent $\hat{\Sigma}(\mathbf{Z}(\mathcal{A}))$. For $x,y \in \RR$, denote $\max(x,y)$ and $\min(x,y)$ by $x \vee y$ and $x \wedge y$ respectively. Additionally, let $\RR^+$ and $\ZZ^+$ represent positive real numbers and positive integers, and $\RR_{\succeq0}^{d\times d}$ refer to the set of positive semidefinite matrices of size d by d. For any real-valued random variable $z$, we claim that it is $\sigma$-subgaussian if $\mathbb{E}(e^{tz}) \le e^{\sigma^2t^2/2}$ for every $t \in \RR$, which implies that $\mathbb{E}(z)=0$ and $\mathrm{Var}(z) \le \sigma^2$.

For matrix models, use $\RR^{d_1 \times d_2}$ to denote the space of $d_1$-by-$d_2$ real matrices. For any matrix $\bX \in \RR^{d_1 \times d_2}$, define $\| \bX \|_{\op},~\| \bX \|_N,~\| \bX \|_F,~\| \bX \|_1$ and $\| \bX \|_{\max}$ to be its $l_2$ operator norm, nuclear norm, Frobenius norm, the sum of the absolute matrix values and elementwise max norm respectively. The vectorized version of $\bX$ is denoted as $\Vec{\bX}= ( \bX_1^{\mathrm{T}}, \bX_2^{\mathrm{T}}, \ldots, \bX_{d_2}^{\mathrm{T}})^{\mathrm{T}}$, where $\bX_j$ is the $j^{\th}$ column of $\bX$. Let $\mat(\bX)$ denote the $d_1$-by-$d_2$ matrix constructed by $\bX$, where $\left( x_{(j-1)d_1+1}, \ldots, x_{jd_1}\right)^{\mathrm{T}}$ is the $j^{\th}$ column of $\mat(\bx)$. For any time index subset $\mathcal{S}^{'} \subset [T]$, let $\mathbf{X}(\mathcal{S}^{'})$ be the $\,|\, \mathcal{S}^{'}  \,|\, \times d $ submatrix whose rows are $X_t$ for all the $t \in \mathcal{S}^{'} $. Given two matrices $\mathbf A, \mathbf B \in \RR^{d_1 \times d_2}$, we use $ \tr(\bf A^{\mathrm{T}} \bf B)$ to denote the matrix inner product $\left< \bf A, \bf B\right>$, where $\tr$ is the trace operator. 

\section{Preliminaries}
This section will describe the standard problem formulation of contextual bandits for high-dimensional linear models with batched feedback. 
\subsection{Contextual Bandit}\label{s:ContextualBandit}
Consider an arrival process with $T$ time steps. At each step $t \in [T]$, a new user (such as a patient) arrives, and the agent observes the covariate vector $X_t$ containing all available personal information (such as medical records, clinical test reports, and other useful observations). The observed sequence of covariates $\{X_t\}_{t\in [T]}$ is drawn i.i.d. from an unknown distribution $\mathcal{P}_X$ over a deterministic set $\mathcal{X}\subseteq\RR^d$. The agent then has access to $K$ arms and aims to optimize the total reward produced during the process. Let $\pi_t$ be the chosen arm at time $t$. If the agent selects action $k_t = \pi_t$, she will receive the reward $R_{t, \pi_t}$.

The agent's policy is denoted as $\pi=\{\pi_t\}_{t\in [T]}$, where $\pi_t \in [K]$ depends on the user's covariate and the information that the agent attains before the current batch. To benchmark the performance of policy $\pi$, we introduce the optimal policy $\pi_t^{\ast}= \argmax_{k \in [K]}{R_{t,k}}$, which attains the highest reward given the true parameters. Performance of the policy $\pi$ is measured by the notion of \emph{regret}, which is defined as the cumulative expected suboptimality compared with the best policy:
\begin{equation}\label{regret}
\RG(T,L)= \sum_{t=1}^{T}\mathbb{E}\bigl( R_{t,\pi_t^{\ast}} - R_{t,\pi_t} \bigr).
\end{equation}
The goal is to find the policy $\pi$ that minimizes the cumulative regret up to time $T$.

\subsection{Linear Models}\label{s:LinearModels}
Suppose that each arm $k\in[K]$ has an unknown parameter $\beta_k^{\true} \in \RR^d$. If we choose arm $k\in[K]$ at time $t$, the reward $r_{t,k} \in \RR$ is generated as
$$
r_{t,k} = \la x_t, \beta_k^{\true} \ra+\varepsilon_{t},
$$
where the noise $\varepsilon_{t}$ is a $\sigma$-subgaussian variable. Then, we consider two linear models: the high-dimensional sparse model and the low-rank matrix model.

\vspace{4pt}
\noindent
{\bf High-Dimensional Sparse Model.} The reward for arm $k\in[K]$ pulled at time $t$ is formulated as
$$
r_{t,k} = x_t^{\mathrm{T}}\beta_k^{\true}+\varepsilon_{t}.
$$
Now we have the dataset $\cD = (\bX, \br)$, where the rows of design matrix $\bX \in \RR^{T \times (d+1)}$ are the vectors $x_t$ and  treatment arm and the components of response vector $\br$ are the rewards $r_{t,\pi_t}$ (for all $t \in [T]$). We also denote the noise vector $(\varepsilon_1,\ldots,\varepsilon_T)$ by $\be$. By putting $\beta_k$'s as a long vector $\beta$, and appropriately expanding $\bX$ by padding zero according to the treatment arms, and still denoting the rearranged matrix as $\bX$, we can write the model as $\br = \bX \bbeta + \be$.  Thus, for a regularization parameter $\lambda \ge 0$, the LASSO estimator \citep{tibshirani1996regression} is determined by solving
\#\label{LASSO}
\hat \beta(\cD,\lambda) = \arg\min_{\beta} \left\{ \frac{\| \br-\mathbf{X}\beta\|_2^2}{T} + \lambda \| \beta \|_1 \right\}.
\#

\vspace{4pt}
\noindent
{\bf Low-Rank Matrix Model.} The reward for arm $k\in[K]$ pulled at time $t$ satisfies
$$
r_{t,k} = \left< x_t, \bTheta_k^{\true}\right> + \varepsilon_{t},
$$
where $\bTheta_k^{\true} \in \RR^{d_1 \times d_2}$ is the true coefficient matrix. Given the sequence of covariate matrices $\{ \bX_t \in \RR^{d_1 \times d_2}\}_{t\in [T]} $ and the rewards $\{ r_{t,k} \in \RR\}_{t\in [T]}$, we can represent this model in a more compact form by defining the observation operator $\mathfrak X_T:\RR^{d_1 \times d_2}\rightarrow\RR^{\mathrm{T}}$ with elements $[\mathfrak X_T(\bTheta)]_t=\left< \bX_t, \bTheta\right>$ and writing
\$
\br=\mathfrak X_T(\bTheta_k^{\true})+\be,
\$
where $\br$ and $\be$ are the $T$-dimensional vectors with components $r_{t,\pi_t}$ and $\varepsilon_t$, respectively. Generally, the parameter matrices $\bTheta_k^{\true}$ are low-rank or well approximated by a low-rank matrix. Hence, we estimate $\bTheta^{\true}$ by solving
\#\label{eq0504}
\Thetahat(\cD, \lambda) = \arg\min_{\bTheta \in \RR^{d_1 \times d_2}} \left\{\frac{1}{T}\| r-\mathfrak X_T(\bTheta)\|_{2}^2 + \lambda \| \bTheta \|_N\right\},
\#
where $\lambda \ge 0$ is a regularization parameter.
The sum of its singular values gives the nuclear norm of $\bTheta$:
\$
\| \bTheta \|_N=\sum_{j=1}^{d_1\wedge d_2}\sigma_j(\bTheta).
\$

\subsection{Batched Feedback Structure}\label{s:BatchedFeedbackStructure}
\begin{figure}[t]
    \centering
    \includegraphics[width=7in]{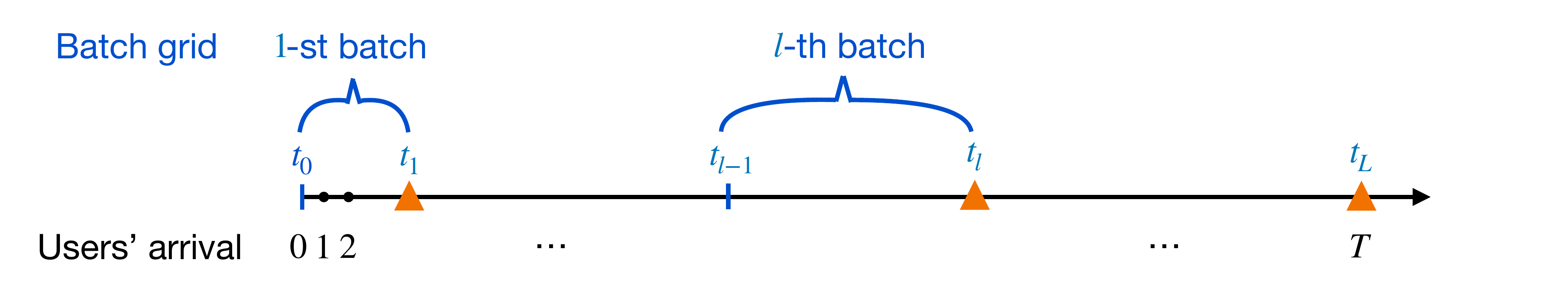}
    \caption{A batched online learning and decision-making process, where $T$ arriving users are divided into $L$ batches. We use blue color to denote the grids. For each batch, the rewards are only revealed at the end of the batch, which is represented by the orange triangle.}
    \label{f:batch_bandit}
\end{figure}
The concept of batching refers to dividing time sequences into groups. During each batch, the agent interacts with a group of users simultaneously. Once all the decisions in this group have been made, the feedback belonging to the same batch is revealed. To illustrate this interaction protocol, we present Figure \ref{f:batch_bandit}, where the total time steps $T$ are divided into $L$ batches by the agent before the interaction starts, and the rewards are revealed at each division point $t_1,t_2,\ldots,t_L$. The division points are called grid $\mathcal{T}=\{ t_1,t_2,\ldots, t_L\},$ with $0<t_1<t_2<\ldots<t_L=T$ (batch $l$ is composed of time steps $t_{l-1}+1$ to $t_l$). Note that if $L=T$, the size of each batch is $1$, and the bandit becomes fully sequential. If $L < T$, at the beginning of each batch, the agent needs to specify policies for the batch without observing contexts. The agent makes decisions only according to the feedback from previous batches. To achieve the final goal of maximizing the expected cumulative reward $\sum_{l=1}^L\sum_{t=t_{l-1}+1}^{t_l}\mathbb{E}(R_{t,\pi_t})$, the agent faces two tasks before the process. One task is to decide how to divide the feedback into batches, called the batch allocation policy. The other task is to specify how to take action in each batch.

\section{Algorithm}\label{algo}
In this section, we propose algorithms to overcome three online learning and decision-making challenges: the exploration-exploitation tradeoff, batched feedback, and high-dimensional estimation with non-i.i.d. data. These challenges are coupled together for the following reasons.

The first challenge, the exploration-exploitation tradeoff, arises from the fact that the agent can only observe the reward $r_{t,\pi_t}$ corresponding to the chosen arm $\pi_t$. Hence, it is likely to mistake suboptimal arms for the optimal choice with limited samples. Because the greedy decision-making can lead to suboptimal decisions, we need enough i.i.d. samples to guarantee the convergence of estimators for each arm $k\in[K]$. On the other hand, excessive forced samples (collected with a random exploration) can also sacrifice cumulative regret as they don't use collected data to make decisions, particularly when the estimation becomes more accurate. Additionally, at the beginning of each batch $l\in[L]$, the agent must allocate the exploration and exploitation for time steps $t_{l-1}+1,\ldots,t_l$ before observing the covariates $\{x_t\}_{t=t_{l-1}+1}^{t_l}$, making the tradeoff more difficult. The second challenge, batched feedbacks, occurs because in batch $l$, the rewards $\{r_{t,\pi_t}\}_{t=t_{l-1}+1}^{t_l}$ are not available to the agent until the decision is made for time step $t_l$. Therefore, for the covariates $\{x_t\}_{t=t_{l-1}+1}^{t_l}$, the agent have to make decisions according to the estimates $\betah(\cD_{k,l-1},\lambda_{l-1})$, where $\cD_{k,l-1}$ represents the data $(\bX,\br)$ collected in the first $l-1$ batches for arms $k\in[K]$. The third challenge is high-dimensional estimations based on non-i.i.d. data, which requires a carefully designed algorithm and analysis to achieve the desired convergence result.

We employ several techniques to address the challenges faced in this problem. The first challenge is overcome by using forced sampling and arm elimination. To balance the exploration and exploitation, we introduce the $\epsilon$-decay forced sampling method to obtain i.i.d. samples via random explorations \citep{wang2018online}, which not only ensures sufficient i.i.d. samples but also gradually reduces the proportion of forced sampling.  For arm elimination, we use a two-stage sampling procedure that leverages the gap between optimal and suboptimal arms (as defined in Assumption \ref{armoptimality}), thus increasing the chances of selecting the optimal arm. 
The second challenge is addressed by using a fixed batch allocation policy where the batch sizes satisfy a specific equation. This policy is formulated based on two intuitions: firstly, we want the batch size $t_{l}-t_{l-1}$ to adapt to changes in estimation accuracy; secondly, we aim to roughly balance the regret incurred in each batch and achieve a logarithmic dependence on the batch number $L$, as we will illustrate below. Finally, the third challenge is addressed by using LASSO or nuclear-norm regularized regression. We defer the analysis to $\S$\ref{theory}.


\subsection{Batched High-Dimensional Sparse Bandit Algorithm}\label{batchLASSOalgo}
For batched high-dimensional linear bandits, we propose the batched high-dimensional sparse bandit algorithm, which is called the batched sparse bandit for simplicity. For convenience, we denote whole-sample and forced-sample (obtained from random explorations) sets for arm $k$ up to the end of batch $l$ by $\cW_{k,l}$ and $\cR_{k,l}$, respectively. Let $\betah(\cW_{k,l}, \lambda_{2,l})$ and $\betah(\cR_{k,l}, \lambda_1)$ be estimators trained on $\cW_{k,l}$ and $\cR_{k,l}$.

\vspace{4pt}
\noindent
{\bf The execution of the batched sparse bandit:} Before the arrival process, given the number of users $T$ and batches $L$, the agent designs the grid and initializes the parameters. Then, in each batch $l$, the decisions are made only based on the estimators in the previous $l-1$ batches. For each time step $t$ in the current batch, a user comes with an observable covariate vector $x_t$. After recording the user's covariate $x_t$, the agent draws a binary random variable $\mathcal{D}_t$, where $\mathcal{D}_t=1$ with probability $\min\{1, t_0/t\}$, for a given $t_0 > 0$. There are two situations:

\begin{itemize}
\item If $\mathcal{D}_t=1$, the agent will randomly implement a decision $\pi_t$ with equal probability. Then, she will update the forced-sample dataset $\mathcal{R}_{\pi_t,l}$ by including $\{x_t, 0\}$ (as the reward is temporarily unobserved but will be updated when available).
\item If $\mathcal{D}_t=0$, the agent will execute the two-stage decision procedure in Algorithm \ref{algo101}. In the first stage, she will construct a decision candidate set containing the arms yielding rewards (with the forced-sample estimators $\{\betah(\cR_{k,l-1}, \lambda_1)\}_{k\in[K]}$) within $h/2$ of the maximum possible value. If the set $\Pi_t$ has only one component, this component will become the optimal decision; otherwise, the precise stage will proceed based on the whole-sample estimators $\{\betah(\cW_{k,l-1}, \lambda_{2,l-1})\}_{k\in \Pi_t}$. The agent will select the arm that generates the highest reward in the decision candidate set $\Pi_t$.
\end{itemize}
Then, the agent updates the whole-sample dataset $\mathcal{W}_{\pi_t,l}$ by appending the datapoint $\{x_t, 0\}$. When all the selections in batch $l$ are completed, she will observe the user's response to the decision $\pi_t$ and fill the reward $r_t$ in the corresponding datapoints for all $t$ in batch $l$. Meanwhile, the agent will update the regularization parameter $\lambda_{2,l}$, forced-sample estimators $\betah(\cR_{k,l}, \lambda_1)$ and whole-sample estimators $\betah(\cW_{k,l}, \lambda_{2,l})$ via LASSO for $k\in[K]$, based on the sample sets $\mathcal{R}_{k,l}$ and $\mathcal{W}_{k,l}$. The detailed procedure is presented in Algorithm \ref{algo1}.

\begin{algorithm}[t]
\caption{Batched High-dimensional Sparse Bandit}\label{algo1}
\begin{spacing}{1.35}
\begin{algorithmic}[1]
\STATE \textbf{Require:} Input parameters time $T$, number of batches $L$, $t_0$, $h$, $\lambda_1$, $\lambda_{2,0}$ and $a$ 
\STATE \hspace{0.15in} Choose grid $\mathcal{T}=\{ t_1,\ldots,t_L\}$ according to \eqref{eqgrid}

\STATE \hspace{0.15in} Initialize $\betah(\cR_{k,0}, \lambda_1) = \betah(\cW_{k,0}, \lambda_{2,0}) = \mathbf{0}$, and $\mathcal{R}_{k,0}=\mathcal{W}_{k,0}= \emptyset$ for all $k \in [K]$
\STATE \hspace{0.15in} \textbf{For} batch $l=1,\ldots,L$,~ \textbf{do}
\STATE \hspace{0.3in} $\mathcal{R}_{k,l}=\mathcal{R}_{k,l-1}$, $\mathcal{W}_{k,l}=\mathcal{W}_{k,l-1}$ for all $k \in [K]$
\STATE \hspace{0.3in} \textbf{For} time $t=t_{l-1}+1,\ldots,t_l$ $\textbf{do}$
\STATE \hspace{0.45in} Observe $x_t$
\STATE \hspace{0.45in} Draw a binary random variable $\mathcal{D}_t$, where $\mathcal{D}_t=1$ with probability $\min\{1, t_0/t\}$
\STATE \hspace{0.45in} $\mathbf{If}$ $\mathcal{D}_t=1$
\STATE \hspace{0.6in} Assign $\pi_t$ to a random decision $k \in [K]$ with probability $\mathbb{P}(\pi_t=k)=1/K$
\STATE \hspace{0.6in} Update $\mathcal{R}_{\pi_t,l}=\mathcal{R}_{\pi_t,l} \cup \{ X_t,0\}$
\STATE \hspace{0.45in} $\mathbf{Else}$
\STATE \hspace{0.6in} Execute two-stage sampling procedure in Algorithm \ref{algo101}
\STATE \hspace{0.45in} $\mathbf{End~If}$
\STATE \hspace{0.45in} Execute decision $\pi_t$, and update $\mathcal{W}_{\pi_t,l}=\mathcal{W}_{k,l} \cup \{ X_t,0\}$  
\STATE \hspace{0.3in} $\mathbf{End~For}$
\STATE \hspace{0.3in} Replace temporary rewards $0$ in $\mathcal{R}_{k,l}, \mathcal{W}_{k,l}$ by the actual rewards for $k\in[K]$.
\STATE \hspace{0.3in} Compute regularization parameter $\lambda_{2,l}=\lambda_{2,0}\sqrt{(\log t_l+\log d)/ t_l}$
\STATE \hspace{0.3in} Observe rewards and update LASSO estimators $\betah(\cR_{k,l}, \lambda_1)$ and $\betah(\cW_{k,l}, \lambda_{2,l})$
\STATE \hspace{0.15in} $\mathbf{End~For}$
\end{algorithmic}
\end{spacing}
\end{algorithm}

\begin{algorithm}[t]\caption{Two-Stage Sampling Procedure For Linear Model}\label{algo101}
\begin{spacing}{1.35}
\begin{algorithmic}[1]
\STATE \textbf{Screening~stage:}
\STATE \hspace{0.15in} Construct the decision candidate set
\STATE \hspace{0.3in} $\Pi_t=\bigl\{ k: X_t^{\mathrm{T}} \betah(\cR_{k,l-1}, \lambda_1) \ge \max_{j \in [K]} X_t^{\mathrm{T}} \betah(\cR_{j,l-1}, \lambda_1) -h/2\bigr\}$
\STATE \textbf{Selection~stage:}
\STATE \hspace{0.15in}$\mathbf{If}$ $\Pi_t$ is a singleton 
\STATE \hspace{0.3in} Set $\pi_t=\Pi_t$
\STATE \hspace{0.15in} $\mathbf{Else}$
\STATE \hspace{0.3in} Set $\pi_t=\argmax_{k \in \Pi_t} \{X_t^{\mathrm{T}} \betah(\cW_{k,l-1}, \lambda_{2,l-1})\}$
\STATE \hspace{0.15in} $\mathbf{End~If}$
\end{algorithmic}
\end{spacing}
\end{algorithm}

\vspace{4pt}
\noindent
{\bf Batch Allocation Policy:} Our grid design is motivated by two intuitions - (a) the batch size should be adjusted according to the estimation accuracy; (b) the batch size can adjust the regret of each batch so that the cumulative regret has a logarithmic dependence on the batch number. From (a), we will find from Proposition \ref{prop5} that the estimation error at the end of batch $l$ is bounded by $\sqrt{\log t_l/t_l}$, so the regret at time step $t$ in batch $l+1$ attains the $\log t_l/t_l$ bound (as we will show in appendix). To balance the regret for each batch, we incorporate the term $t_l/\log t_l$ into the batch size. From (b), only if the expected regret in batch $l$ is bounded by an order of $1/l$ can the cumulative regret $\sum_{l=1}^L \E(R_l)$ (controlled by an integral $\int_1^L \E(R_l) \md l$) be bounded by a polynomial of $\log L$. Hence, the regret will be finally bounded by a polynomial of $\log T$, which almost equals the regret bound under the sequential setting \citep{bastani2015online}. Suppose the number of batches $L=\Omega(\log T)$. The grid is as follows:



\begin{equation}\label{eqgrid}
\begin{aligned}
&t_l = \Big\lfloor \Big(\frac{a}{(l-1)\log t_{l-1}}+1\Big)t_{l-1}\Big\rfloor,l=2,\ldots,L-1,\\
&t_1= 2,~t_L=T,
\end{aligned}
\end{equation}
where $a=c\log^2 T/\log L$ is a predetermined parameter satisfying that $t_L=T$, and $c$ is an absolue constant. The setting of $t_1$ is to get feedback more frequently at the beginning.


\vspace{4pt}
\noindent
{\bf $\varepsilon$-decay Forced Sampling Method:} To ensure enough i.i.d. samples by the online learning and arm-selection process, we adopt the $\varepsilon$-decay forced sampling method from \citet{wang2018online}, where random selections are made with decreasing probability $\min\{1, t_0/t\}$. This decreasing probability is applied to avoid exploring too much to control the cumulative regret. Furthermore,
it will be presented in Lemma \ref{boundsrandom} that the $\varepsilon$-decay forced sampling method guarantees sufficient forced samples (on the logarithmic dependence on sample sizes) to ensure the performance of estimation.

\vspace{4pt}
\noindent
{\bf Two-Stage Sampling Procedure:} The procedure in Algorithm \ref{algo101} is introduced from \citet{bastani2015online}. 
The screening stage determines a preliminary set of decisions by eliminating the sub-optimal arms based on the forced-sample estimators. The $h/2$ radius of the preliminary set is to ensure that the sub-optimal arms are excluded and the optimal arm is included. Then, in the precise stage, we determine the arm that yields the best-estimated reward according to the whole-sample estimators.

\subsection{Batched Low-Rank Bandit Algorithm}\label{batchlralgo}
Now we propose the batched low-rank bandit algorithm for low-rank matrix models defined in $\S$\ref{s:LinearModels}. Accordingly, we define the whole-sample estimator as $\bThetah(\cW_{k,l}, \lambda_{2,l})$ and the forced-sample estimator as $\bThetah(\cR_{k,l}, \lambda_1)$, and choose the grid identically to \eqref{eqgrid}. Then we can obtain the batched low-bank bandit for trace regression by only changing the parameters into matrix forms in Algorithm \ref{algo1} and updating the regularization parameter by $\lambda_{2,l}=\lambda_{2,0}\sqrt{(\log t_l+\log(d_1+d_2))/ t_l}$. Thus, for conciseness, we do not repeat the pseudocode of the algorithm here. Additionally, since the grid choice remains the same, the upper and lower bounds of forced-sample sizes up to time step $t$ are still on the order of $\log t$ (by Lemma \ref{boundsrandom}). The performance of the policy will be discussed in the following section. 

\section{Theoretical Results}\label{theory}


\subsection{High-Dimensional Sparse Bandit}\label{s:High-DimensionalSparseBandit}
Before illustrating the theorem, we will show four technical assumptions necessary for the theoretical analysis of the expected cumulative regret. We adapt the standard assumptions in the high-dimensional linear bandit literature such as \citet{bastani2015online}, \citet{wang2018online} and \citet{wang2020online}.
\begin{assumption}[Parameter set]\label{parameterset} There exist positive constants $x_{\max}$, $b$ and $s_0$, such that for any $t$ and $k \in [K]$, we have $\| X_t\|_{\max} \le x_{\max}$, $\| \beta_k^{\true}\|_1 \le b$ and $\| \beta_k^{\true} \|_0 \le s_0$.
\end{assumption}

The first assumption is proposed by \citet{rusmevichientong2010linearly} to guarantee that the observed covariate vector $X_t$ is in the bounded subspace $\mathcal X = \{ x \in \RR^d: \| x \|_{\max} \le x_{\max} \}$, and the arm parameters $\beta_k^{\true}$ are bounded and sparse. According to the H\"older inequality 
, we have $\,|\, X_t^{\mathrm{T}}\beta_k^{\true} \,|\, \le bx_{\max}$. Thus, the expected regret at any time step is at most $R_{\max} = bx_{\max}$.

\begin{assumption}[Margin condition]\label{margincondition} There exists a constant $C_m \ge 0$ such that for $k,j \in [K]$ and $k \ne j$, we have $\mathbb{P}\left( 0 \le \,|\, X^{\mathrm{T}}(\beta_k^{\true} - \beta_j^{\true})\,|\, \le \gamma \right) \le C_m R_{\max}\gamma$.
\end{assumption}

This assumption is known as the margin condition for the $K$-class classification problem and is adapted to linear bandit models in \citet{bastani2015online}. In this assumption, the probability that the covariate vector approximates a decision boundary hyperplane $\{ x^{\mathrm{T}}\beta_k=x^{\mathrm{T}}\beta_j\}$ is bounded. Therefore, given a user's covariate vector, the decisions are appropriately separated based on their rewards. Otherwise, it is likely to pull the wrong arm since there may exist two arms whose rewards are too close.

\begin{assumption}[Arm optimality]\label{armoptimality} There exists two mutually exclusive sets $\mathcal{K}_s$ and $\mathcal{K}_o$ which make up the arm set $[K]$, with the suboptimal set $\mathcal{K}_s = \{k \in [K] \,|\, x^{\mathrm{T}}\beta_k < \max_{j \ne k} x^{\mathrm{T}}\beta_j - h ,~\text{a.e.}~ x \in \mathcal{X} \}$ and the optimal set $\mathcal{K}_o = \{ k \in [K] \,|\, \exists ~\cU_k \subseteq \mathcal X ~\text{s.t.}~\P(x \in \cU_k) > p_{\ast}~\text{and}~ x^{\mathrm{T}}\beta_k > \max_{j \ne k} x^{\mathrm{T}}\beta_j + h ~\text{for}~ x \in \cU_k \}$ , where parameter $h > 0$ and $p_{\ast}$ is a positive constant satisfying that $\min_{k \in \mathcal{K}_o} \mathbb P(x \in \cU_k) \ge p_{\ast}$.
\end{assumption}
\begin{figure}
    \centering
    \includegraphics[width=0.5\textwidth]{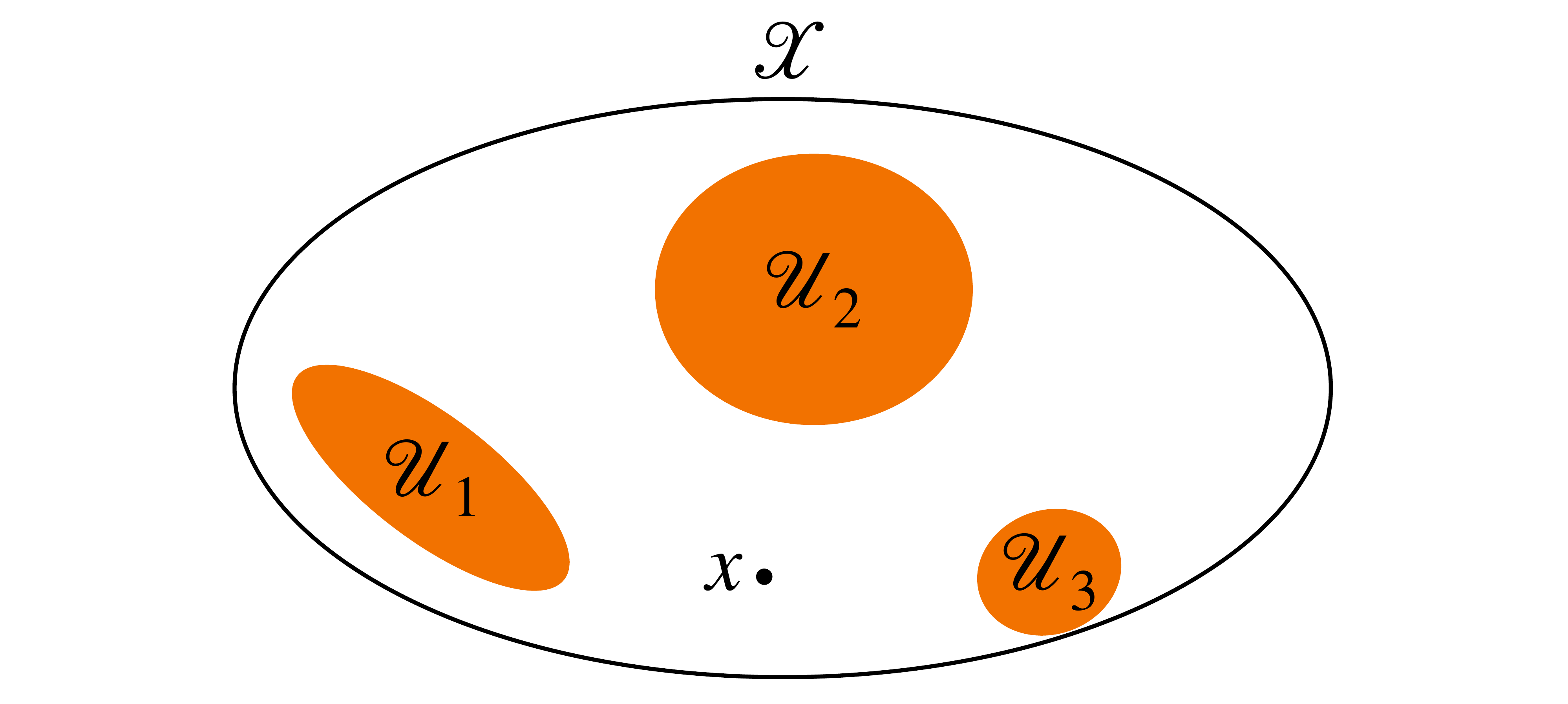}
    \caption{An illustration of the arm optimality condition, where the $5$ arms are divided into the optimal set $\cK_o=\{1,2,3\}$ and the suboptimal set $\cK_s=\{4,5\}$, the $\mathcal{U}_1,~\mathcal{U}_2,~\mathcal{U}_3$ are the optimal regions for the arms $1,~2,~3$, and the $\cX\setminus\{\cU_1\cup\cU_2\cup\cU_3\}$ is the region where no arm is strictly optimal.}
    \label{fig:Arm optimality}
\end{figure}

Following from \citet{goldenshluger2013linear} and \citet{bastani2015online}, this assumption requires that the suboptimal and optimal sets make up $[K]$. For any optimal arm $k \in \mathcal{K}_o$, there exists a region $\cU_k$ with positive measure such that for any covariate $x \in \cU_k$, the corresponding reward is strictly optimal. 
For the suboptimal arm $k\in\mathcal{K}_s$, for any covariate $x\in\cX$, the expected reward $x^\top\beta_k$ is strictly smaller than the best possible expected reward.
Additionally, Assumption \ref{margincondition} and Assumption \ref{armoptimality} are related in the way that when the margin condition holds and the parameter $C_m$ is small enough, the arm optimality holds (i.e., for any arm $k\in[K]$, it belongs to either $\cK_o$ or $\cK_s$). Particularly, we can demonstrate that this assumption holds whenever Assumption \ref{margincondition} holds and the probability of falling into the margin is small enough. For details, we refer readers to Lemma \ref{lm:pfoptimality}. 

\begin{assumption}[Compatibility condition]\label{compatibilitycondition} There exists a constant $\phi_0 > 0$ such that, for each 
$k \in [K]$, we have $\Sigma_k \in \mathcal{C}(\supp(\beta_k), \phi_0)$, where we define
\$
\mathcal{C}(\cI,\phi) = \bigl\{M \in \RR_{\succeq 0}^{d \times d}\,\big|\, \forall v \in \RR^d ~\text{s.t.} \| v_{\cI^c}\|_1 \le 3\| v_\cI\|_1, \text{we have}~ \| v_\cI\|_1^2 \le \,|\, \cI\,|\, \cdot (v^{\mathrm{T}}Mv)/\phi^2\bigr\},
\$
and for each $k\in[K]$, $\Sigma_k = \mathbb{E}(XX^{\mathrm{T}} \,|\, X \in \cV_k)$, where the region $\cV_k\subseteq\mathcal{X}$ and $\P(x\in\cV_k)>p_{\ast}$.
\end{assumption}

Note that for arms $k\in\cK_o$, the region $\cV_k$ is exactly $\cU_k$ defined in Assumption \ref{armoptimality}. The compatibility condition (or restricted eigenvalue condition) is necessary for the consistency of high-dimensional estimators \citep{candes2007dantzig,bickel2009simultaneous,negahban2012unified,buhlmann2011statistics,wang2018online,fan2020statistical}. As the covariance matrix $\Sigma_k$ is the conditional expectation of the Hessian matrix for the loss function in \eqref{LASSO}, this condition requires that the loss function is locally strongly convex in a cone subspace. In low-dimensional settings, this condition means that loss functions are strongly convex near the true parameter. Nevertheless, the strong convexity is inappropriate for high-dimension settings since the covariate dimensions significantly outweigh sample sizes. To illustrate the above four assumptions, a simple example is provided in \citet{bastani2015online}.

Following the batched sparse bandit algorithm, the expected cumulative regret upper bound can be established in the theorem below.
\begin{theorem}[Total Regret]\label{th:linear}
Suppose that Assumptions \ref{parameterset}-\ref{compatibilitycondition} hold. When $K \ge 2$, $d > 2$, $T \ge C_5$, $L\ge 2$, under Algorithm \ref{algo1} with the grid parameter $a \ge \log t_1$, $\lambda_1 = \phi_0^2 p_{\ast}h/(64s_0 x_{\max})$, $\lambda_{2,l}=(\phi_0^2/s_0)\cdot \sqrt{(\log t_l + \log d)/(2 C_1p_{\ast} t_l)}$ and $\lambda_{2,0}=(\phi_0^2/s_0)\cdot \sqrt{1/(2p_{\ast} C_1 )}$, the cumulative regret up to time $T$ is upper bounded:
$$
\RG(T,L)\le\mathcal{O} \left( K s_0^2 \log d (\log^2 T+\log d) \right),
$$
where the constants $C_0$, $C_1$, $C_2$, $C_3$, $C_4$ and $C_5$ are independent of $T$ (the specific values are provided in Appendix \ref{s:Proof_of_Theorem_1}), and $t_0 = 2C_0K$.
\end{theorem}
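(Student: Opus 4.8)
The plan is to decompose the cumulative regret $\RG(T,L)$ into three parts and bound each separately: (i) the regret incurred on forced-sampling (exploration) rounds where $\mathcal{D}_t=1$; (ii) the exploitation regret on a ``good event'' where all the LASSO estimators are sufficiently accurate; and (iii) the exploitation regret on the complementary ``bad event'' where some estimator fails. For part (i), I would invoke Lemma \ref{boundsrandom} to bound the number of forced-sampling rounds up to time $T$ by $\mathcal{O}(t_0\log T)$; since each such round costs at most $R_{\max}=bx_{\max}$ and $t_0=2C_0K$, this contributes only $\mathcal{O}(K\log T)$.

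For part (iii), the goal is to show the bad event is polynomially unlikely. Here I would use Proposition \ref{prop5} together with two structural facts: the forced samples are i.i.d. draws from $\mathcal{P}_X$, so the screening estimator $\betah(\cR_{k,l},\lambda_1)$ concentrates; and, following the argument of \citet{bastani2015online}, a constant fraction of the whole sample $\cW_{k,l}$ is i.i.d., so the selection estimator $\betah(\cW_{k,l},\lambda_{2,l})$ also concentrates once the compatibility condition (Assumption \ref{compatibilitycondition}) is in force. Under the grid \eqref{eqgrid} and Lemma \ref{boundsrandom}, the number of forced samples accumulated by batch $l$ grows like $\log t_l$, which is exactly enough to push the $\ell_1$-error below the target $\sqrt{\log t_l/t_l}$ except on an event of probability $\mathcal{O}(1/t_l)$ up to $\log d$ factors; a union bound over the $K$ arms and summation over all steps then leaves only a lower-order $\mathcal{O}(K\log d)$ contribution to the regret.

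For part (ii), on the good event the two-stage procedure of Algorithm \ref{algo101} behaves as intended. Accuracy of the forced-sample estimators combined with the gap $h$ in the arm-optimality condition (Assumption \ref{armoptimality}) guarantees that the screening stage retains the optimal arm and discards every suboptimal arm, while accuracy of the whole-sample estimators governs the selection stage. The per-step regret at time $t$ in batch $l+1$ is then controlled by the margin condition (Assumption \ref{margincondition}): since $|X_t^{\mathrm{T}}(\betah-\beta_k^{\true})|\le x_{\max}\|\betah-\beta_k^{\true}\|_1=\mathcal{O}(s_0\sqrt{\log t_l/t_l})$, a suboptimal choice forces $X_t$ to lie within this margin $\gamma$ of a decision boundary, an event of probability $\mathcal{O}(C_mR_{\max}\gamma)$, while the loss on that event is itself $\mathcal{O}(\gamma)$; multiplying gives expected per-step regret $\mathcal{O}(s_0^2\log t_l/t_l)$.

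Finally, I would sum the exploitation regret across batches using the grid design, which is precisely engineered for this step. Batch $l+1$ contains $t_{l+1}-t_l\approx \frac{a}{l\log t_l}\,t_l$ steps, so its total regret is $\approx (t_{l+1}-t_l)\cdot s_0^2\log t_l/t_l=\mathcal{O}(a\,s_0^2/l)$; this is exactly why the grid inserts the factor $t_l/\log t_l$ into the batch size. Summing over $l$ produces $\mathcal{O}(a\,s_0^2\log L)$, and substituting $a=c\log^2 T/\log L$ collapses this to $\mathcal{O}(s_0^2\log^2 T)$; incorporating the $\mathcal{O}(\log d)$ factor from the tail bounds and combining with parts (i) and (iii) yields the claimed $\mathcal{O}(Ks_0^2\log d(\log^2 T+\log d))$. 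I expect the main obstacle to lie in parts (ii)--(iii): rigorously establishing convergence of the whole-sample LASSO estimator on adaptively collected, batched (hence non-i.i.d.) data --- in particular controlling the error introduced by making all batch-$(l+1)$ decisions with the stale batch-$l$ estimator --- and ensuring the failure probabilities and margin-based per-step bounds are sharp enough that summation against the grid yields exactly $\log^2 T$ and not a larger power.
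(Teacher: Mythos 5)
Your proposal follows essentially the same route as the paper's proof: the same three-way decomposition (forced-sampling/initialization rounds, rounds where an estimator fails, and good-event rounds bounded via the margin condition at scale $\delta = \mathcal{O}(s_0\sqrt{(\log t_l+\log d)/t_l})$), the same reliance on the constant-fraction-i.i.d. argument for the whole-sample LASSO estimator, and the same batch-wise summation $\sum_l a/l = \mathcal{O}(a\log L)$ with $a = c\log^2 T/\log L$. You also correctly locate the main technical burden in establishing Proposition \ref{prop4}/\ref{prop5} for the adaptively collected, stale-estimator data, which is exactly where the paper's half-control martingale argument does its work.
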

Theorem \ref{th:linear} first demonstrates that the expected cumulative regret of the batched sparse bandit over $T$ time steps is upper-bounded by $\mathcal{O} (s_0^2\log^2 T)$ within a factor of $\log d$, thus matching the regret bound achieved by the sequential version \citep{bastani2015online}, within a factor of $\log d$. Therefore, the batched sparse bandit successfully balances the regrets in each batch with the grid choice.  Nevertheless, the batched sparse bandit and LASSO bandit both do not meet the lower bound $\cO(s_0\log T)$, which is a natural extension from the $\cO(d\log T)$ lower bound in the low-dimensional setting \citep{goldenshluger2013linear}.

Now we briefly introduce the proof techniques. To begin with, since our algorithm ensures enough i.i.d. samples in both forced-sample and whole-sample sets, we will obtain the convergence result for the forced-sample and whole-sample estimators by the LASSO tail inequality for non-i.i.d. data and the batch allocation policy. Then, we divide the total sample size $T$ into three groups and bound the regret in each group. 
Eventually, with our batch allocation policy, the cumulative expected regret is upper bounded by $\mathcal{O} (s_0^2\log^2 T)$. For more details, see $\S$~\ref{sketch} for a proof sketch and Appendix \ref{ss:Bounding the Cumulative Regret} for the detailed proof.

\subsection{Batched Low-Rank Bandit}\label{s:BatchedLow-RankBandit}
Before introducing the assumptions for the low-rank bandit, we adopt the concept of decomposability and subspaces from \citet{wainwright2019high}, which is crucial for the compatibility condition in the matrix form. Consider any matrix $\bTheta\in\RR^{d_1 \times d_2}$ and denote its row and column spaces as $\text{rowspan}(\bTheta)\subseteq\RR^{d_2}$ and $\text{colspan}(\bTheta)\subseteq\RR^{d_1}$. Given a positive integer $r\le d'=d_1\wedge d_2$, we let $\UU$ and $\VV$ denote the $r$-dimensional subspaces of vectors. Then the two subspaces of matrices can be defined as
\#
\cM(\UU,\VV)&=\left\{ \bTheta\in\RR^{d_1 \times d_2}\mid \text{rowspan}(\bTheta)\subseteq\VV,~\text{colspan}(\bTheta)\subseteq\UU\right\},\label{eq:subspacemuv}\\
\barcM^{\bot}(\UU,\VV)&=\left\{ \bTheta\in\RR^{d_1 \times d_2}\mid \text{rowspan}(\bTheta)\subseteq\VV^{\bot},~\text{colspan}(\bTheta)\subseteq\UU^{\bot}\right\},\label{eq:subspacebmuv}
\#
where $\UU^{\bot}$ and $\VV^{\bot}$ denote the subspaces orthogonal to $\UU$ and $\VV$. By taking the orthogonal complement of \eqref{eq:subspacebmuv}, we can define the subspace $\barcM(\UU,\VV)$, which is a strict superset of $\cM^{\bot}(\UU,\VV)$. Additionally, it is easy to verify that the nuclear norm is decomposable with respect to the given pair of subspaces $(\cM,\barcM^{\bot})$ defined in \eqref{eq:subspacemuv} and \eqref{eq:subspacebmuv}:
\$
\|\bA+\bB\|_N=\|\bA\|_N+\|\bB\|_N,
\$
for any pair of matrices $\bA\in\cM$ and $\bB\in\barcM^{\bot}$.

For arm $k\in[K]$, suppose that the target matrix $\bTheta_k^{\true}$ has a low-rank structure $\max_{k\in[K]}\rank(\bTheta_k^{\true})\le r$, and write it in the SVD factored form $\bTheta_k^{\true}=\bU_k\bD_k\bV_k^{\mathrm{T}}$, where the first $r$ entries of the diagonal matrix $\bD_k\in\RR^{d'\times d'}$ are the $r$ nonzero singular values of $\bTheta$, and the first $r$ columns of the orthonormal matrices $\bU_k\in\RR^{d_1\times d'}$ and $\bV_k\in\RR^{d_2\times d'}$ are the left and right singular vectors of $\bTheta$. Let $\UU_k$ and $\VV_k$ be the $r$-dimensional subspaces spanned by the column vectors of $\bU_k$ and $\bV_k$ respectively, thus yielding the pair of subspaces $(\cM_k,\barcM_k^{\bot})$. Now we state the assumptions for the low-rank matrix bandit, which are adapted from the high-dimensional linear bandit and \citet{wainwright2019high}.
\begin{assumption}[Parameter set]\label{as0501} The dimensions $d_1, d_2 \ge 2$ and the rank constraint parameter $\rho>1$. There exist positive constants $x_{\max}$, $b$, $\kappa_0$, such that for any $t$ and $k \in [K]$, we have $\| \bX_t\|_{\max} \le x_{\max}$, $\| \bX_t \|_{F} \le \kappa_0$
, $\| \bTheta^{\true} \|_N \le b$. 
\end{assumption}

This assumption ensures that the covariate $\bX_t$ and the parameter $\bTheta^{\true}$ are in bounded subspaces. The bounds on $\bX_t$ and $\bTheta^{\true}$ in \cite{li2022simple} imply this assumption. According to the \chenlu{matrix H\"older inequality} \citep{baumgartner2011inequality}, for any time $t$, $\,|\, \la \bX_t^{\mathrm{T}}, \bTheta_k^{true} \ra \,|\, \le \| \bX_t \|_{\op} \cdot \| \bTheta^{\true} \|_N \le \| \bX_t \|_{F} \cdot \| \bTheta^{\true} \|_N \le \kappa_0 b$ holds. However, the regret at any time step is upper bounded by $R_{\max} = \kappa_0b$.

\begin{assumption}[Margin condition]\label{as0502} There exists a $C_m \ge 0$ such that for $k \ne j$ and $k,j \in [K]$, we have $\mathbb{P}\left( 0 < \,|\, \la X^{\mathrm{T}}, \bTheta_k^{true} - \bTheta_j^{true} \ra \,|\, \le \gamma \right) \le C_mR_{\max}\gamma$.
\end{assumption}

\begin{assumption}[Arm optimality]\label{as0503} There exists two mutually exclusive sets $\mathcal{K}_s$ and $\mathcal{K}_o$ that include all $K$ arms for some positive constant $h$, with suboptimal set $\mathcal{K}_s = \{k \in [K] \,|\, \la \bx^{\mathrm{T}}, \bTheta_k \ra < \max_{j \ne k} \la \bx^{\mathrm{T}}, \bTheta_j \ra - h,~\text{a.e.}~ \bx \in \mathcal{X} \}$ and optimal set $\mathcal{K}_o = \{ k \in [K] \,|\, \exists ~\cU_k \subseteq \mathcal X, ~\text{s.t.}~P(\bx \in \cU_k) > 0 ~\text{and}~ \la \bx^{\mathrm{T}}, \bTheta_k \ra > \max_{j \ne k} \la \bx^{\mathrm{T}}, \bTheta_j \ra + h ~\text{for}~ \bx \in \cU_k \}$, where parameter $h >0$ and $p_{\ast}$ is a positive constant satisfying that $\min_{k \in \mathcal{K}_o} \mathbb P(\bx \in \cU_k) \ge p_{\ast}$. In other words, for optimal arms $K \in \mathcal{K}_o$, we define
\#
\cU_k = \bigl\{ \bx \in \mathcal{X} \,|\, \la \bx^{\mathrm{T}}, \bTheta_k \ra > \max_{j \ne k} \la \bx^{\mathrm{T}}, \bTheta_j \ra + h \bigr\}.
\#
\end{assumption}

Assumptions \ref{as0502} and \ref{as0503} are directly extended from the corresponding assumptions for the high-dimensional linear setting.

\begin{assumption}[Compatibility condition]\label{as0504}
Consider the pair of subspaces $(\cM_k,\barcM_k^{\bot})$ defined above. There exists a constant $\phi_0 > 0$ such that, for each
$k \in [K]$, we have $\Sigma_k \in \mathcal{C}(\cM_k,\cM_k^{\bot}, \phi_0)$, where we define
\#\label{eq:cC}
\mathcal{C}(\cM_k,\barcM_k^{\bot},\phi) = \{M \in \RR_{\succeq 0}^{{d_1d_2 \times d_1d_2}}\,|\,& \forall~ \bDelta \in \RR^{d_1 \times d_2} ~\text{s.t.}~ \| \bDelta_{\barcM_k^{\bot}}\|_N \le 3\| \bDelta_{\barcM_k}\|_N,\nonumber\\
&\text{we have}~ \| \bDelta_{\barcM_k}\|_F^2 \le \vec(\bDelta)^{\mathrm{T}} M \vec(\bDelta)/(4\phi)^2\}
\#
and $\Sigma_k = \mathbb{E}(\vec(\bX)\vec(\bX)^{\mathrm{T}} \,|\, \bX \in \cV_k)$, where the region $\cV_k\subseteq\mathcal{X}$ and $\P(\bx\in\cV_k)>p_{\ast}$ for each $k\in[K]$.

\end{assumption}

For arms $k\in\cK_o$, the region $\cV_k$ is exactly $\cU_k$ defined in Assumption \ref{as0503}. This assumption, adapted from the compatibility condition for linear estimation, is closely related to the restricted strong convexity for matrix estimation (Equation (10.17) in \citet{wainwright2019high}). This condition also ensures the diversity of training samples so that the low-rank estimation will converge. 

We also have the regret upper bound for the batched low-rank bandit in the following theorem. Recall that the agent's decision and the optimal policy at time $t$ are $\pi_t$ and $\pi_t^{\ast}$, respectively. The expected cumulative regret at the time $T$
\$
\RG(T,L)= \sum_{l=1}^L\sum_{t=t_{l-1}+1}^{t_l}\mathbb{E}\big( \big\la X_t^{\mathrm{T}}, \bTheta_{\pi_t^{\ast}}^{true} - \bTheta_{\pi_t}^{true} \big\ra \big)
\$
is bounded by the following theorem.

\begin{theorem}[Total regret]\label{th:matirx}
Suppose that Assumptions \ref{as0501}-\ref{as0504} hold. Under the batched low-rank bandit, we take $\lambda_1 = \phi_0^2 p_{\ast}h/(48r x_{\max})$, $\lambda_{2,l}=2\phi_0^2/(3r)\cdot \sqrt{2(\log t_l + \log(d_1+d_2))/(p_{\ast} C_1 t_l)}$ and $\lambda_{2,0}=2 \phi_0^2/(3r) \sqrt{2/(p_{\ast} C_1)}$. When $K \ge 2$, $d_1,d_2 > 1$, $T \ge C_5$ and $L\ge 2$, the cumulative regret up to time $T$ is upper bounded:  
\#
\RG(T,L)\le\mathcal{O}\big( K r^2 \log(d_1d_2) (\log^2 T+\log(d_1d_2))  \big),
\notag
\#
where the constants $C_0$, $C_1$, $C_2$, $C_3$, $C_4$ and $C_5$ are constants independent of $T$ (the specific values are provided in Appendix \ref{s:Proof_of_Theorem_2}), and we take $t_0 = 2C_0K$.
\end{theorem}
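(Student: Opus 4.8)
The plan is to mirror the three-part structure used for the sparse case in Theorem~\ref{th:linear}, replacing the LASSO machinery with its nuclear-norm counterpart. First I would establish the matrix analogue of Proposition~\ref{prop5}: a tail bound showing that, at the end of batch $l$, both the forced-sample estimator $\bThetah(\cR_{k,l},\lambda_1)$ and the whole-sample estimator $\bThetah(\cW_{k,l},\lambda_{2,l})$ satisfy $\|\bThetah - \bTheta_k^{\true}\|_F = \tilde{\cO}(\sqrt{(\log t_l + \log(d_1 d_2))/t_l})$ with high probability, the $\tilde{\cO}$ absorbing a factor of $r$. The two ingredients are (i) Lemma~\ref{boundsrandom}, which guarantees that the number of i.i.d. forced samples for each arm up to time $t$ is on the order of $\log t$, so that a constant fraction of the training set is i.i.d.; and (ii) the compatibility condition in matrix form (Assumption~\ref{as0504}), which converts a nuclear-norm oracle inequality into a Frobenius-norm error bound via the restricted strong convexity encoded in $\mathcal{C}(\cM_k,\barcM_k^{\bot},\phi_0)$.

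Second, I would translate the estimation error into a per-step regret bound. Given the arm-optimality and margin conditions (Assumptions~\ref{as0503} and~\ref{as0502}), the probability of selecting a suboptimal arm at a greedy step $t$ in batch $l+1$ is controlled by the event that some whole-sample estimator deviates from the truth by more than $\Omega(h)$; combined with the margin condition this yields an instantaneous expected regret of order $\log t_l / t_l$, exactly as in the sparse case. The forced-exploration steps each contribute at most $R_{\max}$, and by Lemma~\ref{boundsrandom} there are only $\cO(\log T)$ of them, so their total contribution is $\cO(\log T)$.

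Third, I would sum the per-step regret using the grid~\eqref{eqgrid}, partitioning $[T]$ into the forced-sample steps, an initial block, and the main block. The regret in batch $l$ is roughly $(t_l - t_{l-1}) \cdot (\log t_{l-1}/t_{l-1})$, and the recursion $t_l \approx (1 + a/((l-1)\log t_{l-1})) t_{l-1}$ is engineered precisely so that this per-batch regret is $\cO(a/((l-1)\log t_{l-1}) \cdot \log t_{l-1}) = \cO(a/(l-1))$. Summing over $l$ gives a harmonic-type sum $\cO(a \log L)$, and with $a = c \log^2 T / \log L$ this collapses to $\cO(\log^2 T)$; reinstating the suppressed factors of $K$, $r^2$, and $\log(d_1 d_2)$ produces the stated bound.

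The main obstacle I anticipate is step one, specifically the matrix tail inequality for the nuclear-norm estimator under non-i.i.d. data, which the paper itself flags as the novel component. Unlike the LASSO case, where coordinatewise subgaussian concentration suffices, here one must control the operator norm $\|\frac{1}{n}\sum_t \varepsilon_t \bX_t\|_{\op}$ of the noise-weighted sum of covariate matrices, and verify that the restricted strong convexity survives when only a constant fraction of the samples are i.i.d. I would use a matrix Bernstein inequality to bound the operator norm of the empirical noise term, and a separate matrix concentration argument to transfer the population compatibility constant $\phi_0$ to the empirical covariance $\Sighat$ on the i.i.d. subsample, paying only logarithmic factors in $d_1 d_2$. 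Securing \emph{logarithmic} rather than polynomial dependence on the dimensions --- the improvement the paper highlights over \citet{li2022simple} --- is the delicate point.
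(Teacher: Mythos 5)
Your overall architecture matches the paper's: a non-i.i.d.\ tail inequality for the nuclear-norm estimator (proved via matrix Bernstein for the compatibility transfer and a matrix sub-Gaussian series bound for $\|\tfrac{1}{n}\sum_t \varepsilon_t \bX_t\|_{\op}$), a per-step regret of order $\log t_l/t_l$ from the margin and arm-optimality conditions, and the three-group decomposition summed over the grid to get $\cO(a\log L)=\cO(\log^2 T)$. You also correctly identify the delicate point that distinguishes this analysis from \citet{li2022simple}.

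However, there is a genuine gap in your first step: you claim that Lemma~\ref{boundsrandom}, which gives $\Theta(\log t)$ i.i.d.\ forced samples per arm, suffices ``so that a constant fraction of the training set is i.i.d.'' That is true only for the \emph{forced-sample} estimator, where $|\cR_{k,l}|$ is itself of order $\log t_l$. For the \emph{whole-sample} estimator the training set $\cW_{k,l}$ has size $\Theta(t_l)$, so the forced samples constitute a vanishing fraction $\cO(\log t_l/t_l)$, and the hypothesis $|\cA'|/|\cA|\ge p/2$ of the tail inequality (Lemma~\ref{lem0501}) fails. The paper closes this hole with a separate argument you have not supplied: on the event $\cA_{m-1}$ that all forced-sample estimators are $h/(4x_{\max})$-accurate, every greedy step with $\bX_\tau\in\cU_k$ deterministically selects arm $k$ in the screening stage, and by a rejection-sampling argument these covariates are i.i.d.\ from $\cP_{X\mid X\in\cU_k}$. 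The count $M_k(t_l)$ of such steps is a martingale, and Azuma's inequality (Propositions~\ref{prop4general} and~\ref{prop4}, restated as \eqref{propj02} in the low-rank appendix) shows $M_k(t_l)\ge p_\ast t_l/8$ with high probability --- this is where the batch structure and the growth condition $t_l\le \alpha t_{l-1}$ on the grid actually enter the estimation analysis, not just the regret summation. Without this $\Theta(t_l)$-sized i.i.d.\ subset, Proposition~\ref{propj03} and hence your step two cannot be established. A secondary, minor point: the error metric you should carry through is the nuclear norm $\|\bThetah-\bTheta_k^{\true}\|_N$ rather than the Frobenius norm, since the regret decomposition pairs it with $\|\bX_t\|_F\le\kappa_0$ via the matrix H\"older inequality; the compatibility condition yields Frobenius control only on the projection onto $\barcM_k$, which is converted to a nuclear-norm bound using $\rank(\bDelta_{\barcM_k})\le 2r$.
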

Theorem \ref{th:matirx} indicates that the expected cumulative regret of the batched low-rank bandit over $T$ time steps is upper-bounded by $\mathcal{O} \left( K r^2 \log(d_1d_2) (\log^2 T+\log(d_1d_2) \right)$. To the best of our knowledge, this is the first $\mathcal{O} \left(\log(d_1d_2) (\log^2 T+\log(d_1d_2))\right)$ regret bound for low-rank linear bandits, outperforming previous algorithms for low-rank bandit problems in both $T$ and $d_1,d_2$. The analysis of this theorem directly follows the proof of Theorem \ref{th:linear}. We present the details of proving the bound in Appendix \ref{s:Proof_of_Theorem_2}.
\section{Proof Sketch}\label{sketch}
Although we adopt the main structure of the proof in \citet{bastani2015online}, we highlight our technical novelty from three perspectives: first, since the batched setting significantly decreases the frequency of updating parameters, we design the grid iteration \eqref{eqgrid} to counteract the influence of batched feedbacks and bound the regret in batch $l$ on the order of $a/l$; second, we demonstrate that for general batch allocation policies (not restricted to our grid selection), there are sufficient i.i.d. samples in the whole-sample set. Notably, we demonstrate in Proposition \ref{prop4general} that this conclusion holds for a general grid choice by utilizing the tricks of keeping the second half of batches and bounding the summations with integrations; third, for the low-rank bandit, we prove a convergence result for non-i.i.d. data in Lemma \ref{lem0501}. The difficulty lies in coping with the matrices and operator norms. Hence, we prove a convergence result for Matrix Sub-Gaussian Series and adopt matrix Bernstein inequality from \citet{tropp2011freedman}.

The contents of this section are organized as follows. To begin with, we present abridged technical proofs for the batched sparse bandit in four parts: in $\S$\ref{LASSOtailineq}, we adopt the LASSO tail inequality for non-i.i.d. data from \citet{bastani2015online} to provide a basic convergence guarantee; in $\S$\ref{sub402}, we obtain the convergence result for forced-sample estimators; in $\S$\ref{sub403}, we construct an i.i.d. subset and demonstrate the convergence for whole-sample estimators; in $\S$\ref{sub404}, we ultimately get the regret upper bound.


\subsection{A LASSO Tail Inequality For Non-i.i.d. Data}\label{LASSOtailineq}
Since general convergence guarantees do not apply to this setting, we adopt the tail inequality for non-i.i.d. data from \citet{bastani2015online}, which facilitates the proof of the following convergence properties. First of all, a general conclusion for the LASSO estimator for the non-i.i.d. sample will be proved. Specifically, we consider the design matrix $\mathbf{Z}_{n \times d}$ whose rows are bounded random vectors (i.e., $\| Z_t\|_{\max} \le x_{\max}$ for all $t \in [T]$), the response vector $W_{n \times 1}$ and the noise vector $\varepsilon_{n \times 1}$, and describe their relationships with the linear model
$$
W = \mathbf{Z}\beta+\varepsilon.
$$
It is assumed that $\| \beta\|_0=s_0$. For any subset $\mathcal{S}' \subset [T]$, let $\mathbf{X}(\mathcal{S}' )$ be the $\,|\, \mathcal{S}' \,|\, \times d$ submatrix of $\mathbf{X}$ whose rows are $X_t$ for each $t \in \mathcal{S}'$. Recalling the notation in $\S$\ref{notation}, we define the terms $\mathbf{Z}(\mathcal{A})$, $W(\mathcal{A})$, and $\hat{\Sigma}(\mathcal{A})$ similarly for any subset $\mathcal{A} \subset [T]$. Trained on samples in $\mathcal{A}$ (which are not required to be i.i.d.), a LASSO estimator is obtained for any $\lambda \ge 0$:
$$
\hat{\beta}(\mathcal{A},\lambda) = \argmin_{\beta '} \Big\{ \frac{\| W(\mathcal{A})-\mathbf{Z}(\mathcal{A}) \beta '\|_2^2}{\,|\, \mathcal{A}\,|\,} + \lambda \| \beta ' \|_1 \Big\}.
$$
Now suppose that there exists some unknown subset $\mathcal{A}' \subset \mathcal{A}$ comprised of i.i.d. samples which are subject to a distribution $\mathcal{P}_Z$. We define $\Sigma = \mathbb{E}_{Z \sim \mathcal{P}_Z}(ZZ^{\mathrm{T}})$ and assume that $\Sigma \in \mathcal{C}(\text{supp}(\beta), \phi_1)$ for a constant $\phi_1 \in \RR^+$. \citet{bastani2015online} demonstrates in the following lemma that if the size of an i.i.d. sample $\mathcal{A}'$ is large enough to make up a constant fraction of the samples in $\mathcal{A}$, a convergence guarantee will be proved for the LASSO estimator $\hat{\beta}(\mathcal{A},\lambda)$ trained on non-i.i.d. samples in $\mathcal{A}$. The result is shown as follows.

\begin{lemma}[LASSO Tail Inequality For Non-i.i.d. Data]\label{LASSOtailniid}
For any $\chi>0$, if $d>1$, $\,|\, \mathcal{A}'\,|\,/\,|\, \mathcal{A}\,|\, \ge p/2$, $\,|\, \mathcal{A}\,|\, \ge 6\log d/(pC_2^2(\phi_1))$, and $\lambda =\lambda(\chi, \phi_1\sqrt{p}/2)=\chi \phi_1^2p/(16s_0)$, then the following tail inequality holds:
$$
\mathbb{P}\bigl(\| \hat{\beta}(\mathcal{A},\lambda)-\beta\|_1>\chi\bigr)\le2\exp\bigl(-C_1\bigl(\phi_1\sqrt{p}/2\bigr)|\mathcal{A}|\chi^2+\log d\bigr)+\exp \bigl(-pC_2^2(\phi_1)|\mathcal{A}|/2\bigr),
$$
where $C_1(\phi_1) = \phi_1^4/(512 s_0^2 \sigma^2 x_{\max}^2)$ and $C_2(\phi_1) = \min ( 1/2,\phi_1^2/(256 s_0 x_{\max}^2))$.
\end{lemma}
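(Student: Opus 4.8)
The plan is to follow the standard route for a LASSO $\ell_1$-error bound under the compatibility (restricted eigenvalue) condition, modified so that the design matrix need not be i.i.d. I would decompose the failure event $\{\|\hat\beta(\cA,\lambda)-\beta\|_1>\chi\}$ into two sources: (i) failure of an empirical compatibility condition for the full sample covariance $\hat\Sigma(\cA)$, and (ii) failure of the regularization parameter $\lambda$ to dominate the empirical noise gradient $\mathbf{Z}(\cA)^{\mathrm{T}}\varepsilon/|\cA|$. The core claim is that off both failures, a deterministic argument forces $\|\hat\beta(\cA,\lambda)-\beta\|_1\le\chi$; the two probabilities then give the two exponential terms in the bound. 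Since the result is attributed to \citet{bastani2015online}, the work is in tracking how the compatibility constant degrades and calibrating $\lambda$ against the concentration constants $C_1,C_2$.

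First I would transfer the compatibility condition to the non-i.i.d. sample. The key observation is purely algebraic: dropping the non-i.i.d. rows only removes positive-semidefinite rank-one terms, so $\hat\Sigma(\cA)=\frac1{|\cA|}\sum_{t\in\cA}Z_tZ_t^{\mathrm{T}}\succeq\frac{|\cA'|}{|\cA|}\hat\Sigma(\cA')\succeq\frac{p}{2}\hat\Sigma(\cA')$, using $|\cA'|/|\cA|\ge p/2$. It therefore suffices to control $\hat\Sigma(\cA')$, an average of i.i.d. terms from $\mathcal{P}_Z$. Because $\|Z_t\|_{\max}\le x_{\max}$, every entry of $\hat\Sigma(\cA')-\Sigma$ is an average of bounded i.i.d. variables, so a Hoeffding bound together with a union bound over the entries makes $\|\hat\Sigma(\cA')-\Sigma\|_{\max}$ small with probability at least $1-\exp(-pC_2^2(\phi_1)|\cA|/2)$, where $|\cA|\ge 6\log d/(pC_2^2(\phi_1))$ is exactly what absorbs the union-bound $\log d$. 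A standard compatibility-transfer lemma then upgrades $\Sigma\in\cC(\supp(\beta),\phi_1)$ to $\hat\Sigma(\cA')\in\cC(\supp(\beta),\phi_1/\sqrt2)$, and combined with the factor $p/2$ this yields $\hat\Sigma(\cA)\in\cC(\supp(\beta),\phi_1\sqrt p/2)$ on this event, explaining the appearance of the effective constant $\phi_1\sqrt p/2$ in the lemma.

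Second, I would bound the empirical noise gradient. Each coordinate $\frac1{|\cA|}(\mathbf{Z}(\cA)^{\mathrm{T}}\varepsilon)_j=\frac1{|\cA|}\sum_{t\in\cA}Z_{t,j}\varepsilon_t$ is a sum of $\sigma$-subgaussian noises with weights bounded by $x_{\max}$, hence subgaussian with variance proxy at most $x_{\max}^2\sigma^2/|\cA|$; a subgaussian tail plus a union bound over the $d$ coordinates gives a bound of the form $2\exp(-C_1(\phi_1\sqrt p/2)|\cA|\chi^2+\log d)$ once $\lambda=\chi\phi_1^2 p/(16s_0)$ is substituted, the explicit formula for $C_1$ being precisely what makes this exponent come out. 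On the intersection of the two good events I would then run the deterministic LASSO argument: optimality of $\hat\beta(\cA,\lambda)$ gives the basic inequality, domination of the noise gradient by $\lambda$ forces the error $v=\hat\beta-\beta$ into the cone $\|v_{\mathcal{I}^c}\|_1\le 3\|v_{\mathcal{I}}\|_1$, and the empirical compatibility condition with constant $\phi=\phi_1\sqrt p/2$ yields $\|v\|_1\le C\,\lambda s_0/\phi^2$ for an explicit constant; the value of $\lambda$ is calibrated so that this bound equals $\chi$, and a union bound over the two failure events produces the stated right-hand side.

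The hard part will be the compatibility transfer of the first step: the non-i.i.d. rows can be handled only by PSD monotonicity, so all restricted-eigenvalue control must be extracted from the i.i.d. subset $\cA'$ alone, while carefully accounting for the two degradation factors $\sqrt{p/2}$ (from the fraction of i.i.d. samples) and $1/\sqrt2$ (from the empirical-to-population transfer). Synchronizing these with the concentration constants $C_1,C_2$ and with the calibrated $\lambda$, so that the final $\ell_1$-bound is exactly $\chi$ and the two exponents take the stated forms, is where the bookkeeping is most delicate.
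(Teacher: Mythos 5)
Your proposal is correct and follows essentially the same route as the source the paper relies on: the paper does not reprove this lemma but cites Appendix EC.2 of \citet{bastani2015online}, whose argument is exactly your decomposition into a covariance-concentration/compatibility-transfer event (PSD monotonicity $\hat{\Sigma}(\mathcal{A})\succeq \tfrac{|\mathcal{A}'|}{|\mathcal{A}|}\hat{\Sigma}(\mathcal{A}')$, entrywise Hoeffding, and the $\phi_1\mapsto\phi_1\sqrt{p}/2$ degradation) plus a noise-gradient event, followed by the deterministic basic-inequality/cone argument; your calibration of $\lambda$ against $C_1$ is also consistent with the stated constants. The paper's own Appendix C.1 proves the low-rank analogue (Lemma \ref{lem0501}) with precisely this structure, so your approach matches the paper's methodology as well.
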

The proof of this lemma can be seen in Appendix EC.2 of \citet{bastani2015online}, from which we know that generating sufficient i.i.d samples is necessary for the convergence of the LASSO estimator. Then, we will show in the following two subsections that the $\varepsilon$-decay and arm elimination methods guarantee the number of i.i.d. samples. 


\subsection{Estimator From Forced Samples Up To Batch \texorpdfstring{$l$}{l}}\label{sub402}
\begin{proposition}\label{prop3}
Suppose that Assumptions \ref{parameterset}, \ref{armoptimality} and \ref{compatibilitycondition} hold. Under Algorithm \ref{algo1}, for all arms $k \in [K]$, if $d>2$ and $t_l \ge (t_0 +1)^2/e^2 - 1$, the forced-sample estimator $\betah(\cR_{k,l}, \lambda_1)$ satisfies
$$
\mathbb{P}\Big( \| \betah(\cR_{k,l}, \lambda_1)-\beta_k^{\true}\|_1 \ge \frac{h}{4x_{\max}}\Big)\le \frac{7}{t_l+1},
$$
where $t_0 = 2C_0K$, $\lambda_1 = (\phi_0^2 p_{\ast}h)/(64s_0 x_{\max})$, $C_0 = \max \{10,8/p_{\ast}, 48\log d/(p_{\ast}^2C_2^2),
4\log d/C_1\cdot(96x_{\max}/(p_{\ast}h))^2 \}$, and the parameter $h$ is defined in Assumption \ref{armoptimality}.
\end{proposition}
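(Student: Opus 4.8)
The plan is to reduce Proposition \ref{prop3} to the non-i.i.d. LASSO tail inequality of Lemma \ref{LASSOtailniid}. The key observation is that whenever $\mathcal{D}_t=1$ the arm is drawn uniformly and independently of the covariate $X_t$, so the covariates collected in the forced-sample set $\cR_{k,l}$ are i.i.d. draws from $\mathcal{P}_X$. Hence I take $\cA=\cR_{k,l}$ in Lemma \ref{LASSOtailniid} and let the i.i.d. subset $\cA'$ be those forced samples whose covariate lands in $\cV_k$ (equal to $\cU_k$ for $k\in\cK_o$). Conditioned on membership in $\cV_k$, these samples are i.i.d. with covariance $\Sigma_k=\E(XX^{\mathrm{T}}\mid X\in\cV_k)$, which by Assumption \ref{compatibilitycondition} lies in $\cC(\supp(\beta_k^{\true}),\phi_0)$. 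I then instantiate the lemma with $\chi=h/(4x_{\max})$, $p=p_{\ast}$ and $\phi_1=\phi_0$; a direct substitution shows $\lambda(\chi,\phi_0\sqrt{p_{\ast}}/2)=\chi\phi_0^2 p_{\ast}/(16 s_0)=\phi_0^2 p_{\ast} h/(64 s_0 x_{\max})$, matching the prescribed $\lambda_1$.

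To invoke the lemma I must verify its two hypotheses on a good event. First, the size requirement $|\cA|\ge 6\log d/(p_{\ast}C_2^2(\phi_0))$: by Lemma \ref{boundsrandom}, the $\varepsilon$-decay forced sampling guarantees $|\cR_{k,l}|$ of order $\log t_l$ with high probability, and the threshold $t_l\ge (t_0+1)^2/e^2-1$ (equivalently a floor on $\log t_l$) together with the choice $C_0\ge 48\log d/(p_{\ast}^2 C_2^2)$ forces this lower bound above the required $6\log d/(p_{\ast}C_2^2)$. Second, the fraction requirement $|\cA'|/|\cA|\ge p_{\ast}/2$: conditioning on the number of forced samples, each forced covariate independently falls in $\cV_k$ with probability exceeding $p_{\ast}$, so a binomial (Chernoff) concentration bound shows $|\cA'|\ge (p_{\ast}/2)|\cR_{k,l}|$ except on an event that is exponentially small in $|\cR_{k,l}|$, hence of order $1/(t_l+1)$.

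On the intersection of these events Lemma \ref{LASSOtailniid} bounds the target probability by $2\exp\!\bigl(-C_1(\phi_0\sqrt{p_{\ast}}/2)\,|\cR_{k,l}|\,\chi^2+\log d\bigr)+\exp\!\bigl(-p_{\ast}C_2^2(\phi_0)|\cR_{k,l}|/2\bigr)$. Substituting the $\log t_l$-order lower bound on $|\cR_{k,l}|$ and using the remaining terms in the definition of $C_0$ — in particular $C_0\ge (4\log d/C_1)(96 x_{\max}/(p_{\ast}h))^2$ to absorb the $+\log d$ in the first exponent — makes each exponential at most a constant multiple of $1/(t_l+1)$. A final union bound over the ``too few forced samples'' event, the ``too small $\cV_k$ fraction'' event, and the two tail terms collects the contributions into the claimed $7/(t_l+1)$.

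The main obstacle is the bookkeeping of constants rather than any single hard inequality: one must choose $C_0$ (and hence $t_0=2C_0K$) large enough — precisely the maximum appearing in the statement — so that the logarithmic-order lower bound on $|\cR_{k,l}|$ simultaneously clears the size threshold of Lemma \ref{LASSOtailniid} and dominates the additive $\log d$ inside its first exponent, while the hypothesis $t_l\ge (t_0+1)^2/e^2-1$ is exactly what converts the ``order $\log t_l$'' guarantee into a usable numeric floor. The one conceptual point worth flagging is that, although the forced samples are globally i.i.d., the compatibility condition is available only for the conditional covariance $\Sigma_k$ on $\cV_k$; this is why the non-i.i.d. inequality, applied to the sub-population in $\cV_k$, is the correct tool even in this setting.
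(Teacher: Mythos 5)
Your proposal is correct and follows essentially the same route as the paper: it conditions on the good event that the forced-sample count is of order $\log t_l$ (via the $\varepsilon$-decay guarantee) and that at least a $p_{\ast}/2$ fraction of those covariates land in $\cV_k$ (via a Chernoff bound), then applies Lemma \ref{LASSOtailniid} with $\cA=\cR_{k,l}$, $\cA'=\{\tau\in\cR_{k,l}\mid X_\tau\in\cV_k\}$, $p=p_{\ast}$, $\chi=h/(4x_{\max})$, and uses the terms in the definition of $C_0$ to make both exponential tails $O(1/(t_l+1))$ before a union bound. This is exactly the paper's argument (its event $\cE_1$ and its use of Lemmas \ref{ap004} and \ref{lem:ap005}), including the key observation that the compatibility condition is only available for the conditional covariance $\Sigma_k$, which is why the non-i.i.d.\ lemma is needed even for the globally i.i.d.\ forced samples.
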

Although the forced samples are i.i.d., we cannot directly utilize the LASSO tail inequality for i.i.d. data. Since the compatibility condition in Assumption \ref{compatibilitycondition} holds for the conditional covariance $\Sigma_k = \EE(XX^{\mathrm{T}}\mid X\in\cV_k)$ instead of $\EE(XX^{\mathrm{T}})$, we define $\cA^{'}=\{ \tau\in\cR_{k,l}\mid X_{\tau}\in\cV_k\}$ as the i.i.d. sample set from $\mathcal{P}_{X|X\in\cV_k}$. By invoking Lemma \ref{lem0501} with $\cA=\cR_{k,l}$ and $\cA^{'}$ defined above, the convergence result follows. The detailed proof is given in Appendix \ref{ss:Convergence of Random-Sample Estimators}.

\subsection{Estimator From Whole Samples Up To Batch \texorpdfstring{$l$}{l}}\label{sub403}
Consider a general grid choice $\cT'=\{ t_1,\ldots,t_l\}$:
\$
t_l=g(t_{l-1}),\quad\text{for}~l=2,\ldots,L,
\$
where $g:\ZZ\rightarrow\ZZ$ is a predetermined function. In this subsection, we will explore what kind of conditions for a general grid choice will ensure the tail inequality for the whole-sample estimators of optimal arms $k\in \mathcal{K}_o$. 

The challenge lies in the fact that simply invoking Lemma \ref{LASSOtailniid} with $\mathcal{A}= \mathcal{W}_{k,l}$ and $\mathcal{A}'= \mathcal{R}_{k,l}$ is not applicable due to the limited $\mathcal{O}(\log t_l)$ number of i.i.d. samples in $\mathcal{R}_{k,l}$, which is far from sufficient for the requirement of Lemma \ref{LASSOtailniid}. To resolve this, we will construct an i.i.d. sample set with the sample size in the same order of $t_l$.

In Algorithm \ref{algo1}, when random sampling is not conducted, the agent follows a two-step decision procedure to determine the optimal decision. The first step involves determining whether an arm is in the decision candidate set based on the performance of its forced-sample estimator. According to Proposition \ref{prop3}, this estimator will not be far from its true parameter value. We denote $\mathcal{A}_l$ as the event in which the forced-sample estimators in batch $l$ are within a given distance of their true parameters:
$$
\mathcal{A}_l=\left\{\| \betah(\cR_{k,l}, \lambda_1)-\beta_k^{\true}\|_1 \le \frac{h}{4x_{\max}},~\forall k\in[K]\right\}.
$$
Using Proposition \ref{prop3}, if $t_l\ge(t_0+1)^2/e^2-1$, we have $\mathbb{P}(\mathcal{A}_l)\ge 1-7K/(t_l+1)$. Moreover, conditioning on event $\mathcal{A}_l$, for any $x\in \cU_k, k\in[K]$, we can verify the following inequality:
$$
x^{\mathrm{T}}\betah(\cR_{k,l}, \lambda_1) \ge\max_{j\ne k}\left(x^{\mathrm{T}}\betah(\cR_{j,l}, \lambda_1)\right)+\frac{h}{2}.
$$
In other words, if event $\mathcal{A}_l$ happens for $x\in \cU_k$, in the first step, the agent will only choose arm $k$, which is the optimal arm. To bound the total number of time steps up to batch $l$ when $\mathcal A_l$ holds, and the agent selects the optimal arm via the two-step sampling procedure, we define for $t\in\{1,\ldots,t_{l}\}$ and $k\in\cK_o$:
$$
M_k(t)=\mathbb{E}\Big(\sum_{m=1}^{l}\sum_{\tau=t_{m-1}+1}^{t_m}\mathbbm{1}(X_{\tau}\in \cU_k,\mathcal A_{m-1},X_{\tau}\notin \mathcal{R}_{k,m}) \,\Big|\, \mathcal{F}_t\Big),
$$
where $\mathcal{F}_t=\{(x_{\tau},r_{\tau}) ,~\text{for}~ \tau\le t\}$. The difference between $M_k(t)$ defined in this paper and in \cite{bastani2015online} is that we need to divide the summation by batches. The $M_k(t)$ represents the size of a subset where the forced sampling is not executed, the covariate $X_{\tau}$ belongs to optimal set $\cU_k$, and the forced-sample estimator for the past batches approaches the true one with a given distance. We also denote the sample set $\{X_{\tau}\mid X_{\tau}\in \cU_k,\mathcal A_{m-1},X_{\tau}\notin \mathcal{R}_{k,m},\tau\in\{t_{m-1}+1,\ldots,t_m\}\}$ as $\mathcal{B}_{k,m}$, and we will show in Appendix \ref{sss:Proof of Proposition 5.3} that the samples in $\mathcal{B}_{k,m}$ are i.i.d. (given distribution $\mathcal{P}_{X|X\in \cU_k}$). Thus, $M_k(t_{l})$ denotes the number of a type of i.i.d. samples in the whole-sample set up to batch $l+1$. Eventually, knowing that $\{M_k(t_l)\}_{l\in[L]}$ is a martingale, we can bound the value of $M_k(t_{l})$ for each $l\in[L]$ in the following proposition.
\begin{proposition}\label{prop4general}
Suppose $l\ge 2$ and the grid $\cT'$ satisfies that $g$ is non-decreasing, $g(t_l)\le \alpha t_l$ for all $l\in[L]$ for some constant $\alpha>1$ and $t_2\ge 2t_1$. When $t_{l} \ge C_5^{'}=\min_{t\in\mathbb{Z}}\{ t\ge8K(C_0+3\alpha)\log(t/\tt_0), t\ge2g(\tt_0)\}$, where $\tt_0=\max \left\{ 7K-1, 2C_0 K-1, \left\lceil (t_0+1)^2/e^2 \right\rceil-1 \right\}$, for any $k\in\cK_o$, we have
$$
\mathbb{P}\left(M_k(t_{l})\ge\frac{p_{\ast}}{8}t_{l}\right)\ge 1 - \exp\left(-\frac{p_{\ast}^2}{128}(t_{l}-1)\right).
$$
\end{proposition}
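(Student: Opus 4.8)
The plan is to realize $M_k(t_l)$ as the terminal value of a Doob martingale, bound its mean below by a fixed fraction of $t_l$, and then control the lower tail with Azuma--Hoeffding. Fix the batch index $l$ and write $N=\sum_{m=1}^{l}\sum_{\tau=t_{m-1}+1}^{t_m}\mathbbm{1}(X_\tau\in\cU_k,\cA_{m-1},X_\tau\notin\cR_{k,m})$, so that the Doob martingale $\{M_k(t)=\mathbb{E}(N\mid\mathcal{F}_t)\}_{t\le t_l}$ has terminal value $M_k(t_l)=N$ and initial value $M_k(0)=\mathbb{E}(N)$, and revealing one additional covariate-reward pair perturbs $\mathbb{E}(N\mid\mathcal{F}_t)$ by $O(1)$, so its increments are bounded. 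Granting the mean bound $\mathbb{E}(N)\ge\frac{p_{\ast}}{4}t_l$, Azuma--Hoeffding with unit increments over the $t_l-1$ steps gives $\mathbb{P}(M_k(t_l)<\frac{p_{\ast}}{8}t_l)\le\mathbb{P}(M_k(t_l)-\mathbb{E}(N)<-\frac{p_{\ast}}{8}t_l)\le\exp(-\frac{(p_{\ast}t_l/8)^2}{2(t_l-1)})\le\exp(-\frac{p_{\ast}^2}{128}(t_l-1))$, which is precisely the claimed bound.

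The crux is therefore the mean bound. For a single term I would factor the three indicators: since $X_\tau$ is drawn i.i.d.\ and independently of the past, $\mathbb{P}(X_\tau\in\cU_k)\ge p_{\ast}$ by Assumption \ref{armoptimality}; the forced-sampling coin and its uniform arm are independent of $X_\tau$ and the past, so $\mathbb{P}(X_\tau\notin\cR_{k,m})\ge 1-t_0/(K\tau)$; and $\cA_{m-1}$ is measurable with respect to the data of the first $m-1$ batches, so Proposition \ref{prop3} (valid once $t_{m-1}\ge\tt_0$) gives $\mathbb{P}(\cA_{m-1})\ge 1-7K/(t_{m-1}+1)$. Combining through $\mathbb{P}(A\cap B\cap C)\ge\mathbb{P}(A)-\mathbb{P}(B^c)-\mathbb{P}(C^c)$ yields the per-step lower bound $p_{\ast}-7K/(t_{m-1}+1)-t_0/(K\tau)$.

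Two tricks then convert this into the mean bound for a general grid. First, I discard every batch with $t_{m-1}<\tt_0$, where Proposition \ref{prop3} is unavailable; since $g$ is non-decreasing the discarded prefix has length at most $g(\tt_0)$, and the hypothesis $t_l\ge 2g(\tt_0)$ ensures the retained batches still span at least $t_l/2$ steps, each contributing at least $p_{\ast}$ before corrections. Second, I bound the accumulated corrections by integrals: the elementary inequality $x-1\le\alpha\log x$ on $[1,\alpha]$ combined with $g(t)\le\alpha t$ gives $\sum_m (t_m-t_{m-1})/(t_{m-1}+1)\le\alpha\log(t_l/\tt_0)$, while $\sum_\tau 1/\tau\le\log(t_l/\tt_0)+O(1)$, so (using $t_0=2C_0K$) the total correction is at most a constant times $K(C_0+\alpha)\log(t_l/\tt_0)$. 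The defining inequality $t_l\ge 8K(C_0+3\alpha)\log(t_l/\tt_0)$ of $C_5^{'}$ — in which $C_0\ge 8/p_{\ast}$ supplies the needed $1/p_{\ast}$ scaling — then forces this correction below $\frac{p_{\ast}}{4}t_l$, so that $\mathbb{E}(N)\ge\frac{p_{\ast}}{2}t_l-\frac{p_{\ast}}{4}t_l=\frac{p_{\ast}}{4}t_l$, completing the input to Azuma.

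The main obstacle I anticipate is the mean bound under an \emph{arbitrary} admissible grid: the tension between discarding the initial batches (to license Proposition \ref{prop3}) and retaining $\Omega(t_l)$ useful steps must be handled so that the integral estimates and the constant $C_5^{'}$ emerge uniformly in the grid, using only that $g$ is non-decreasing, $g(t)\le\alpha t$, and $t_2\ge 2t_1$. A secondary technical point is confirming that the Doob martingale has $O(1)$ increments despite the factor $\mathbbm{1}(\cA_{m-1})$ being shared across an entire batch; this requires checking, as in \citet{bastani2015online}, that revealing a single sample changes $\mathbb{E}(N\mid\mathcal{F}_t)$ by only a bounded amount.
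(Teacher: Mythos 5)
Your proposal follows essentially the same route as the paper's proof: realize $M_k$ as a Doob martingale with bounded increments, lower-bound $M_k(0)$ by discarding the batches before the first $t_{l_0}\ge\tt_0$ (using $g$ non-decreasing and $t_l\ge 2g(\tt_0)$ to keep $\Omega(t_l)$ steps), control the forced-sampling and $\cA_{m-1}^c$ corrections by integral comparison together with $g(t)\le\alpha t$, and finish with Azuma. The one substantive deviation is that you combine the three events via $\P(A\cap B\cap C)\ge\P(A)-\P(B^c)-\P(C^c)$, whereas the paper factors the probability exactly as $\P(X_\tau\in\cU_k)\,\P(\cA_{m-1})\,\P(X_\tau\notin\cR_{k,m})$ using independence; your version drops the $p_{\ast}$ prefactor on the correction terms, so the stated $C_5'$ (which only guarantees $t_l\ge 8K(C_0+3\alpha)\log(t_l/\tt_0)$) does not quite deliver the requirement $2K(C_0+3\alpha)\log(t_l/\tt_0)\le p_{\ast}t_l/4$ — restore the independence factorization (or enlarge the constant by $1/p_{\ast}$) to close this minor gap.
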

This proposition implies that it suffices to require that the recursive grid function $g$ is non-decreasing and the growth rate does not exceed exponential growth ($t_l/t_{l-1}\le\alpha$). The proof of Proposition \ref{prop4general} is provided in Appendix \ref{sss:Proof of prop4general}. This proposition indicates that, with high probability, the actual i.i.d. sample size for decision $k$ in $\cU_k$ will be of the order of $\Theta(T)$ instead of $\Theta(\log T)$. Ensuring the number of $M_k(t_{l})$ is more challenging than in the case of Lemma EC.14 in \citet{bastani2015online} for two reasons. Firstly, our estimators are not updated until the end of each batch, and secondly, the forced-sampling time steps are not predetermined. Forced sampling may occur at each time step with a decreasing probability. Additionally, there is a discrepancy between the proof of Proposition \ref{prop4} and Proposition 5 in \citet{wang2018online}. When lower-bounding the probability of $\{X_{\tau}\in \cU_k,~\mathcal A_{m-1},~X_{\tau}\notin \mathcal{R}_{k,m}\}$ for each time step $\tau$ in batch $m$, the latter directly lower-bounds $\P(A_{m-1})$ and $\P(X_{\tau}\notin \mathcal{R}_{k,m})$ by their values of $1-7K/(T+1)$ and $1-2C_0K/(T+1)$ at the final time step $T$. This step is questionable as $1-C/\tau$ cannot be lower-bounded by $1-C/T$ for $\tau\le T$. To overcome this challenge, we introduce a new half-control technique that bound the summations of the remaining part of batches through integration.

Coming back to our specific grid in \eqref{eqgrid}, we demonstrate in Appendix \ref{sss:Proof of Proposition 5.3} that this grid satisfies the conditions in Proposition \ref{prop4} that show the following result.
\begin{proposition}\label{prop4}
Under Algorithm \ref{algo1}, if $t_{l} \ge C_5$, $l\ge 2$ 
, then for $k\in\cK_o$, we have
$$
\mathbb{P}\Big(M_k(t_{l})\ge\frac{p_{\ast}}{8}t_{l}\Big)\ge 1 - \exp\Big(-\frac{p_{\ast}^2}{128}(t_{l}-1)\Big).
$$
\end{proposition}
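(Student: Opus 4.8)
The plan is to obtain Proposition~\ref{prop4} as a direct specialization of the general Proposition~\ref{prop4general}, whose conclusion is word-for-word the bound we want. Thus it suffices to check that the explicit grid \eqref{eqgrid} satisfies the three structural hypotheses of Proposition~\ref{prop4general}---that the update is non-decreasing, that the per-step growth factor is bounded by some $\alpha>1$, and that $t_2\ge 2t_1$---and then to confirm that the threshold $C_5$ declared in Theorem~\ref{th:linear} is at least the quantity $C_5'$ appearing there. Since \eqref{eqgrid} induces an $l$-dependent update rather than a single $g$, I would phrase the verification directly in terms of the sequence $\{t_l\}$, which is all that the proof of Proposition~\ref{prop4general} actually consumes.

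First I would dispatch the two easy hypotheses. Strict monotonicity of $\{t_l\}$ is immediate because the multiplier $a/((l-1)\log t_{l-1})+1$ exceeds $1$ for every $l$; moreover, for each fixed $l$ the map $t\mapsto t+at/((l-1)\log t)$ has derivative $1+\tfrac{a}{l-1}(\log t-1)/(\log t)^2\ge 1$ once $t\ge e$, so the update is non-decreasing on the relevant range (the first jump already lands at $t_2\ge 4>e$). For the second, $t_1=2$ together with the standing condition $a\ge\log t_1=\log 2$ from Theorem~\ref{th:linear} gives $t_2=\lfloor 2a/\log 2+2\rfloor\ge 4=2t_1$.

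The growth bound is where the care is needed and is the main obstacle. Since the floor can only decrease $t_l$, we have $t_l/t_{l-1}\le 1+a/((l-1)\log t_{l-1})$, and since $(l-1)\log t_{l-1}$ is increasing in $l$ (both factors grow), this ratio is maximized at the first step $l=2$, giving $t_l/t_{l-1}\le 1+a/\log 2=:\alpha$. The delicate point is that, because $a=c\log^2 T/\log L$, this $\alpha$ is not an absolute constant but grows poly-logarithmically in $T$; this merely reflects that the grid intentionally takes a large initial batch and relaxes to near-unit growth only for large $l$. I would then substitute $\alpha=1+a/\log 2$ into $C_5'=\min_{t}\{\,t\ge 8K(C_0+3\alpha)\log(t/\tt_0),\ t\ge 2g(\tt_0)\,\}$ and verify that $C_5$ is chosen to dominate the resulting value; the at-most-poly-logarithmic inflation of the threshold is harmless, since it only removes an initial segment of length comparable to terms already present in the $\mathcal{O}(s_0^2\log^2 T)$ regret bound.

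With all three hypotheses confirmed for $\alpha=1+a/\log 2$ and $C_5\ge C_5'$, invoking Proposition~\ref{prop4general} yields, for every $k\in\cK_o$ and every $l\ge 2$ with $t_l\ge C_5$, the bound $\mathbb{P}(M_k(t_l)\ge (p_{\ast}/8)t_l)\ge 1-\exp(-(p_{\ast}^2/128)(t_l-1))$, which is exactly the assertion of Proposition~\ref{prop4}.
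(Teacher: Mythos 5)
Your proposal is correct and follows essentially the same route as the paper, which likewise proves Proposition \ref{prop4} by checking that the grid \eqref{eqgrid} is non-decreasing, has bounded per-step growth ratio, and satisfies $t_2\ge 2t_1$, and then invokes Proposition \ref{prop4general}. If anything you are slightly more careful than the paper on the growth constant (the paper bounds $t_l/t_{l-1}$ by $a+1$, which at $l=2$ with $t_1=2$ really requires your $1+a/\log 2$), though with that tighter $\alpha$ you should still explicitly confirm that the stated $C_5$ dominates the resulting $C_5'$, since the comparison is only a constant-factor bookkeeping step that both you and the paper leave implicit.
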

Now we can apply Lemma \ref{LASSOtailniid} to derive the following proposition. The proof is given in Appendix \ref{sss:Proof of Proposition 5.4}.
\begin{proposition}\label{prop5}
If Assumptions \ref{parameterset}, \ref{armoptimality} and \ref{compatibilitycondition} hold, Under Algorithm \ref{algo1}, when $t_l\ge C_5$ and $l\ge 2$, the whole-sample estimator $\betah(\cW_{k,l}, \lambda_{2,l})$ for $k \in \cK_o$ will satisfy the following inequality:
$$
\mathbb{P}\bigg(\| \betah(\cW_{k,l}, \lambda_{2,l}) - \beta_k^{\true}\|_1 \ge 32\sqrt{\frac{2(\log t_l +\log d)}{t_lp_{\ast}^3C_1(\phi_0)}}\bigg)\le\frac{2}{t_l}+2\exp\bigg(-\frac{t_lp_{\ast}^2C_2^2(\phi_0)}{64}\bigg),
$$
where
$\lambda_{2,l}=\phi_0^2/s_0 \cdot \sqrt{(\log t_l +\log d)/(2t_l p_{\ast}C_1(\phi_0))}$.
\end{proposition}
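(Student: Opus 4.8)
The plan is to deduce Proposition \ref{prop5} by applying the non-i.i.d.\ LASSO tail inequality (Lemma \ref{LASSOtailniid}) to the whole-sample set $\cA = \cW_{k,l}$, feeding it the large i.i.d.\ subset supplied by Proposition \ref{prop4}. Concretely, for $k\in\cK_o$ I would take $\cA' = \bigcup_{m\le l}\cB_{k,m}$, the collection of time steps counted by $M_k(t_l)$, on which the covariates are drawn i.i.d.\ from $\mathcal{P}_{X\mid X\in\cU_k}$. The governing covariance is then $\Sigma = \E(XX^{\mathrm{T}}\mid X\in\cU_k) = \Sigma_k$, and since $\cV_k=\cU_k$ for optimal arms, Assumption \ref{compatibilitycondition} gives $\Sigma_k\in\mathcal{C}(\supp(\beta_k),\phi_0)$, so the compatibility constant required by Lemma \ref{LASSOtailniid} is $\phi_1=\phi_0$.

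First I would verify the two size hypotheses of Lemma \ref{LASSOtailniid} on the event $\cE = \{M_k(t_l)\ge (p_{\ast}/8)t_l\}$, which by Proposition \ref{prop4} has probability at least $1-\exp(-p_{\ast}^2(t_l-1)/128)$. Since $|\cW_{k,l}|\le t_l$ while $|\cA'| = M_k(t_l)\ge (p_{\ast}/8)t_l$, on $\cE$ the fraction obeys $|\cA'|/|\cA|\ge p_{\ast}/8$, so I set $p = p_{\ast}/4$ to meet the requirement $|\cA'|/|\cA|\ge p/2$. The absolute bound $|\cA|\ge |\cA'|\ge (p_{\ast}/8)t_l$ then makes the sample-size condition $|\cA|\ge 6\log d/(pC_2^2(\phi_0))$ follow from $t_l\ge C_5$, once $C_5$ exceeds $48\log d/(p_{\ast}^2 C_2^2(\phi_0))$.

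With these in place I would invoke Lemma \ref{LASSOtailniid} with $\chi = 32\sqrt{2(\log t_l+\log d)/(t_l p_{\ast}^3 C_1(\phi_0))}$. The prescribed choice $\lambda = \chi\phi_1^2 p/(16 s_0)$ reduces, after substituting $\phi_1=\phi_0$ and $p=p_{\ast}/4$, to exactly $\lambda_{2,l}=(\phi_0^2/s_0)\sqrt{(\log t_l+\log d)/(2 t_l p_{\ast} C_1(\phi_0))}$, confirming the stated regularization level. For the leading exponential term I would use $C_1(\phi_0\sqrt{p}/2) = (p^2/16)C_1(\phi_0)$ together with $|\cA|\ge (p_{\ast}/8)t_l$ and $p=p_{\ast}/4$; multiplying by $\chi^2$ the exponent becomes exactly $\log t_l+\log d$, so $2\exp(-C_1(\phi_0\sqrt{p}/2)|\cA|\chi^2+\log d)\le 2/t_l$. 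The second term is bounded by $\exp(-pC_2^2(\phi_0)|\cA|/2)\le\exp(-t_l p_{\ast}^2 C_2^2(\phi_0)/64)$.

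Finally I would combine via a union bound: on $\cE$ the two Lemma terms give $2/t_l + \exp(-t_l p_{\ast}^2 C_2^2(\phi_0)/64)$, while $\P(\cE^c)\le\exp(-p_{\ast}^2(t_l-1)/128)$ supplies the third contribution. Since $C_2(\phi_0)\le 1/2$ and $t_l\ge 2$ yield $(t_l-1)/(2t_l)\ge 1/4\ge C_2^2(\phi_0)$, the failure term from Proposition \ref{prop4} is dominated by the second Lemma term, so the sum collapses to $2/t_l + 2\exp(-t_l p_{\ast}^2 C_2^2(\phi_0)/64)$, as claimed. I expect the main obstacle to be the measurability/conditioning point rather than the (delicate but routine) constant-tracking: because $\cA'$ is data-dependent, Lemma \ref{LASSOtailniid}---stated for a fixed subset---must be applied conditionally on $\cE$ and on the realized index sets, and one must check that the i.i.d.\ structure of the $\cB_{k,m}$ (hence the concentration of $\hat\Sigma(\cA')$ toward $\Sigma_k$ underlying the Lemma) is preserved under this conditioning.
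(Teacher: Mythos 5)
Your proposal is correct and follows essentially the same route as the paper's proof: take $\cA=\cW_{k,l}$, $\cA'=\bigcup_{m\le l}\cB_{k,m}$, use Proposition \ref{prop4} to get $|\cA'|\ge p_*t_l/8$, apply Lemma \ref{LASSOtailniid} with $p=p_*/4$ and the stated $\chi$, and fold the failure probability of Proposition \ref{prop4} into the second exponential term via $C_2(\phi_0)\le 1/2$. The only quibble is a constant slip in the sample-size check (you need $t_l\gtrsim 192\log d/(p_*^2C_2^2)$ rather than $48\log d/(p_*^2C_2^2)$ once $p=p_*/4$ is substituted, which the paper's $C_5\ge 4C_0$ supplies), and your closing caveat about applying the lemma to a data-dependent index set is a fair observation that the paper likewise leaves implicit.
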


\subsection{Cumulative Regret up to Time \texorpdfstring{$T$}{T}} \label{cumuregret}\label{sub404}
Finally, with our convergence results, we divide the time steps, up to time $T$, into three groups to provide an upper bound for each group. Specifically, we define $l_0=\min\{ l'\ge1 \mid t_{l'}\ge C_5\}$ and partition the three groups are as follows.
\begin{enumerate}
\item Time steps in batch $l\le l_0$ (initialization) or forced sample $X_t\in\mathcal{R}_{k,L}~ \text{for}~ k \in [K]$.
\item Time steps out of forced-sample set $\mathcal{R}_{k,L}$ in batch $l>l_0$ when event $\cA_{l-1}$ does not hold.
\item Time steps out of forced-sample set $\mathcal{R}_{k,L}$ in batches $l>l_0$ when event $\cA_{l-1}$ holds.
\end{enumerate}

Note that the three groups collectively cover all $T$ time steps. The cumulative expected regret at time $T$ in the first group is bounded by $R_{\max}\cdot\left(t_{l_0}+K(2+6C_0\log T)\right)$, as the reward cannot be controlled in the initial stage (the first $l_0$ batches) and forced-sampling time steps. The regrets in these groups are bounded by $R_{\max}$ (as stated in Assumption \ref{parameterset}).

In the second group, the cumulative expected regret at time $T$ is upper-bounded by $7KR_{\max}\log^2 T$, as event $\cA_{l-1}$ occurs with a high probability according to Proposition \ref{prop3}. Therefore, we can assume the worst cases for the time steps in this group.

In the third group, the cumulative expected regret is bounded by $K ( C_3(\phi_0,p_{\ast})\log d+C_4(\phi_0, p_{\ast})) \log^2 T$. As event $\cA_{l-1}$ holds, the agent uses the forced-sampling estimator to choose the estimated best arm, and we can use Proposition \ref{prop4} without worrying that the chosen arm may not be the true optimal arm. Thus, the agent will choose the optimal arm when the agent does not sample randomly. Using Proposition \ref{prop5} for all time steps in this group, we can bound the expected regret in this group. By combining the cumulative regret for all three groups, Theorem \ref{th:linear} follows. The detailed proof is presented in Appendix \ref{ss:Bounding the Cumulative Regret}.

\section{Empirical Results}\label{s:EmpiricalResults}
In this section, we will evaluate the performance of the batched sparse bandit and the batched low-rank bandit by examining how their cumulative regret is affected by different factors such as time steps, batch number, data dimensions, and the size of the decision set.

In the high-dimensional sparse setting ($\S$\ref{ssec:hdbandit}), we will compare the performance of our batched bandit algorithm to the LASSO bandit algorithm using both synthetic data and real-world datasets. This evaluation is based on the formulations outlined in \citet{bastani2015online}. In the low-rank setting ($\S$\ref{ssec:lrbandit}), we will conduct experiments on synthetic data to compare the performance of our batched bandits.

\subsection{High-Dimensional Bandit}\label{ssec:hdbandit}

\subsubsection{Synthetic Data}

\begin{figure}[tp]
\centering
\subfigure[$K=2,d=100,s_0=5$]{
\centering
\includegraphics[height=1.6in]{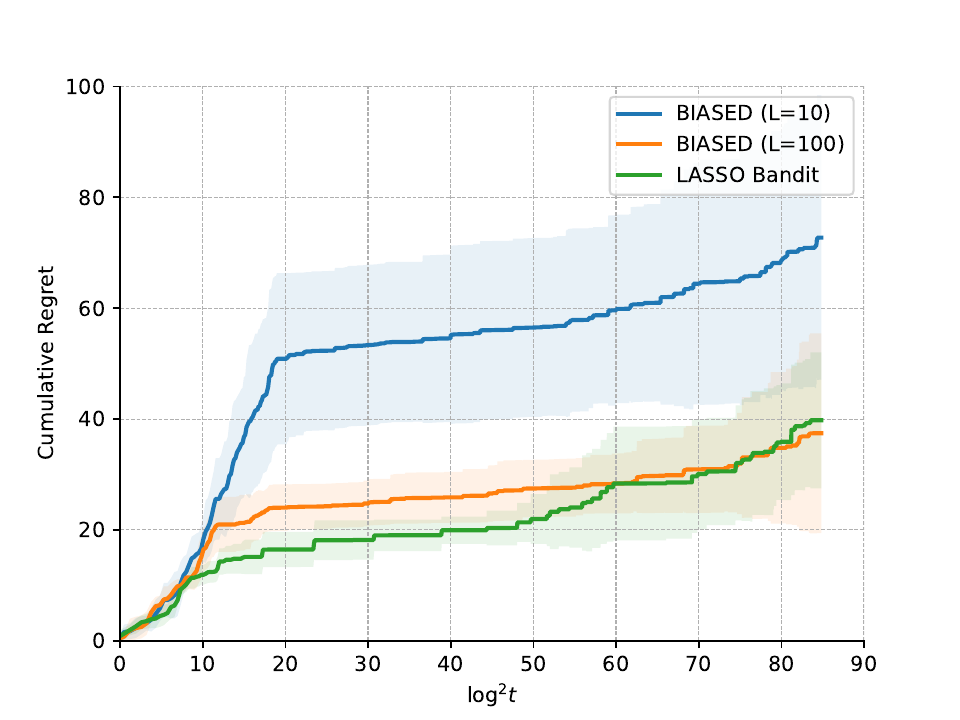}
\label{f:reg_toy1}
}
\subfigure[$K=10,d=1000,s_0=2$]{
\centering
\includegraphics[height=1.6in]{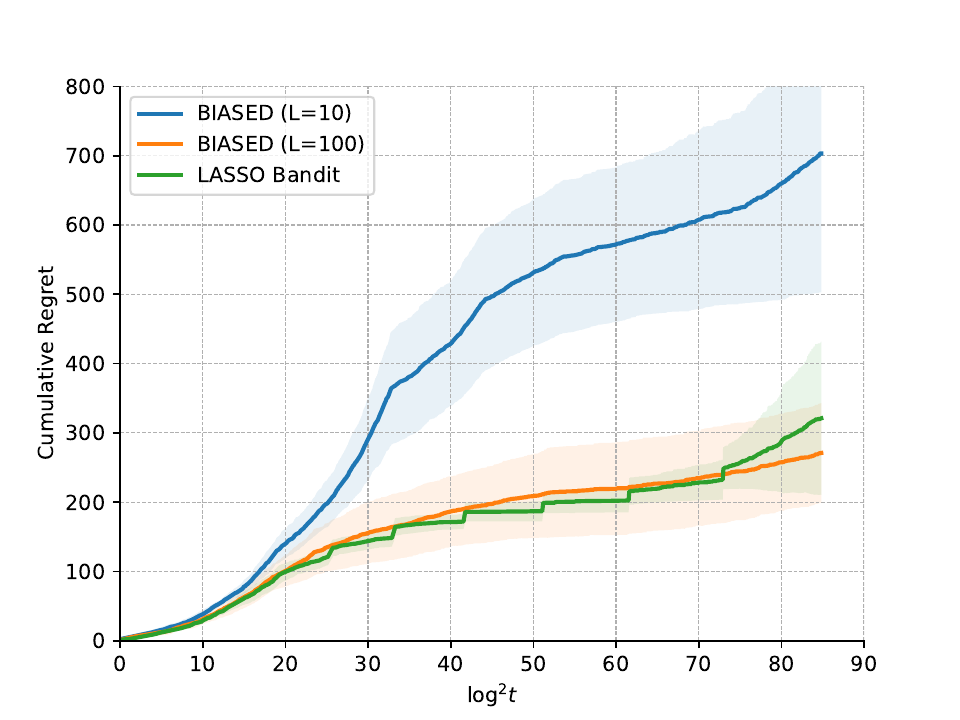}
\label{f:reg_toy2}
}  
\subfigure[$K=20,d=1000,s_0=2$]{
\centering
\includegraphics[height=1.6in]{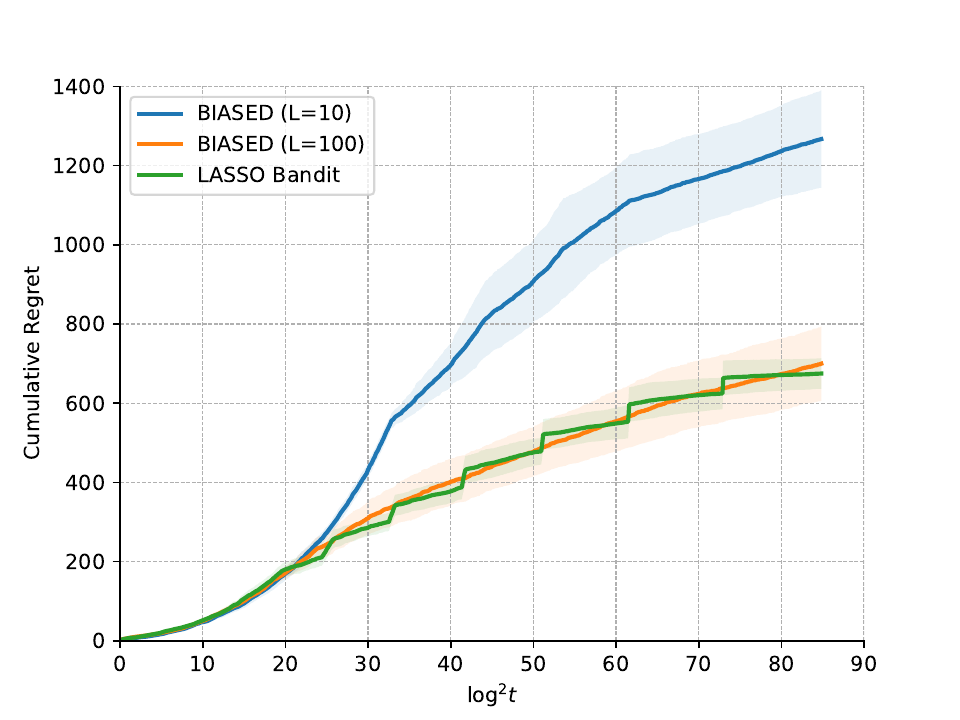}
\label{f:reg_toy3}
}
\subfigure[$K=2,d=100\sim1000,s_0=5$]{
\centering
\includegraphics[height=1.6in]{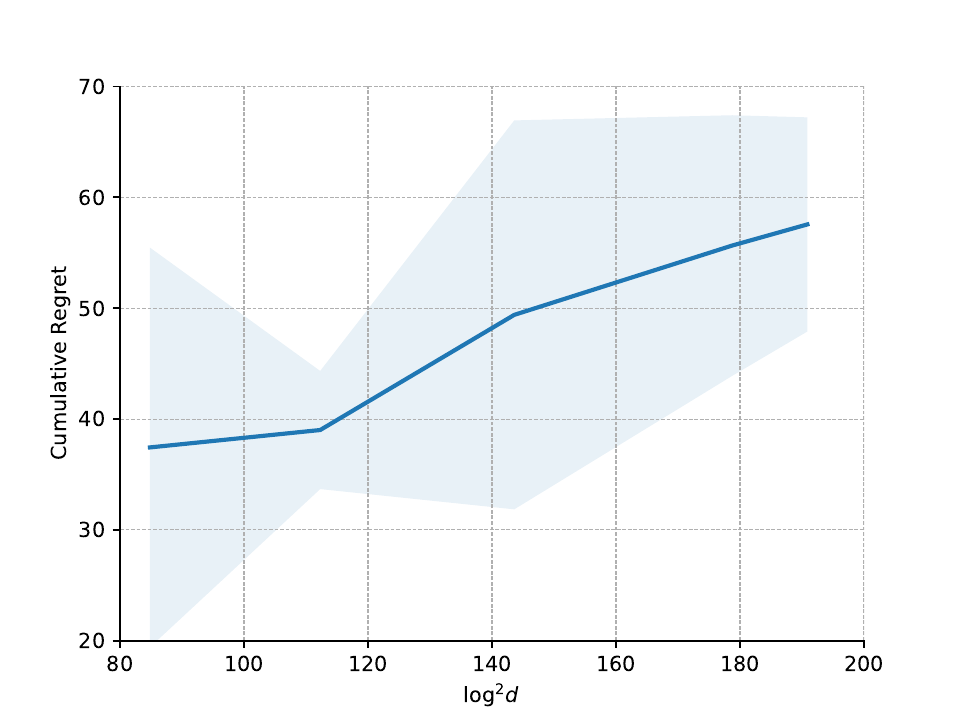}
\label{f:reg_toy4}
}
\centering
\caption{The performance on synthetic data for BIASED under batch $L=10$ and $L=100$ compared with LASSO bandit. The trajectory is the average of $5$ random 
and independent trials (shaded regions depict the $95\%$ confidence intervals). We examine the algorithms for different values of $K$, $d$, and $s_0$. The results align with our theoretical bounds: the cumulative regret has $\cO(\log^2 t)$ and $\cO(\log^2 d)$ dependence on time step $t$ and dimension $d$, respectively.}
\label{f:toy_high}
\end{figure}

{\bf Dataset.} Consider three cases for arm number $K$, data dimension $d$ and sparsity parameter $s_0$: (1) $K=2, ~d=100, ~s_0=5$; (2) $K=10, ~d=1000, ~s_0=2$; (3) $K=20, ~d=1000, ~s_0=2$. In these cases, we benchmark the performance of the Batched hIgh-dimensionAl SparsE banDit (BIASED) under batches $L=10$ and $L=100$ to the LASSO bandit  \citep{bastani2015online} (with $L=T=10000$). Moreover, to validate the influence of the covariate dimension $d$ on the cumulative regret for BIASED, we fix $K=2,~s_0=5, L=100$ and plot the regret with $d=100,200,400,800, 1000$. In each scenario, the true parameter $\beta_k^{\true}$ for each $k\in[K]$ is a sparse vector, where only $s_0$ randomly chosen components are nonzero. These $s_0$ values are sampled from a uniform distribution on $[0,1]$. At each time $t$, a user covariate $X_t$ is independently drawn from a Gaussian distribution $\cN(\bm 0_d, \bI_d)$ and truncated within $[-1,1]$. Then, the noise variance is $\sigma^2 = 0.01^2$.

\vspace{4pt}
\noindent
{\bf Algorithm Inputs.} Given the time periods $T=10000$, to validate the robustness of our algorithm with respect to different batch numbers $L$, we choose the size of the first batch $t_1=4$ and consider two batch numbers $L$ by using two different grid parameters $a$ according to \eqref{eqgrid}: 1) To obtain $L=100$, we choose $a=9.7$; 2) To obtain $L=10$, we choose $a=25.8$. For the other input parameters, we select the initialized regularization parameters $\lambda_1=0.05,~\lambda_{2,0}=0.2$, the $\epsilon$-decay forced sampling parameter $t_0=K$ (consistent with the value $t_0=2C_0K$ in Theorem \ref{th:linear}). For LASSO bandit, we choose the forced sampling parameter $q=1$. For both bandits, the arm optimality parameter $h=5$ in scenarios (1) and (2), and $h=10$ in scenario (3). Conclusively, the choice for the tuning parameters is basically consistent with \citet{bastani2015online}.

\vspace{4pt}
\noindent
{\bf Results.}
The performance of BIASED is compared with the LASSO bandit in Figure \ref{f:toy_high} under $10$ and $100$ batches, respectively. The horizontal axis is set to $\log^2 t$ to examine the order of the upper bound with respect to $T$. The results, averaged over $5$ trials
and have $95\%$ confidence intervals, validate that the regret bound is of the order $\log^2 t$.  Additionally, Figures \ref{f:reg_toy1}-\ref{f:reg_toy3} demonstrate that BIASED is robust to the choice of parameters $L$, $K$, and $s_0$. Although a smaller batch size of $L=10$ yields a higher regret than $L=100$, both regret bounds are in the same order. Finally, Figure \ref{f:reg_toy4} shows that the cumulative regret at time $T=10000$ depends on the dimension $d$ with $\log^2 d$.

\subsubsection{Warfarin Dosing}
{\bf Dataset.} In the first experiment on real data, we apply our algorithm to a precision medicine problem called warfarin dosage \citep{international2009estimation}, where physicians decide the optimal warfarin dosage for arriving patients. In the dataset \protect\footnote{\url{https://github.com/chuchro3/Warfarin/tree/master/data}}, $5528$ patients carry individual covariates of dimension $d=93$, including demographic, diagnosis, previous diagnoses, medications, and genetic information.

\vspace{4pt}
\noindent
{\bf Bandit Formulation and Hyperparameters.} According to \citet{bastani2015online}, the problem is formulated as follows. The correct dosage is divided into three levels: low, medium and high. Since the correct dose is given but concealed, the reward is $0$ if the chosen arm is the patient's correct dose. Otherwise, the reward is $-1$. For the hyperparameters, we set the initialized regularization parameters $\lambda_1=1.0,~\lambda_{2,0}=0.5$, and the arm optimality parameter $h=2.0$. For BIASED, the batch number $L=50$, the grid parameter $a=10.2$, the size of the first batch $t_1=2$ and $\epsilon$-decay forced sampling parameter $t_0=15$. For LASSO bandit, the forced sampling parameter $q=5$.

\vspace{4pt}
\noindent
{\bf Results.} In Figure \ref{f:wfr}, we compare the fraction of incorrect dosages between BIASED and LASSO bandit averaging over $5$ trails. We note that the batched bandit nearly exhibits the same performance as the sequential bandit (LASSO bandit). 

\begin{figure}[tp]
    \centering
    \includegraphics[width=0.5\linewidth]{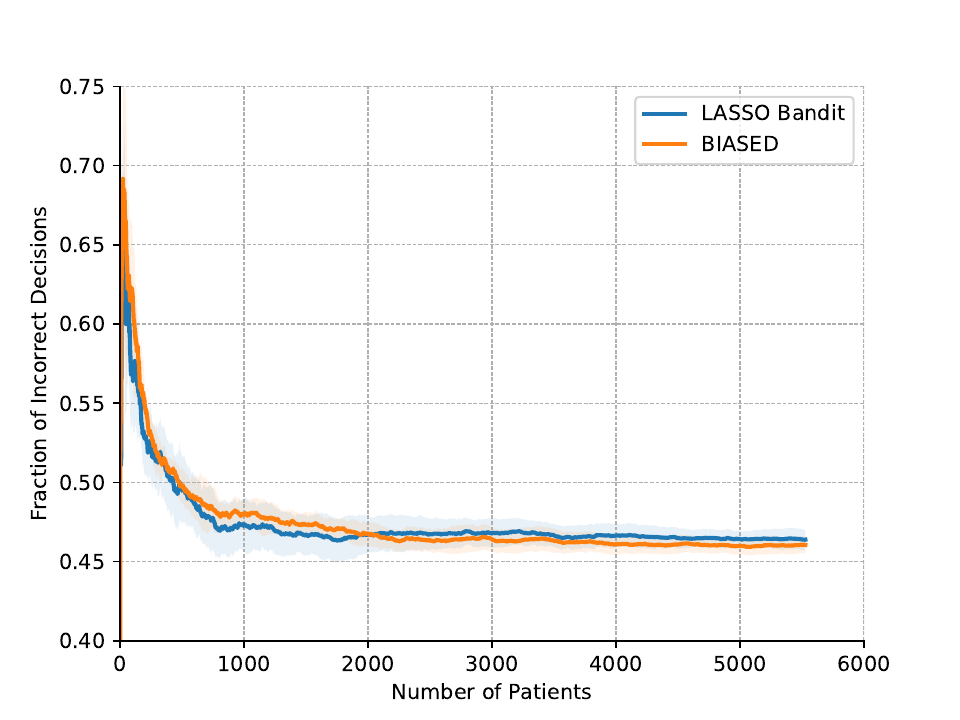}
    \caption{The numerical performance on health-care data for BIASED in comparison to LASSO bandit. The $x$-axis represents the number of patients, while the $y$-axis represents the fraction of incorrectly dosed patients. The presented curves are an average of $5$ independent and random trials and shaded by the $95\%$ confidence intervals. BIASED exhibits a performance that is comparable to the benchmark.}
    \label{f:wfr}
\end{figure}

\subsubsection{Dynamic Pricing in Retail Data}
Dynamic pricing problems with unknown demands \citep{besbes2009dynamic} can be modeled with a batched linear contextual bandit. Following \citet{xu2021learning}, we use a publicly available dataset of customized orders from meal delivery companies \protect\footnote{\protect\url{https://datahack.analyticsvidhya.com/contest/genpact-machine-learning-hackathon-1/}}. We select a subset containing $3584$ orders from the same city during the same period. The information on each order includes meal categories, cuisines, base prices and promotions. For each order $t$, the decision is the price $p_t$ (a continuous variable falling in a range of $[p_{\min},p_{\max}]$). The demand is modeled as a linear function of the contexts (of dimension $22$) and the price \citep{ban2021personalized}:
$$
Y_t = X_t^\top \beta_0 + p_t\cdot X_t^\top \beta_1 + \epsilon_t.
$$
The goal is to maximize the revenue $R_t=Y_t\cdot p_t$, which is known as the reward function. In the algorithm, we make parameter estimations $\hat \beta_0$, $\hat \beta_1$ and determine the price $\{p_t\}_{t=1}^T$. When selecting the price greedily, we let $p_t$ be the truncated maximizer of the estimated revenue function, i.e., $[X_t^\top \hat\beta_0/(-2X_t^\top\hat\beta_1)]_{[p_{\min},p_{\max}]}$. Then, we measure the performance by the regret $\sum_{t=1}^T (R_t^*-\hat Y_t\cdot p_t)$, where we define $R_t^*=Y_t^*\cdot p_t^*$ (where the $Y_t^*$ and $p_t^*$ are the true demand and price given in the dataset), and $\hat Y_t=X_t^\top \hat \beta_0 + p_t\cdot X_t^\top \hat \beta_1$.
Note that since the true parameters are unknown,  the demand $Y_t$ corresponding to the price $p_t$ was predicted using the estimation of the whole training set. We adapt the exploration strategy for dynamic pricing in \citet{ban2021personalized} to the batched sparse bandit, which is named BIASEX and shown in Algorithm \ref{algo:price}. The details can be found in Appendix \ref{s:Experimental_Details}.

\begin{figure}[tp]
    \centering
    \includegraphics[width=0.5\linewidth]{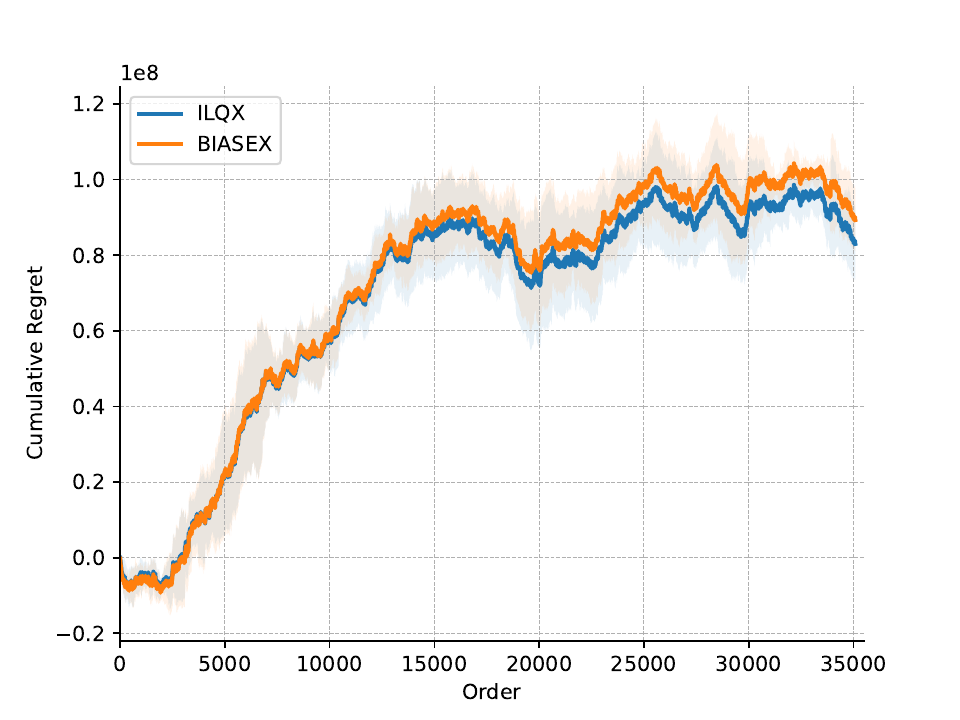}
    \caption{The numerical performance on retail data for BIASEX compared to ILQX (the LASSO-based pricing algorithm). The lines, averaged over $10$ trails and shaded by $95\%$ confidence intervals, indicate that the cumulative regret of BIASEX matches the benchmark.}
    \label{f:price}
\end{figure}

To compare the performance of BIASEX, we ran the algorithm with $100$ batches and compared it with ILQX, the LASSO-based pricing algorithm used in \citet{ban2021personalized}. Each curve was averaged over $10$ trials, and the results are shown in Figure \ref{f:price}. Our batched algorithm performed comparably to the baseline algorithm, demonstrating its effectiveness in solving batched dynamic pricing problems with unknown demands.

\subsection{Low-Rank Bandit}\label{ssec:lrbandit}
\vspace{4pt}
\noindent
{\bf Dataset.} We run the simulation for different arm numbers $K$, matrix row numbers and column numbers $d_1,d_2$ (assume that row and column numbers are the same $d_1=d_2=d$) and rank values $r$: (1) $k=2, d=10, r=2$; (2) $k=10, d=10, r=2$; (3) $k=5, d=20, r=5$. In the three scenarios, we compare the performance of the Batched LOw-rank Bandit (BLOB) among different batches: $L=10$, $L=100$ and $L=T$ (i.e., the sequential version). Then, by fixing $k=2, r=2, L=100$, we plot $d$ as $\{10,20,30,40,50\}$ in the graph to validate the order of regret bound on matrix dimensions. The true matrix parameter is a diagonal matrix with only $r$ nonzero components. The user covariate $\bx_t$ is a $d_1$ by $d_2$ matrix independently drawn from a Gaussian distribution $\cN(\bm 0_{d_1\times d_2}, \bI_{d_1\times d_2})$ and truncated between $[-1,1]$.

\vspace{4pt}
\noindent
{\bf Algorithm Inputs.} Given time steps $T=10000$, we choose $a=9.7$ to obtain $L=100$, and choose $a=26.5$ to derive $L=10$. In all scenarios, the number of the first batch is $t_1=2$. The initialized regularization parameters are chosen as $\lambda_1=\lambda_{2,0}=0.05$. Since the $\epsilon$-decay sampling parameter $t_0=2C_0K$, let $t_0=5r^2K/8$. 
For all the cases, the arm optimality parameter is set as $h=5$. The parameter selections are according to Theorem \ref{th:matirx} and follow the choices for the synthetic data in high-dimensional bandits.

\vspace{4pt}
\noindent
{\bf Results.} The curves of the regret bound are shown in Figure \ref{f:toy_lr}. From Figures \ref{f:reg_lr1}-\ref{f:reg_lr3}, we validate that the upper bound is on the order of $(\log T)^2$ and robust with respect to $L,K$ and $r$. Furthermore, Figure \ref{f:reg_lr4} verifies that the dependence of the cumulative regret at time $T=10000$ on the dimension $d_1,d_2$ is $\log^2(d_1d_2)$.

\begin{figure}[tp]
\centering
\subfigure[$K=2,~d=10,~r=2$]{
\centering
\includegraphics[height=1.6in]{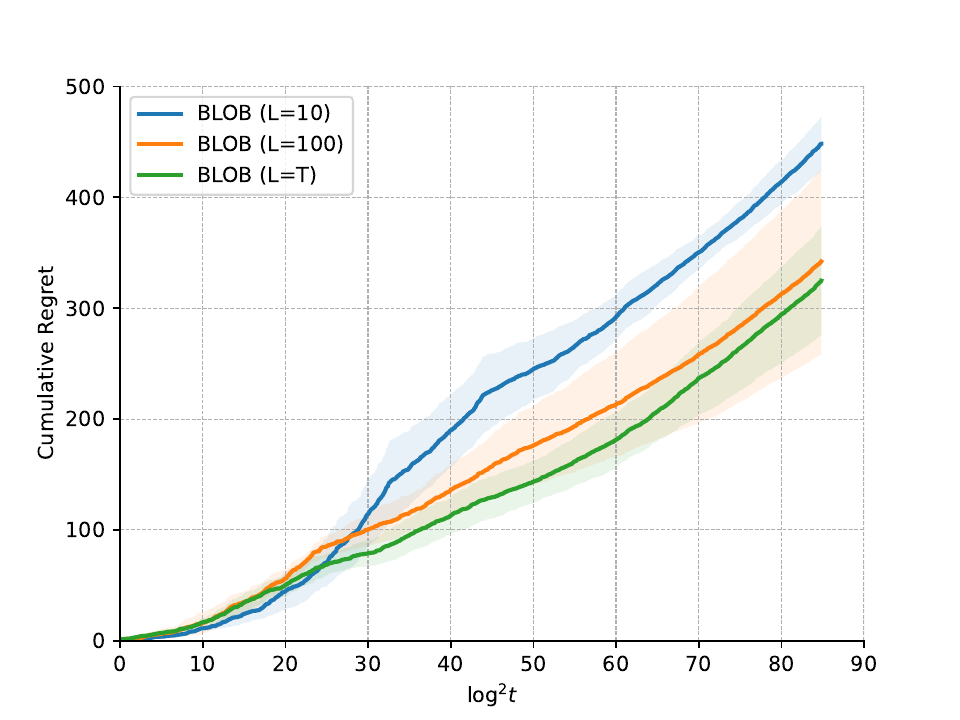}
\label{f:reg_lr1}
}
\subfigure[$K=10,~d=10,~r=2$]{
\centering
\includegraphics[height=1.6in]{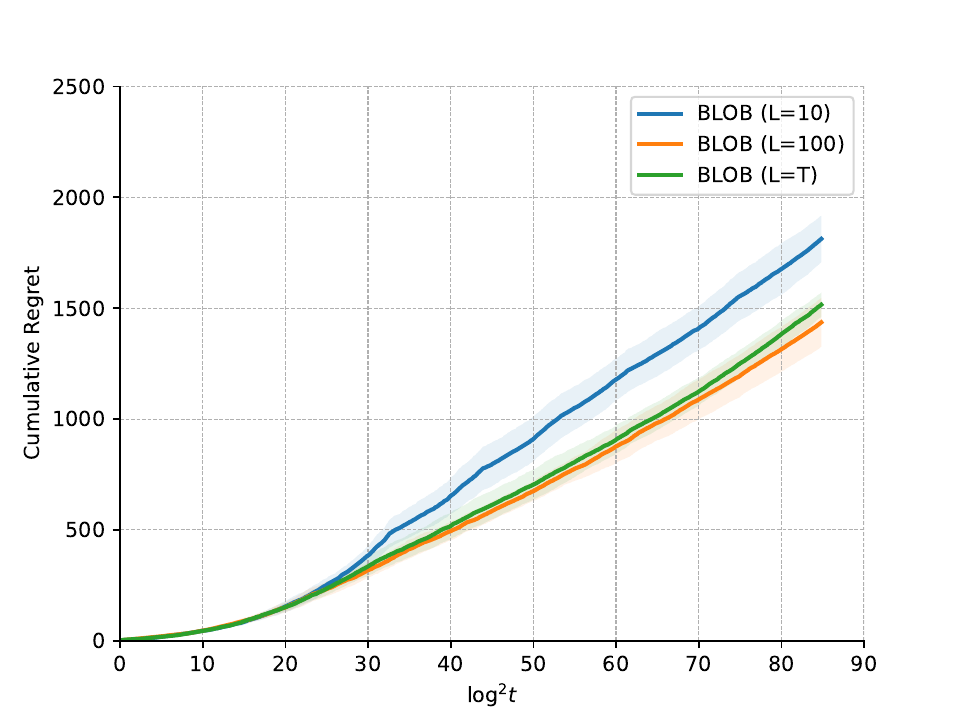}
\label{f:reg_lr2}
}
                                  
\subfigure[$K=5,~d=20,~r=5$]{
\centering
\includegraphics[height=1.6in]{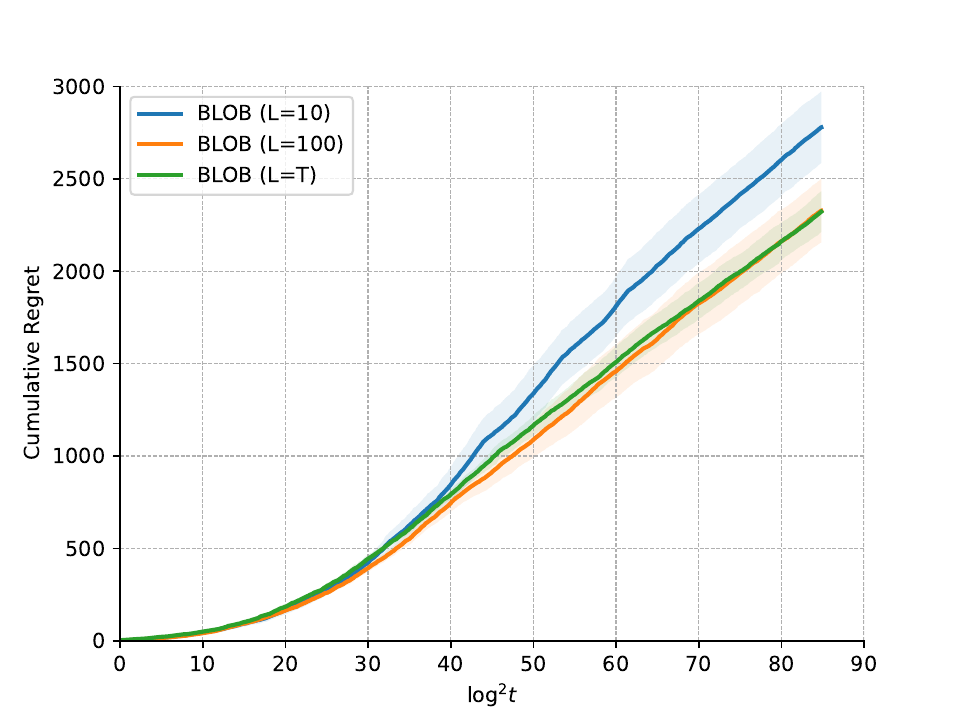}
\label{f:reg_lr3}
}
\subfigure[$K=2,~d=10\sim50,~r=2$]{
\centering
\includegraphics[height=1.6in]{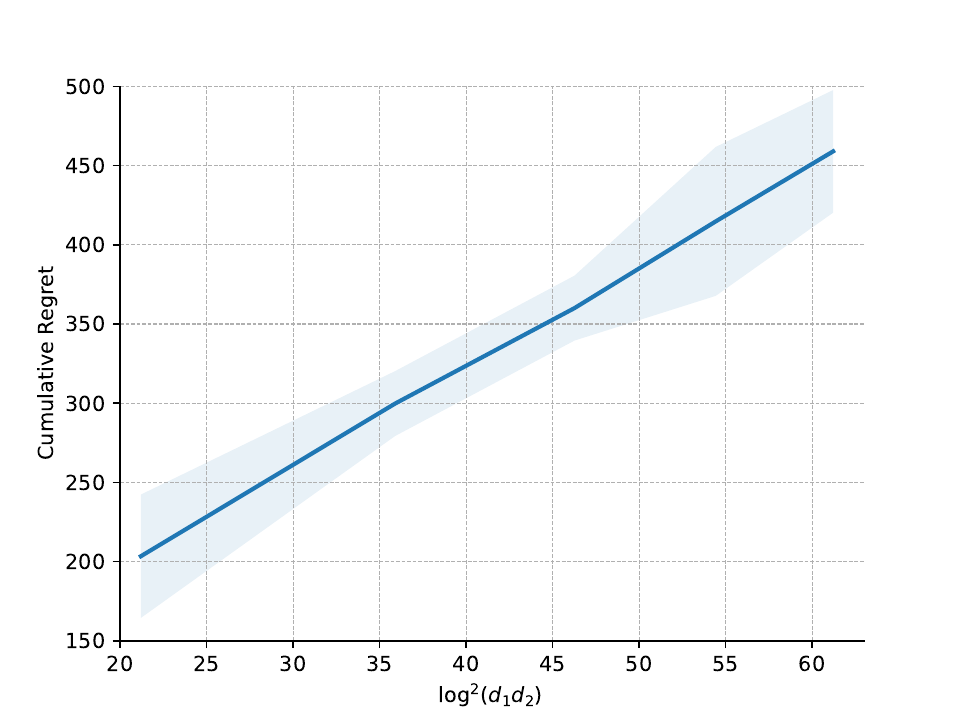}
\label{f:reg_lr4}
}
\centering
\caption{The numerical performance on synthetic data of BLOB under batch $L=10$, $L=100$, and the sequential case ($L=T$). We average the trajectories under $5$ trails and plot the $95\%$ confidence intervals. The results under different values of $K$, $d_1,d_2$, and $r$ show that the cumulative regret of BLOB has an $\cO(\log^2 t)$ dependence on time step $t$ and an $\cO(\log^2(d_1d_2))$ dependence on dimension $d_1,d_2$.}
\label{f:toy_lr}
\end{figure}

\section{Conclusions}
This paper presents two algorithms dealing with the batched version of online learning and the decision-making process in high-dimensional and low-rank matrix data, respectively. We propose a predetermined grid selection to reduce the influence of the batched setting and successfully approximate the regret bound in batched versions to the one in sequential versions. The regret bounds for both proposed algorithms are proved to be $\log^2 T$ in the number of sample sizes $T$ via at least $\Theta(\log T)$ batches. Particularly for low-rank versions, our algorithm first achieves the regret bound with the polynomial logarithmic dependence on both sample sizes and matrix dimensions. In conclusion, the proposed algorithms both maintain high time efficiency and achieve regret bounds close to sequential ones. Finally, the experiments based on synthetic and real data validate that the batched sparse bandit and the batched low-rank bandit perform favorably.

We end by discussing some limitations of the batched sparse bandit and the batched low-rank bandit. First of all, our grid selections are not adaptive and flexible enough to adjust to the performance in previous batches. Second, as mentioned in \citet{bastani2015online}, the forced sampling method applied by our algorithms may bring irreversible consequences. Take medical decision-making as an example, where making decisions randomly sacrifices the patients' health and causes medical tangle. In those cases, it is more appropriate to avoid pure exploration. For example, UCB \citet{auer2002using} explores within the range of confidence sets. For another example, \citet{bastani2021mostly} proposes a greedy-first algorithm to avoid exploration as much as possible.

\bibliographystyle{ims}
\bibliography{graphbib.bib}

\newpage
\begin{appendix}

\section{Proof of Theorem \ref{th:linear}}\label{s:Proof_of_Theorem_1}

We restate Theorem \ref{th:linear} and list the specific choices of the parameters as follows.

\begin{theorem}[Restatement of Theorem \ref{th:linear}]
Suppose that Assumptions \ref{parameterset}-\ref{compatibilitycondition} hold. When $K \ge 2$, $d > 2$, $T \ge C_5$, $L\ge 2$, under Algorithm \ref{algo1} with the grid parameter $a \ge \log t_1$, $\lambda_1 = \phi_0^2 p_{\ast}h/(64s_0 x_{\max})$, $\lambda_{2,l}=(\phi_0^2/s_0)\cdot \sqrt{(\log t_l + \log d)/(2 C_1p_{\ast} t_l)}$ and $\lambda_{2,0}=(\phi_0^2/s_0)\cdot \sqrt{1/(2p_{\ast} C_1 )}$, the cumulative regret up to time $T$ is upper bounded:
$$
\RG(T,L)\le\mathcal{O} \left( K s_0^2 \log d (\log^2 T+\log d) \right),
$$
where the constants $C_1(\phi_0)$, $C_2(\phi_0)$, $C_3(\phi_0, p_{\ast})$, $C_4(\phi_0,p_{\ast})$ and $C_5$ are given by
\begin{gather*}
C_1(\phi_0) = \frac{\phi_0^4}{512 s_0^2 \sigma^2 x_{\max}^2},\quad C_2(\phi_0) = \min \Big( \frac{1}{2},\frac{\phi_0^2}{256 s_0 x_{\max}^2}\Big), \\
C_3(\phi_0, p_{\ast}) = \frac{2C_m R_{\max}(64x_{\max})^2}{p_{\ast}^3 C_1},\quad C_4(\phi_0,p_{\ast})=C_3(\phi_0,p_{\ast})+8\Big(1+\frac{64}{p_{\ast}^2C_2^2}\Big)R_{\max},\\
C_5=\min_{t\in\mathbb{Z}}\left\{ t\ge4K(2C_0+7a+7)\log(t/\tt_0), t\ge2\tt_0(a/\log\tt_0+1)\right\},
\end{gather*}
and we take $C_0 = \max \{10,8/p_{\ast}, 48\log d/(p_{\ast}^2C_2^2),1024\log d x_{\max}^2/(C_1(p_{\ast}^2h^2\}$,
$\tt_0 = \max \{ 7K-1, 2C_0 K-1, \left\lceil (t_0+1)^2/e^2 \right\rceil-1 \}$ and $t_0 = 2C_0K$.
\end{theorem}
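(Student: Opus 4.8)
The plan is to bound $\RG(T,L)$ by the three-group partition of the $T$ time steps sketched in §\ref{sub404}, controlling the cumulative expected regret on each group separately and using that every per-step regret is at most $R_{\max}=bx_{\max}$ by Assumption \ref{parameterset}. Set $l_0=\min\{l'\ge 1\mid t_{l'}\ge C_5\}$. Group~1 collects the burn-in batches $l\le l_0$ together with all forced-sampling steps; Group~2 collects the non-forced steps in batches $l>l_0$ on which the good event $\mathcal{A}_{l-1}=\{\|\betah(\cR_{k,l-1},\lambda_1)-\beta_k^{\true}\|_1\le h/(4x_{\max})\ \forall k\}$ fails; Group~3 collects the remaining non-forced steps in batches $l>l_0$, i.e.\ those on which $\mathcal{A}_{l-1}$ holds. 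These three groups cover all $T$ steps, so it suffices to sum their contributions and apply a union bound over the $K$ arms.

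First I would dispose of Groups~1 and 2 by crude counting. For Group~1, the grid recursion \eqref{eqgrid} has bounded growth ratio, so $t_{l_0}$ is a constant multiple of $C_5$ and hence independent of $T$; and by Lemma \ref{boundsrandom} the total number of forced samples up to $T$ is $O(K\log T)$. Bounding each such step by $R_{\max}$ gives a contribution of order $R_{\max}(t_{l_0}+K\log T)$. For Group~2, Proposition \ref{prop3} gives $\mathbb{P}(\mathcal{A}_{l-1}^{c})\le 7K/(t_{l-1}+1)$, so the expected number of non-forced steps in batch $l$ on which $\mathcal{A}_{l-1}$ fails is at most $(t_l-t_{l-1})\cdot 7K/(t_{l-1}+1)$; since $t_l/t_{l-1}$ stays bounded this sums to $O(K\log^2 T)$ worst-case regret.

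The main work, and the hard part, is Group~3. Here $\mathcal{A}_{l-1}$ holds, so for $x\in\cU_k$ the screening stage of Algorithm \ref{algo101} never discards the optimal arm; and, conditioning additionally on the high-probability event of Proposition \ref{prop4} that $M_k(t_{l-1})\ge p_\ast t_{l-1}/8$ supplies a constant fraction of i.i.d.\ whole-sample points, Proposition \ref{prop5} bounds the whole-sample $\ell_1$ error by $\chi_{l-1}:=32\sqrt{2(\log t_{l-1}+\log d)/(t_{l-1}\,p_\ast^3 C_1)}$ for each $k\in\cK_o$. The key step is to convert this estimation error into per-step regret via the margin condition (Assumption \ref{margincondition}): a candidate arm other than the best is selected at a step in batch $l$ only when two estimated rewards lie within $O(x_{\max}\chi_{l-1})$ of each other, an event of probability $O(C_m R_{\max}\,x_{\max}\chi_{l-1})$, while the incurred regret is itself $O(x_{\max}\chi_{l-1})$. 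Hence the expected per-step regret is $O(C_m R_{\max}\,x_{\max}^2\chi_{l-1}^2)=O\!\big(s_0^2(\log t_{l-1}+\log d)/t_{l-1}\big)$ after substituting $1/C_1\asymp s_0^2/\phi_0^4$.

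Finally I would sum this per-step bound over the steps of each batch and over batches. By \eqref{eqgrid} the batch length is $t_l-t_{l-1}\approx a\,t_{l-1}/\big((l-1)\log t_{l-1}\big)$, so the factor $t_{l-1}$ in the denominator of the per-step regret cancels and batch $l$ contributes of order $s_0^2\,a\,(\log t_{l-1}+\log d)/\big((l-1)\log t_{l-1}\big)$. Bounding $(\log t_{l-1}+\log d)/\log t_{l-1}\le 1+\log d$ crudely and $\sum_{l}1/(l-1)\le\int_1^{L}\md l/l=\log L$, then substituting $a=c\log^2 T/\log L$, collapses the batch sum to $O(s_0^2\log d\,\log^2 T)$. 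Adding the Group~1 and Group~2 contributions and the per-arm union bound over $[K]$ yields $\RG(T,L)\le\mathcal{O}\big(Ks_0^2\log d(\log^2 T+\log d)\big)$, with the residual additive $\log d$ tracking the $T$-independent burn-in threshold $C_5$ and the failure probabilities. The two delicate points I expect to require care are (i) verifying that the grid \eqref{eqgrid} meets the hypotheses of Proposition \ref{prop4} — monotonicity and sub-exponential growth $t_l/t_{l-1}\le\alpha$ — so that the constant-fraction i.i.d.\ guarantee feeding Proposition \ref{prop5} holds uniformly for every $l>l_0$, and (ii) replacing the discrete batch sums by integrals without forfeiting the crucial $a\log L=\Theta(\log^2 T)$ scaling.
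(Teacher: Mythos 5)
Your proposal is correct and follows essentially the same route as the paper's proof: the identical three-group decomposition with $l_0=\min\{l'\mid t_{l'}\ge C_5\}$, the crude $R_{\max}$ counting for Groups~1 and~2 via Lemma \ref{boundsrandom} and Proposition \ref{prop3}, the margin-condition/estimation-error balance $\delta\asymp x_{\max}\chi_{l}$ combined with Proposition \ref{prop5} for Group~3, and the cancellation of $t_{l-1}$ against the batch length from \eqref{eqgrid} followed by $\sum_l 1/(l-1)\le\log L+1$ and $a=\Theta(\log^2 T/\log L)$. The only slight looseness is that the margin condition controls the probability that the \emph{true} rewards are within $\delta$ (the paper splits on the event $\mathcal{E}(t,\delta)_{1,j}$ about true gaps, handling the large-gap case via the estimator tail), but this is a presentational compression rather than a gap.
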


In this section, to demonstrate Theorem \ref{th:linear}, we first show the convergence of random-sample estimators and whole-sample estimators in $\S$\ref{ss:Convergence of Random-Sample Estimators} and $\S$\ref{ss:Convergence of Whole-Sample Estimators} respectively. Then, we bound the cumulative regret with the convergence results in $\S$\ref{ss:Bounding the Cumulative Regret}.

\subsection{Convergence of Random-Sample Estimators}\label{ss:Convergence of Random-Sample Estimators}

The convergence inequality of random-sample estimators is stated in Proposition \ref{prop3}. The proof does not follow from Lemma \ref{LASSOtailineq} directly, since according to Assumption \ref{compatibilitycondition}, we have only assumed that the compatibility condition holds for $\Sigma_k=\mathbb{E}_{X\sim\mathcal{P}_X}(XX^{\mathrm{T}}|X\in \cV_k)$ rather than $\mathbb{E}_{X\sim\mathcal{P}_X}(XX^{\mathrm{T}}
)$. We denote the forced-sample size for arm $k\in[K]$ up to batch $l$ by $n_k = \left| \mathcal R_{k,l} \right|$ and the sub-sample set $\{t'\in\mathcal{R}_{k,l}|X_{t'}\in \cV_k\}$ as $\cA^{'}$. We will solve this problem by showing that $\cA^{'}$ is a set of i.i.d. samples from $\mathcal{P}_{X|X\in \cV_k}$, and applying Lemma \ref{LASSOtailniid}.

\begin{proof}[Proof of Proposition \ref{prop3}]
To begin with, we will prove that $\cA^{'}$ is a set of i.i.d. samples from $\mathcal{P}_{X|X\in \cV_k}$. For each $t\in\mathcal{R}_{k,l}$, $X_t$ is drawn randomly from $\mathcal{P}_X$ and therefore with probability at least $p_{\ast}$, $X_t \in \cV_k$, i.e., $t\in\cA^{'}$. Additionally, $\{X_t\in \cV_k\}$ are independent for different values of $t\in\mathcal{R}_{k,l}$, since the original sequence $\{X_t\}_{t\in\mathcal{R}_{k,l}}$ is i.i.d., each $t\in\cA^{'}$, $X_t$ is an i.i.d. sample of $\mathcal{P}_{X|X\in \cU_k}$.

Furthermore, define the event 
$$
\cE_1 =  \big\{|\cA^{'}|/|\cA|\ge p_{\ast}/2\big\} \cap \big\{n_k\ge C_0\log(t_l+1)/2\big\}.
$$
Then, by invoking Lemma \ref{lem:ap005} we have
\#
\PP( \cE_1)&=\PP\big( n_k\ge C_0\log(t_l+1)/2\big)\cdot\PP\big( |\cA^{'}|/|\cA|\ge p_{\ast}/2 \mid n_k\ge C_0\log(t_l+1)/2\big)\nonumber\\
&\le(1-2/(t_l+1))^2\le 1-4/(t_l+1).\label{eq:d124}
\#
Suppose that $\cE_1$ holds, since $C_0 \ge \max \{10,8/p_{\ast}, 48\log d/(p_{\ast}^2C_2^2)\}$ and $48\log d/(p_{\ast}^2C_2^2)>12\log d/(p_{\ast}C_2^2)$, we have $n_k \ge C_0\log(t_l+1)/2 \ge C_0p_{\ast}/2 > 6\log d/p_{\ast}C_2^2$. Thus by combining Lemma \ref{LASSOtailniid} and \ref{ap004}, and by $t_0 = 2C_0K$, $t_l \ge (t_0 +1)^2/e^2 - 1$, $p=p_{\ast}$, $\chi = h/(4x_{\max})$ and $\lambda_1 = (\phi_0^2 p_{\ast}h)/(64 s_0 x_{\max})$, we know that
\$
&\mathbb{P}\Big( \| \betah(\cR_{k,l}, \lambda_1) - \beta_k^{\true} \|_1 \ge \frac{h}{4x_{\max}} \,\Big|\, \cE_1 \Big)\\
&\qquad\le 2\exp\Big(-p_{\ast}^2 C_1(\phi_0)C_0\frac{h^2}{512x_{\max}^2}\log(t_l+1)+\log d\Big)+\exp\Big(-\frac{p_{\ast} C_2(\phi_0)^2C_0\log(t_l+1}{4}\Big)\\
&\qquad\le 2\exp\big( -2\log d\log(t_l+1)+\log d \big) + \frac{1}{t_l+1}
\le\frac{3}{t_l+1},
\$
where the first inequality is deduced by using $C_0 \ge \max \{ 4/(p_{\ast}C_2^2),\log d/C_1\cdot(32x_{\max}/(p_{\ast}h))^2 \}$, and the second inequality is due to $\log d\le\log(t_l+1)(2\log d-1)$. Finally, by invoking \eqref{eq:d124}, we obtain the following result
\$
&\mathbb{P}\Big( \| \betah(\cR_{k,l}, \lambda_1) - \beta_k^{\true} \|_1 \ge \frac{h}{4x_{\max}}\Big)\\
&\qquad\le \PP(\cE_1^c) \PP\Big( \| \betah(\cR_{k,l}, \lambda_1) - \beta_k^{\true} \|_1 \ge \frac{h}{4x_{\max}} \,\Big|\, \cE_1^c \Big) + \PP(\cE_1) \PP\Big( \| \betah(\cR_{k,l}, \lambda_1) - \beta_k^{\true} \|_1 \ge \frac{h}{4x_{\max}} \,\Big|\, \cE_1 \Big)\\
&\qquad\le \PP(\cE_1^c) + \PP\Big( \| \betah(\cR_{k,l}, \lambda_1) - \beta_k^{\true} \|_1 \ge \frac{h}{4x_{\max}} \,\Big|\, \cE_1 \Big) \le \frac{4}{t_l+1} + \frac{3}{t_l+1} = \frac{7}{t_l+1}.
\$
Therefore, we conclude the proof of Proposition \ref{prop3}.
\end{proof}

\subsection{Convergence of Whole-Sample Estimators}\label{ss:Convergence of Whole-Sample Estimators}
To demonstrate the convergence of whole-sample estimators, we first show in $\S$\ref{sss:Proof of Proposition 5.3} that sufficient i.i.d. samples are guaranteed in the sample set for each optimal arm $k\in\cK_o$. Then, we invoke Lemma \ref{LASSOtailniid} to attain the tail inequality for whole-sample estimators in $\S$\ref{sss:Proof of Proposition 5.4}.

\subsubsection{A General Form}\label{sss:Proof of prop4general}
\begin{proof}[Proof of Proposition \ref{prop4general}]
Recall in $\S$\ref{sub403} that we define $M_k(t)$ as
$$
M_k(t)= \mathbb{E}\Big(\sum_{m=1}^{l}\sum_{\tau=t_{m-1}+1}^{t_m}\mathbbm{1}(X_{\tau}\in \cU_k,\mathcal A_{m-1},X_{\tau}\notin \mathcal{R}_{k,m}) \,\Big|\, \mathcal{F}_t\Big).
$$
To begin with, for each optimal arm $k\in\cK_o$, we will prove that if $X_t\in\mathcal{B}_{k,m}$, the arm $k$ will definitely be chosen, and furthermore the components of $\mathcal{B}_{k,m}$ are i.i.d.(with distribution $\mathcal{P}_{X|X\in \cU_k}$). For any $j\ne k$, since $X_t\in\cU_k$ and $\cA_{m-1}$ holds, we have
\$
&X_t^{\top}\big( \betah(\cR_{k,l}, \lambda_1) -\betah(\cR_{j,l}, \lambda_1)\big)\\ &\qquad=X_t^{\top}\big( \betah(\cR_{k,l}, \lambda_1)-\beta_k^{\true}\big) - X_t^{\top}\big( \betah(\cR_{j,l}, \lambda_1)-\beta_j^{\true}\big) + X_t^{\top}\big( \beta_i^{\true}-\beta_j^{\true}\big)\\
&\qquad\ge -x_{\max}\cdot\frac{h}{4x_{\max}} -x_{\max}\cdot\frac{h}{4x_{\max}} +h\ge h/2,
\$
which means that
$$
X^{\mathrm{T}}\betah(\cR_{k,l}, \lambda_1) \ge\max_{j\ne k} \big( X^{\mathrm{T}}\betah(\cR_{j,l}, \lambda_1) \big)+\frac{h}{2}.
$$
As a result, the agent will only select arm $k$ at the first step. Additionally, since $\mathcal A_{m-1}$ only relies on samples in $\mathcal{R}_{k,m-1}$, the $\mathcal A_{m-1}$ is independent of $X_{\tau}$. Therefore, random variables $\{X_{\tau}|\mathcal A_{m-1}\}$ are i.i.d. samples from $\mathcal{P}_X$. Now, since the presence of each $X_{\tau}$ in $\cU_k$ or not in $\mathcal{R}_{k,m}$ is simply rejection sampling, $X_{\tau}$ in $\mathcal{B}_m$ is distributed i.i.d. from $\mathcal{P}_{X|X\in \cU_k}$. Therefore, we have $M_k(t_{l})=\,|\,\bigcup_{m=1}^{l}\mathcal{B}_m\,|\,= \sum_{m=1}^{l}\sum_{\tau=t_{m-1}+1}^{t_m}\mathbbm{1}(X_{\tau}\in \cU_k,\mathcal A_{m-1},X_{\tau}\notin \mathcal{R}_{k,l})$ which counts a part of i.i.d. samples of the whole-sample set up to batch $l$. Moreover, for the reason that 
\$
\mathbb{E}\big( M_k(t) \,|\, \mathcal{F}_{t-1} \big) &= \mathbb{E}\Big( \big( \sum_{m=1}^{l}\sum_{\tau=t_{m-1}+1}^{t_m}\mathbbm{1}(X_{\tau}\in \cU_k,\mathcal A_{m-1},X_{\tau}\notin \mathcal{R}_{k,m}) \,\Big|\, \mathcal{F}_{t} \big) \,\big|\, \mathcal{F}_{t-1} \Big)\\
&= \mathbb{E}\Big(\sum_{m=1}^{l}\sum_{\tau=t_{m-1}+1}^{t_m}\mathbbm{1}(X_{\tau}\in \cU_k,\mathcal A_{m-1},X_{\tau}\notin \mathcal{R}_{k,m}) \,\Big|\, \mathcal{F}_{t-1}\Big)\\
&= M_k(t-1),
\$
and that covariates $X_t$ are generated randomly, we know that $\{M_k(t)\}$ is a martingale with $\,|\, M_k(t+1)-M_k(t)\,|\,\le1$, we can use $M(0)$ to bound the value of $M_k(t_{l+1})$ with Azuma’s inequality:
\$
\mathbb{P}\Big(|M_k(t_{l})-M_k(0)| \geq \frac{1}{2} M_k(0)\Big) \leq \exp \Big(\frac{-M_k(0)^{2} / 4}{2(t_{l}+1)}\Big).
\$
So we further get:
\#
 \mathbb{P}\Big(M_k(t_{l}) \leq \frac{1}{2} M_k(0)\Big) \leq \exp \Big(\frac{-M_k(0)^{2} / 4}{2(t_{l}+1)}\Big).\label{eq:mktail}
\#
Then, we can express $M_k(0)$ as follows:
\$
M_k(0)&=\mathbb{E}\Big(\sum_{m=1}^{l}\sum_{\tau=t_{m-1}+1}^{t_m}\mathbbm{1}(X_{\tau}\in \cU_k,A_{m-1},X_{\tau}\notin \mathcal{R}_{k,l})\Big)\nonumber\\
&=\sum_{m=1}^{l}\sum_{\tau=t_{m-1}+1}^{t_m}\mathbb{P}(X_{\tau}\in \cU_k,A_{m-1},X_{\tau}\notin \mathcal{R}_{k,l})\nonumber\\
&=\sum_{m=1}^{l}\sum_{\tau=t_{m-1}+1}^{t_m}\mathbb{P}(X_{\tau}\in \cU_k)\mathbb{P}(A_{m-1})\mathbb{P}(X_{\tau}\notin \mathcal{R}_{k,l}),
\$
where the last equality comes from the fact that $\{X\in \cU_k\}$ is independent of $\{\cA_{m-1},X_{\tau}\notin \mathcal{R}_{k,l}\}$ and that $\{X_{\tau}\notin \mathcal{R}_{k,l}\}$ is independent of $\{\cA_{m-1}\}$. Additionally, by invoking Proposition \ref{prop3} with $t_m\ge(t_0+1)^2/e^2-1$ for all arms in $[K]$, we have $\PP(\cA_m)\ge1-7K/(t_m+1)$.

Then, let $l_0=\min\{ l'\ge1 \mid t_{l'}\ge\tt_0\}$, where $\tt_0=\max \left\{ 7K-1, 2C_0 K-1, \left\lceil (t_0+1)^2/e^2 \right\rceil-1 \right\}$. If $l_0\ge 2$, we have 
\$
t_{l_0}=g(t_{l_0-1})\le g(\tt_0)\le t_l/2,
\$
where the first inequality hold since $g$ is an increasing function and $t_{l_0-1}\le\tt_0$, and the second inequality uses $t_l\ge C_5'\ge2g(\tt_0)$.

If $l_0=1$, we write that 
\$
t_l\ge t_2=g(t_1)\ge 2t_1\ge 2t_0.
\$
Then, the $M_k(0)$ can be lower bounded by
\#
M_k(0)\ge p_{\ast}\sum_{m=l_0+1}^{l}\Big(1-\frac{7K}{t_{m-1}+1}\Big)\sum_{\tau=t_{m-1}+1}^{t_m}\Big(1-\frac{2C_0K}{\tau}\Big).\label{eq:mk0bound0}
\#
By invoking Lemma \ref{lem:monint}, it follows that
\$
\sum_{\tau=t_{m-1}+1}^{t_m}\Big(1-\frac{2C_0K}{\tau}\Big)\ge\int_{t_{m-1}}^{t_m}\Big(1-\frac{2C_0K}{\tau}\Big)\md\tau =t_m-t_{m-1}-2C_0K\log\Big(\frac{t_m}{t_{m-1}}\Big),
\$
which is taking back into \eqref{eq:mk0bound0} to obtain that
\#
M_k(0)&\ge p_{\ast}\sum_{m=l_0+1}^{l}\Big(1-\frac{7K}{t_{m-1}+1}\Big)\cdot\Big( t_m-t_{m-1}-2C_0K\log\Big(\frac{t_m}{t_{m-1}}\Big)\Big)\nonumber\\
&\ge p_{\ast}\sum_{m=l_0+1}^{l}\Big(1-\frac{7K}{t_{m-1}+1}\Big)\cdot\left( t_m-t_{m-1}\right)-2p_{\ast}C_0K\log\Big(\frac{t_l}{t_{l_0}}\Big)\nonumber\\
&\ge p_{\ast}\sum_{m=l_0+1}^{l}\Big(1-\frac{7K}{t_{m}/\alpha+1}\Big)\cdot\left( t_m-t_{m-1}\right)-2p_{\ast}C_0K\log\Big(\frac{t_l}{t_{l_0}}\Big),\label{eq:gmk0bound1}
\#
where the last inequality is due to $t_m=g(t_{m-1})\le\alpha t_{m-1}$. Similar to Lemma \ref{lem:monint}, we have
\$
\sum_{m=l_0+1}^{l}\Big(1-\frac{7K}{t_{m}/\alpha+1}\Big)\left( t_m-t_{m-1}\right) &\ge\int_{t_{l_0}}^{t_l}\Big(1-\frac{7K}{t/\alpha+1}\Big)\md t\\
&= t_l-t_{l_0}-7K\alpha\log\Big(\frac{t_l+\alpha}{t_{l_0}+\alpha}\Big).
\$
By taking the result above back into \eqref{eq:gmk0bound1}, we get
\#
M_k(0)&\ge p_{\ast}\Big((t_l-t_{l_0})-7K\alpha\log\Big(\frac{t_l+\alpha}{t_{l_0}+\alpha}\Big)-2C_0K\log\Big(\frac{t_l}{t_{l_0}}\Big)\Big)\nonumber\\
&\ge p_{\ast}\Big(\frac{t_l}{2}-2K(C_0+3\alpha)\log\Big(\frac{t_l}{t_{l_0}}\Big)\Big) \ge \frac{p_{\ast}t_l}{4},\label{eq:gmk0bound2}
\#
where the second inequality is by applying $t_{l_0}\le t_l/2$, and the last inequality is due to $t_l\ge C_5^{'}$. Eventually, we obtain the desired inequality by combining \eqref{eq:mktail} and \eqref{eq:gmk0bound2}.
\end{proof}

\subsubsection{Proof of Corollary \ref{prop4}}\label{sss:Proof of Proposition 5.3}
\begin{proof}[Proof of Corollary \ref{prop4}]
It is obvious that
$$
g(t_l)=\Big\lfloor \Big(\frac{a}{l\log t_l}+1\Big)t_l\Big\rfloor
$$
is non-decreasing. Additionally, we have
\$
t_{l-1} \ge \frac{t_l}{\alpha/((l-1)\log t_{l-1})+1} \ge \frac{t_l}{\alpha+1}.
\$
Moreover, since $a \ge \log t_1$, we have 
\$
t_l \ge t_2 = \Big\lfloor \big( \frac{a}{\log t_1} +1 \big) t_1 \Big\rfloor \ge 2t_1.
\$
Hence, by invoking Proposition \ref{prop4} with $\alpha'=\alpha+1$, we derive the result.
\end{proof}

\subsubsection{Proof of Proposition \ref{prop5}}\label{sss:Proof of Proposition 5.4}
\begin{proof}[Proof of Proposition \ref{prop5}]
Since 
$$
\Big|\bigcup_{m=1}^{l}\mathcal{B}_m\Big| =M_k(t_{l})=\sum_{m=1}^{l}\sum_{\tau=t_{m-1}+1}^{t_m}\mathbbm{1}(X_{\tau}\in \cU_k,\mathcal A_{m-1},X_{\tau}\notin \mathcal{R}_{k,l}),
$$
Applying Proposition \ref{prop4}, we have
\$
\mathbb{P}\Big(\Big|\bigcup_{m=1}^{l}\mathcal{B}_m\Big|\ge\frac{p_{\ast}t_l}{8}\Big)\ge 1-\exp\Big(-\frac{p_{\ast}^2(t_{l}-1)}{128}\Big).
\$
Let $\mathcal{A}=\mathcal{W}_{k,l}$, $\cA^{'}=\cup_{m=1}^{l}\mathcal{B}_m$, $p=p_{\ast}/4$. Since $|\cA|\ge |\cA'| \ge p_*t_l/8$, $t_l\ge C_5\ge 4C_0$ and $C_0\ge 48\log d/(p_*^2C_2^2)$, we have $|\cA|$ satisfies $|\cA|\ge 6\log d/(pC_2^2)$. Therefore, we can apply Lemma \ref{LASSOtailniid} with $\lambda=\chi\phi_0^2p_{\ast}/(16s_0)$ to obtain the following result:
\$
\mathbb{P}\Big(\| \betah(\cW_{k,l}, \lambda) - \beta_k^{\true}\|_1 > \chi\Big)\le& 2\exp\Big(-C_1\Big(\frac{\phi_0\sqrt{p_{\ast}/4}}{2}\Big)\frac{t_lp_{\ast}}{8}\chi^2+\log d\Big)+\exp \Big(-\frac{p_{\ast}^2C_2^2(\phi_0)t_l}{64}\Big)\\
&\qquad +\exp \Big(-\frac{p_{\ast}^2(t_{l}-1)}{128}\Big)\\
\le& 2\exp\Big(-\frac{t_lp_{\ast}^3C_1(\phi_0)}{2048}\chi^2+\log d\Big)+2\exp \Big(-\frac{p_{\ast}^2C_2^2(\phi_0)t_l}{64}\Big),
\$                                                                           
where the last inequality is deduced by using $C_2(\phi_0)\le 1/2$ and $t_l-1 \ge t_1 /2$. Taking 
$$
\chi=32\sqrt{\frac{2(\log t_l+\log d)}{t_lp_{\ast}^3C_1(\phi_0)}},
$$
we get
$$
\mathbb{P}\Big(\| \betah(\cW_{k,l}, \lambda_{2,l}) - \beta_k^{\true}\|_1 \ge 32\sqrt{\frac{2(\log t_l +\log d)}{t_lp_{\ast}^3C_1(\phi_0)}}\Big)\le\frac{2}{t_l}+2\exp\Big(-\frac{t_lp_{\ast}^2C_2^2(\phi_0)}{64}\Big),
$$
where the choice of $\chi$ implies $\lambda=\lambda_{2,l}$. Hence, we conclude the proof of Proposition \ref{prop5}.
\end{proof}

\subsection{Bounding the Cumulative Regret}\label{ss:Bounding the Cumulative Regret}
\begin{proof}[Proof of Theorem \ref{th:linear}]
Recall that the cumulative regret up to time $T$, batch $L$ is divided into three groups:
\#\label{eq:regret decompose 3 parts}
\RG(T,L) =& \underbrace{\sum_{l=1}^L\sum_{t=t_{l-1}+1}^{t_l} \rg_{t,l}\mathbbm{1}\big(l\le l_0~\text{or}~X_t\in\mathcal{R}_{k,L}:~ k \in [K]\big)}_{\RG_1(T,L)}\notag\\
&\qquad +\underbrace{\sum_{l=1}^L\sum_{t=t_{l-1}+1}^{t_l} \rg_{t,l}\mathbbm{1}\big(l> l_0~\text{and}~\mathcal A_{l-1}^c\big)}_{\RG_2(T,L)} + \underbrace{\sum_{l=1}^L\sum_{t=t_{l-1}+1}^{t_l} \rg_{t,l}\mathbbm{1}\big(l> l_0~\text{and}~\mathcal A_{l-1}\big)}_{\RG_3(T,L)}.
\#
In the sequel, we will bound the regrets in each group separately.

\vspace{4pt}
\noindent
\textbf{Regrets in Group 1.}\\
We bound $RG_1(T,L)$ as
\$
\RG_1(T,L)\le R_{\max}\Big(t_{l_0}+\sum_{t=t_{1}+1}^{T}\mathbbm{1}(X_t\in\mathcal{R}_{k,L},k\in[K])\Big)\le R_{\max}\Big(t_{l_0}+\sum_{k=1}^{K}n_k\Big).
\$
From Lemma \ref{ap004}, we find that if $t_0 = 2C_0K$, $C_0 =\max\{10, 8/p_{\ast}\}$, and $T \ge (t_0 +1)^2/e^2 - 1$, the following inequality can be obtained:
\$
\mathbb{P}\left(n_k>6C_0\log T\right)\le\frac{2}{T+1},
\$
which indicates that
\#\label{eq:P(sum n_k > log T)}
\mathbb{P}\big(\sum_{k=1}^{K}n_k>6C_0K\log T\big)\le\sum_{k=1}^{K}\mathbb{P}\big(n_k>6C_0K\log T\big)\le\frac{2K}{T+1}.
\#
Thus we have
\$
\RG_{1}(T,L) \leq& R_{\max }\big(t_{l_0}+\sum_{k=1}^{K}n_k\big)\nonumber\\
=&R_{\max }\Big(\sum_{k=1}^{K} n_{k}\,\Big|\,\sum_{k=1}^{K} n_{k}>6 C_{0}K\log T\Big) \mathbb{P}\Big(\sum_{k=1}^{K} n_{k}>6 C_{0}K \log T\Big) \nonumber\\
&\qquad+R_{\max }\Big(\sum_{k=1}^{K} n_{k}\,\Big|\,\sum_{k \in \mathcal{K}} n_{k} \leq 6 C_{0}K \log T\Big) \mathbb{P}\Big(\sum_{k=1}^{K} n_{k} \leq 6 C_{0}K \log T\Big)+R_{\max } t_{l_0}.
\$
By combining the inequality above and \eqref{eq:P(sum n_k > log T)}, we get
\#\label{eq:RG_1 linear}
\RG_{1}(T,L) \leq& R_{\max } T \frac{2K}{T+1}+R_{\max } 6 C_{0}K\log T\big(1-\frac{2K}{T+1}\big)+R_{\max }t_{l_0} \nonumber\\
\leq& R_{\max }K\left(2+6 C_{0} \log T\right)+R_{\max} t_{l_0}.
\#

\vspace{4pt}
\noindent
\textbf{Regrets in Group 2.}\\
From Proposition \ref{prop3}, we have
\$
\mathbb{P}\Big( \| \hat{\beta}(\mathcal{R}_{k,l},\lambda_1)-\beta_k\|_1 \le \frac{h}{4x_{\max}}\Big)\ge 1-\frac{7}{t_l+1},~\forall k\in[K],
\$
which implies that
\$
\mathbb{P}\left( \mathcal A_l\right)\ge1-\frac{7K}{t_l+1}.
\$
Therefore, $\RG_2(T,L)$ can be bounded as follows:
\#\label{eq:RG_2 linear}
\RG_2(T,L)&\le\mathbb{E}\Big(\sum_{l=l_0}^{L}(t_l-t_{l-1})\mathbbm{1}(\mathcal A_{l-1}^c)R_{\max}\Big) \le \sum_{l=l_0}^{L}(t_l-t_{l-1})\frac{7K}{t_{l-1}+1}R_{\max}\notag\\
&\le 7KR_{\max}\sum_{l=l_0}^L\frac{(a t_{l-1})/((l-1)\log t_{l-1})}{t_{l-1}+1} \le 7KR_{\max}a\sum_{l=l_0}^L\frac{1}{l-1}\notag\\
&\le 7KR_{\max}a(\log L+1),
\#
where the third equality comes from the grid structure:
$$
t_l=\Big\lfloor (\frac{a}{(l-1)\log t_{l-1}}+1)t_{l-1}\Big\rfloor,~l=2,\ldots,L-1.
$$

\vspace{4pt}
\noindent
\textbf{Regrets in Group 3.}\\
Without loss of generality, at time $t$ in batch $l+1$ ($t=t_l+1,\ldots,t_{l+1}$), it is assumed that arm $i$ is true optimal arm. Then, we can bound the regret $\rg_t$ as follows:
\#\label{apeg04}
\rg_{t,l} &=\mathbb{E}\Big(\sum_{j=1}^K\mathbbm{1}\big(j=\arg \max _{k \in [K]} \big(X_{t}^{\mathrm{T}}\betah(\cW_{k,l}, \lambda_{2,l})\big)\big)\cdot\big(X^{\mathrm{T}}_{t}\beta_{i}^{\text {true}}-X^{\mathrm{T}}_{t}\beta_{j}^{\text {true}}\big)\Big) \nonumber\\
& \leq \mathbb{E}\Big(\sum_{j \neq i} \mathbbm{1}\big(X^{\mathrm{T}}_{t}\betah(\cW_{j,l}, \lambda_{2,l}) > X^{\mathrm{T}}_{t}\betah(\cW_{i,l}, \lambda_{2,l})\big)\cdot\big(X^{\mathrm{T}}_{t}\beta_{i}^{\text {true}}-X^{\mathrm{T}}_{t}\beta_{j}^{\text {true}}\big)\Big).
\#
Let $\mathcal{E}(t, \delta)_{1, k}$ denote the event for $X_t$ that at time $t$ the true reward of arm $k$ is at least $\delta$ less than that of arm $i$, which means that $\mathcal{E}(t, \delta)_{1, k}=\left\{X^{\mathrm{T}}_{t}\beta_{i}^{\text {true}}>X^{\mathrm{T}}_{t}\beta_{k}^{\text {true}}+\delta\right\}$, for $k \neq i, k \in [K]$. Then we have the following bound:
\begin{align}
\rg_{t,l} \le& \mathbb{E}\Big(\sum_{j \neq i} \mathbbm{1}\big(\big\{X^{\mathrm{T}}_{t}\betah(\cW_{i,l}, \lambda_{2,l}) > X^{\mathrm{T}}_{t}\betah(\cW_{j,l}, \lambda_{2,l})\big\}\cap \mathcal{E}(t, \delta)_{1, j}\big) \cdot \big(X^{\mathrm{T}}_{t}\beta_{i}^{\text {true}}-X^{\mathrm{T}}_{t}\beta_{j}^{\text {true}}\big)\Big) \label{apeg05}\\
&\qquad + \mathbb{E}\Big(\sum_{j \neq i} \mathbbm{1}\big(\big\{X^{\mathrm{T}}_{t}\betah(\cW_{j,l}, \lambda_{2,l})>X^{\mathrm{T}}_{t}\betah(\cW_{i,l}, \lambda_{2,l})\big\}\cap \mathcal{E}(t, \delta)_{1, j}^c\big) \cdot \big(X^{\mathrm{T}}_{t}\beta_{i}^{\text {true}}-X^{\mathrm{T}}_{t}\beta_{j}^{\text {true}}\big)\Big). \label{apeg06}
\end{align}
We bound the term in \eqref{apeg06} as follows:
\$
&\mathbb{E}\Big(\sum_{j \neq i} \mathbbm{1}\big(\big\{X^{\mathrm{T}}_{t}\betah(\cW_{j,l}, \lambda_{2,l})>X^{\mathrm{T}}_{t}\betah(\cW_{i,l}, \lambda_{2,l})\big\}\cap \mathcal{E}(t, \delta)_{1, j}^c\big)\Big) \cdot \delta\\
&\qquad \le\mathbb{E}\Big(\sum_{j \neq i} \mathbbm{1}\left(\mathcal{E}(t, \delta)_{1, j}^c\right)\Big) \cdot \delta=\sum_{j \neq i}\mathbb{P}\left(\mathcal{E}(t, \delta)_{1, j}^c\right) \cdot \delta\\
&\qquad =(K-1)C_mR_{\max}\delta^2\le C_mR_{\max}K\delta^2,
\$
where the last inequality comes from Assumption \ref{margincondition}.

Now we consider the term in \eqref{apeg05}, which can be bounded as follows:
\begin{align*}
&\mathbb{E}\Big(\sum_{j \neq i} \mathbbm{1}\big(\left\{X^{\mathrm{T}}_{t}\betah(\cW_{j,l}, \lambda_{2,l})>X^{\mathrm{T}}_{t}\betah(\cW_{i,l}, \lambda_{2,l})\right\}\cap \mathcal{E}(t, \delta)_{1, j}\big)\Big) \cdot(2R_{\max}) \nonumber\\
&\qquad\le \mathbb{E}\Big(\sum_{j \neq i} \mathbbm{1}\big(X^{\mathrm{T}}_{t}\betah(\cW_{j,l}, \lambda_{2,l})-X^{\mathrm{T}}_{t}\beta_{j}^{\text {true}}>X^{\mathrm{T}}_{t}\betah(\cW_{i,l}, \lambda_{2,l})-X^{\mathrm{T}}_{t}\beta_{i}^{\text {true}}+\delta\big)\Big) \cdot(2R_{\max}) \nonumber\\
&\qquad\le \mathbb{E}\Big(\sum_{j \neq i} \mathbbm{1}\big(\left|X^{\mathrm{T}}_{t}\betah(\cW_{j,l}, \lambda_{2,l})-X^{\mathrm{T}}_{t}\beta_{j}^{\text {true}}\right|>-\left|X^{\mathrm{T}}_{t}\betah(\cW_{i,l}, \lambda_{2,l})-X^{\mathrm{T}}_{t}\beta_{i}^{\text {true}}\right|+\delta\big)\Big) \cdot(2R_{\max}).
\end{align*}
By applying H\"older's inequality to the right hand side of the last inequality above, we get
\begin{align}
&\mathbb{E}\Big(\sum_{j \neq i} \mathbbm{1}\big(\left\{X^{\mathrm{T}}_{t}\betah(\cW_{j,l}, \lambda_{2,l})>X^{\mathrm{T}}_{t}\betah(\cW_{i,l}, \lambda_{2,l})\right\}\cap \mathcal{E}(t, \delta)_{1, j}\big)\Big) \cdot(2R_{\max}) \nonumber\\
&\qquad\le \mathbb{E}\Big(\sum_{j \neq i} \mathbbm{1}\big(x_{\max}\|\betah(\cW_{j,l}, \lambda_{2,l})-\beta_{j}^{\text {true}}\|_1>-x_{\max}\|\betah(\cW_{i,l}, \lambda_{2,l})-\beta_{i}^{\text {true}}\|_1+\delta\big)\Big) \cdot(2R_{\max}) \nonumber\\
&\qquad\le \mathbb{E}\Big(\sum_{j \neq i} \mathbbm{1}\big(\|\betah(\cW_{j,l}, \lambda_{2,l})-\beta_{j}^{\text {true}}\|_1+\|\betah(\cW_{i,l}, \lambda_{2,l})-\beta_{i}^{\text {true}}\|_1>\frac{\delta}{x_{\max}}\big)\Big) \cdot(2R_{\max}) \nonumber\\
&\qquad\le 2R_{\max} \sum_{j \neq i} \Big( \mathbb{P} \big( \|\betah(\cW_{j,l}, \lambda_{2,l})-\beta_{j}^{\text {true}}\|_1 > \frac{\delta}{2x_{\max}} \big) + \mathbb{P} \big( \|\betah(\cW_{i,l}, \lambda_{2,l})-\beta_{i}^{\text {true}}\|_1 > \frac{\delta}{2x_{\max}} \big) \Big). \label{apeg08}
\end{align}

Then, from Proposition \ref{prop5}, we have the following inequality:
\#\label{apeg09}
\mathbb{P}\Big(\| \betah(\cW_{k,l}, \lambda_{2,l}) - \beta_k^{\true}\|_1 \ge 32\sqrt{\frac{2(\log t_l +\log d)}{t_lp_{\ast}^2C_1(\phi_0)}}\Big)\le\frac{2}{t_l}+2\exp\Big(-\frac{t_lp_{\ast}^3C_2^2(\phi_0)}{64}\Big).
\#
By combining \eqref{apeg08} and \eqref{apeg09} and setting $\delta=64x_{\max}\sqrt{2(\log t_l+\log d)/(t_lp_{\ast}^3C_1(\phi_0))}$, we obtain:
\$
&2R_{\max} \sum_{j \neq i} \Big( \mathbb{P} \big( \|\betah(\cW_{j,l}, \lambda_{2,l})-\beta_{j}^{\text {true}}\|_1 > \frac{\delta}{2x_{\max}} \big) + \mathbb{P} \big( \|\betah(\cW_{i,l}, \lambda_{2,l})-\beta_{i}^{\text {true}}\|_1 > \frac{\delta}{2x_{\max}} \big) \Big)\\
&\qquad\le 2R_{\max}(K-1) \Big( \frac{4}{t_l}+4\exp\big(-\frac{t_lp_{\ast}^2C_2^2(\phi_0)}{64}\big) \Big).
\$
Then, the following result can be deduced:
\$
\begin{aligned}
\rg_t &\le  2R_{\max}K \Big( \frac{4}{t_l}+4\exp\big(-\frac{t_lp_{\ast}^2C_2^2(\phi_0)}{64}\big) \Big) + C_mR_{\max}K\delta^2\\
&\le R_{\max}K\Big(\frac{8}{t_l}+8\exp\big(-\frac{t_lp_{\ast}^2C_2^2(\phi_0)}{64}\big)\Big)+C_mR_{\max}K 64^2 x_{\max}^2\frac{2(\log t_l+\log d)}{t_lp_{\ast}^3C_1(\phi_0)}\\
&=R_{\max}K\Big(\frac{8}{t_l}+8\exp\big(-\frac{t_lp_{\ast}^2C_2^2(\phi_0)}{64}\big)+C_3^{'}\frac{\log t_l+\log d}{t_l}\Big),
\end{aligned}
\$
where $C_3^{'}=2(64x_{\max})^2C_m/(p_{\ast}^3C_1(\phi_0))$. 

We find that the upper bound of time in the same batch is equal. Therefore, we bound the third part of the regret as follows:
\$
\RG_3(T,L)&\le\sum_{l=l_0}^{L-1}R_{\max}K\Big(\frac{8}{t_l}+8\exp\big(-\frac{t_lp_{\ast}^2C_2^2(\phi_0)}{64}\big)+C_3^{'}\frac{\log t_l+\log d}{t_l}\Big)\cdot(t_{l+1}-t_l)\\
&\le \sum_{l=l_0}^{L-1}R_{\max}K\Big(\frac{8 + C_3^{'} \log d}{t_l}+8\exp\big(-\frac{t_lp_{\ast}^2C_2^2(\phi_0)}{64}\big)+C_3^{'}\frac{\log t_l}{t_l}\Big)\cdot\frac{at_l}{l \log t_l}\\
&=R_{\max}Ka\sum_{l=l_0}^{L-1}\Big(\frac{8+ C_3^{'} \log d + C_3^{'}}{l}+8\exp\big(-\frac{t_lp_{\ast}^2C_2^2(\phi_0)}{64}\big)\frac{t_l}{l}\Big).
\$
We continue to compute that
\#\label{eq:RG_3 linear}
\RG_3(T,L)
&\le R_{\max}Ka\sum_{l=l_0}^{L-1}\Big(\frac{8+ C_3^{'} \log d + C_3^{'}}{l}+\frac{512 }{p_{\ast}^2C_2^2l}\Big)\notag\\
&\le R_{\max}Ka\Big(C_3^{'}\log d + C_3^{'} + 8+\frac{512}{p_{\ast}^2C_2^2} \Big)\cdot (\log L+1)\notag\\
&=Ka \big( C_3(\phi_0,p_{\ast})\log d+C_4(\phi_0, p_{\ast}) \big) \cdot(\log L+1),
\#
where $C_3(\phi_0,p_{\ast})=2C_m R_{\max}(64x_{\max})^2/(p_{\ast}^3C_1(\phi_0))$ and $C_4(p_{\ast})= R_{\max}( 512/(p_{\ast}^2C_2^2(\phi_0)) + C_3^{'} + 8 )$.

Finally, the regret bound can be obtained by combining the bounds \eqref{eq:RG_1 linear}, \eqref{eq:RG_2 linear} and \eqref{eq:RG_3 linear} for these three groups and using $a = \mathcal O ((\log T)^2/(\log L))$:
$$
\begin{aligned}
\RG(T,L)=&\RG_1(T,L)+\RG_2(T,L)+\RG_3(T,L)\\
\le& R_{\max }K\left(2+6 C_{0} \log T\right)+R_{\max } t_{l_0}+7KR_{\max}a(\log L+1)\\
&\qquad+Ka \left( C_3(\phi_0,p_{\ast})\log d+C_4(\phi_0, p_{\ast}) \right) (\log L+1)\\
=&\mathcal{O}\left(Ks_0^2\log d (\log^2 T+\log d) \right).
\end{aligned}
$$
Ultimately, we obtain the desired result of Theorem \ref{th:linear}.
\end{proof}

\section{Proof of Theorem \ref{th:matirx}}\label{s:Proof_of_Theorem_2}
The following theorem is the restatement of Theorem \ref{th:matirx} with details about the constants.

\begin{theorem}[Restatement of Theorem \ref{th:matirx}]
Suppose that Assumptions \ref{as0501}-\ref{as0504} hold. Under the batched low-rank bandit, we take $\lambda_1 = \phi_0^2 p_{\ast}h/(48r x_{\max})$, $\lambda_{2,l}=2\phi_0^2/(3r)\cdot \sqrt{2(\log t_l + \log(d_1+d_2))/(p_{\ast} C_1 t_l)}$ and $\lambda_{2,0}=2 \phi_0^2/(3r) \sqrt{2/(p_{\ast} C_1)}$. When $K \ge 2$, $d_1,d_2 > 1$, $T \ge C_5$ and $L\ge 2$, the cumulative regret up to time $T$ is upper bounded:
\#
\RG(T,L)\le\mathcal{O}\left( K r^2 \log(d_1d_2) (\log^2 T+\log(d_1d_2))  \right),
\notag
\#
where the constants $C_1(\phi_0)$, $C_2(\phi_0)$, $C_3(\phi_0, p_{\ast})$, $C_4(\phi_0,p_{\ast})$ and $C_5$ are given by
\begin{gather*}
C_1(\phi_0) = \frac{\phi_0^4}{(96r\sigma \kappa_0)^2},\quad C_2(\phi_0) = \frac{\phi_0^2}{4\kappa_0\sqrt{2r(4r\kappa_0^2+\phi_0^2)}},\\
C_3(\phi_0, p_{\ast}) = \frac{6C_mR_{\max}(64\kappa_0)^2}{p_{\ast}^3 C_1}, \quad C_4(\phi_0,p_{\ast})=\frac{C_3(\phi_0,p_{\ast})}{3}+4\Big(1+\frac{128}{p_{\ast}^2C_2^2}\Big)R_{\max},\\
C_5=\min_{t\in\mathbb{Z}}\left\{ t\ge4K(2C_0+7a+7)\log(t/\tt_0), t\ge2\tt_0(a/\log\tt_0+1)\right\},
\end{gather*}
and we take $C_0 = \max\{10,8/p_{\ast}, 16\log (2d_1d_2)/(p_{\ast}^2C_2^2),1024\log(d_1+d_2)x_{\max}^2/(C_1p_{\ast}^2h^2) \}$, $\tt_0 = \max \left\{ 7K-1, 2C_0 K-1, \left\lceil (t_0+1)^2/e^2 \right\rceil-1 \right\}$ and $t_0 = 2C_0K$.
\end{theorem}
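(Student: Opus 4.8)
The plan is to transcribe the three-stage architecture of the proof of Theorem~\ref{th:linear} almost verbatim, substituting matrix objects for their vector counterparts: the LASSO estimator \eqref{LASSO} becomes the nuclear-norm regularized estimator \eqref{eq0504}, the $\ell_1$ error $\|\hat\beta-\beta_k^{\true}\|_1$ becomes the nuclear error $\|\bThetah-\bTheta_k^{\true}\|_N$, the sparsity $s_0$ becomes the rank $r$, the coordinate count $\log d$ becomes $\log(d_1d_2)$ (equivalently $\log(d_1+d_2)$), and the matrix H\"older bound $|\la \bX_t,\bDelta\ra|\le\|\bX_t\|_{\op}\|\bDelta\|_N\le\kappa_0\|\bDelta\|_N$ replaces $|X_t^{\mathrm T}\delta|\le x_{\max}\|\delta\|_1$. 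Because the grid \eqref{eqgrid} is chosen identically in the two settings, the whole-sample i.i.d.\ counting argument carries over without change: Proposition~\ref{prop4general} is stated for a general recursion $g$ (non-decreasing, $g(t_l)\le\alpha t_l$, $t_2\ge 2t_1$, all verified for \eqref{eqgrid} as in the proof of Proposition~\ref{prop4}) and, moreover, its martingale/Azuma derivation is dimension-free and uses only that the covariate stream is i.i.d.\ and that the screening event forces the optimal arm through; hence $\mathbb P(M_k(t_l)\ge p_*t_l/8)\ge 1-\exp(-p_*^2(t_l-1)/128)$ holds intact for the low-rank bandit.

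The one genuinely new ingredient, and the step I expect to be the main obstacle, is the matrix analog of Lemma~\ref{LASSOtailniid}, namely Lemma~\ref{lem0501}. I would run the standard restricted-strong-convexity argument for nuclear-norm penalized trace regression: on the i.i.d.\ subset $\cA'\subseteq\cA$ of samples whose covariates land in $\cV_k$, the compatibility condition of Assumption~\ref{as0504} supplies the cone inequality $\|\bDelta_{\barcM_k}\|_F^2\le\vec(\bDelta)^{\mathrm T}\Sigma_k\vec(\bDelta)/(4\phi_0)^2$ for every $\bDelta$ with $\|\bDelta_{\barcM_k^{\bot}}\|_N\le 3\|\bDelta_{\barcM_k}\|_N$, while decomposability of $\|\cdot\|_N$ over $(\cM_k,\barcM_k^{\bot})$ confines the estimation error to exactly this cone once $\lambda$ dominates the dual norm of the noise gradient. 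The decisive quantity is therefore $\|\tfrac1{|\cA|}\sum_{t\in\cA}\varepsilon_t\bX_t\|_{\op}$: unlike the vector case, where a union bound over $d$ coordinates plus a scalar sub-Gaussian tail suffices, here I must control the \emph{operator} norm of a sum of independent $\sigma$-sub-Gaussian random matrices. I would do this by first proving a concentration bound for a matrix sub-Gaussian series and then invoking the matrix Bernstein/Freedman inequality of \citet{tropp2011freedman}; the payoff is an $\|\cdot\|_{\op}$ tail in which the ambient dimension enters only through $\log(d_1+d_2)$ rather than $d_1d_2$, which is precisely what yields the logarithmic (rather than polynomial) dimension dependence in the final regret and the improvement over prior low-rank bandit analyses.

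With Lemma~\ref{lem0501} in hand the two estimator bounds are routine. For the forced samples I mimic Proposition~\ref{prop3}: set $\cA'=\{\tau\in\cR_{k,l}\mid \bX_\tau\in\cV_k\}$, argue on a high-probability event that it is an i.i.d.\ sample from $\mathcal P_{\bX\mid \bX\in\cV_k}$ of size at least $p_*|\cR_{k,l}|/2$, and apply Lemma~\ref{lem0501} with $\chi$ of order $h/\kappa_0$ and the prescribed $\lambda_1=\phi_0^2p_*h/(48r x_{\max})$ to obtain $\mathbb P(\|\bThetah(\cR_{k,l},\lambda_1)-\bTheta_k^{\true}\|_N\ge \chi)\le 7/(t_l+1)$, which validates the screening stage. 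For the whole samples I combine Proposition~\ref{prop4general} (the count $M_k(t_l)\ge p_*t_l/8$ of i.i.d.\ samples inside $\cU_k$) with Lemma~\ref{lem0501} applied to $\cA=\cW_{k,l}$, $\cA'=\cup_m\cB_{k,m}$, $p=p_*/4$, producing the matrix version of Proposition~\ref{prop5}, $\|\bThetah(\cW_{k,l},\lambda_{2,l})-\bTheta_k^{\true}\|_N\lesssim r\sqrt{(\log t_l+\log(d_1d_2))/t_l}$ with the stated failure probability.

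Finally I repeat the regret decomposition \eqref{eq:regret decompose 3 parts} verbatim with $l_0=\min\{l:t_l\ge C_5\}$. Group~1 (the first $l_0$ batches and all forced-sampling steps) is bounded by $R_{\max}(t_{l_0}+\sum_k n_k)=\mathcal O(R_{\max}K\log T)$ via the forced-sample count of Lemma~\ref{ap004}; Group~2 (steps where the screening event $\cA_{l-1}$ fails) uses $\mathbb P(\cA_{l-1}^c)\le 7K/(t_{l-1}+1)$ together with $t_l-t_{l-1}\le a\,t_{l-1}/((l-1)\log t_{l-1})$ from \eqref{eqgrid}, giving $\mathcal O(KR_{\max}a\log L)$; and Group~3 (steps where $\cA_{l-1}$ holds, so the optimal arm survives screening) feeds the whole-sample bound and the margin condition (Assumption~\ref{as0502}) into the per-step regret $\rg_{t,l}$, with matrix H\"older converting nuclear errors into reward gaps. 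Summing the resulting per-batch regret $\mathcal O(a/l)$ over $l$ and using $a=\mathcal O(\log^2 T/\log L)$ yields $\RG_3=\mathcal O(Kr^2\log(d_1d_2)(\log^2 T+\log(d_1d_2)))$, which dominates and gives the claimed bound. In short, the only real work beyond bookkeeping is Lemma~\ref{lem0501}; everything downstream is a faithful transcription of the sparse analysis under $s_0\to r$, $\log d\to\log(d_1d_2)$, and $x_{\max}\to\kappa_0$.
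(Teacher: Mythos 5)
Your proposal follows essentially the same route as the paper: the same three-group regret decomposition, the same reuse of the dimension-free i.i.d.\ counting argument (Proposition~\ref{prop4general}) for the whole-sample set, and the same identification of the matrix tail inequality (Lemma~\ref{lem0501}) as the one genuinely new ingredient, proved exactly as you describe via decomposability, restricted strong convexity on the compatibility cone, and control of $\|\frac{1}{|\cA|}\sum_t\varepsilon_t\bX_t\|_{\op}$ through a matrix sub-Gaussian series bound combined with the matrix Bernstein inequality of \citet{tropp2011freedman}. Apart from minor constant-level choices (e.g.\ the paper takes $\chi=h/(4x_{\max})$ rather than order $h/\kappa_0$ in the forced-sample step), this is the paper's proof.
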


The proof of Theorem \ref{th:matirx} is composed of three subsections: in $\S$\ref{ss:Convergence of Random-Sample Estimators_2} and $\S$\ref{ss:Convergence of Whole-Sample Estimators_2}, we demonstrate the convergence of random-sample estimators and whole-sample estimators, respectively; in $\S$\ref{ss:Bounding the Cumulative Regret_2}, the cumulative regret is decomposed into several parts and upper-bounded seperately.

\vspace{4pt}
\noindent
{\bf Estimator from forced samples up to batch $l$:} By combining Lemma \ref{lem0501} and Lemma \ref{boundsrandom}, we will obtain the tail inequality for forced-sample estimator $\bThetah(\cR_{k,l}, \lambda_1)$. The proof is presented in Appendix \ref{ss:Convergence of Random-Sample Estimators_2}.
\begin{proposition}\label{propj01}
Under Assumptions \ref{as0501}, \ref{as0503} and \ref{as0504}, for all arms $k \in [K]$, the forced-sample estimator $\bThetah(\cR_{k,l}, \lambda_1)$ satisfies
$$
\P\Big( \big\| \bThetah(\cR_{k,l}, \lambda_1) - \bTheta_k^{\true} \big\|_N \ge \frac{h}{4x_{\max}} \Big) \le \frac{6}{t_l+1},
$$
where $t_l \ge (t_0 +1)^2/e^2 - 1$, $t_0 = 2C_0K$, $\lambda_1 = \phi_0^2 p_{\ast}h/(48 r x_{\max})$ and $C_0 = \max \{10,8/p_{\ast},\\ 16\log (2d_1d_2)/(p_{\ast}^2C_2^2),1024\log(d_1+d_2)x_{\max}^2/(C_1p_{\ast}^2h^2) \}$, and the parameter $h$ is defined in Assumption \ref{armoptimality}.
\end{proposition}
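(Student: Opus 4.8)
The plan is to mirror the proof of Proposition \ref{prop3}, replacing the LASSO tail inequality for non-i.i.d. data (Lemma \ref{LASSOtailniid}) with its matrix counterpart, Lemma \ref{lem0501}, and the forced-sample counting with Lemma \ref{boundsrandom}. The central difficulty is identical in spirit to the sparse case: although the forced samples in $\cR_{k,l}$ are i.i.d. draws from $\mathcal{P}_{\bX}$, the compatibility condition in Assumption \ref{as0504} is only assumed for the conditional second-moment operator $\Sigma_k=\mathbb{E}(\vec(\bX)\vec(\bX)^{\mathrm{T}}\mid \bX\in\cV_k)$, not for the unconditional one. First I would therefore isolate the sub-sample set $\cA'=\{t\in\cR_{k,l}\mid \bX_t\in\cV_k\}$ and argue, exactly as in the sparse case, that by rejection sampling $\cA'$ is an i.i.d. sample from $\mathcal{P}_{\bX\mid \bX\in\cV_k}$, since each event $\{\bX_t\in\cV_k\}$ is independent with probability at least $p_{\ast}$.

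Next I would introduce the good event
$$
\cE_1=\bigl\{\,|\cA'|/|\cR_{k,l}|\ge p_{\ast}/2\,\bigr\}\cap\bigl\{\,n_k\ge C_0\log(t_l+1)/2\,\bigr\},
$$
where $n_k=|\cR_{k,l}|$. Using Lemma \ref{boundsrandom} to lower bound the forced-sample count $n_k$ and a binomial Chernoff bound for the fraction of forced samples landing in $\cV_k$ (the same argument as Lemma \ref{lem:ap005}), each of the two constituent events holds with probability at least $1-2/(t_l+1)$, so that $\mathbb{P}(\cE_1^c)\le 4/(t_l+1)$. This is the step where the threshold $t_l\ge(t_0+1)^2/e^2-1$ together with $t_0=2C_0K$ and $C_0\ge 8/p_{\ast}$ are used to guarantee sufficiently many forced samples per arm.

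Conditioning on $\cE_1$, both the mixing requirement $|\cA'|/|\cR_{k,l}|\ge p_{\ast}/2$ and the sample-size requirement of Lemma \ref{lem0501} hold, the latter because $n_k\ge C_0\log(t_l+1)/2$ and $C_0\ge 16\log(2d_1d_2)/(p_{\ast}^2C_2^2)$. I would then invoke Lemma \ref{lem0501} with $\cA=\cR_{k,l}$, the subset $\cA'$ above, mixing fraction $p=p_{\ast}$, target accuracy $\chi=h/(4x_{\max})$, and regularization $\lambda_1=\phi_0^2p_{\ast}h/(48rx_{\max})$, which is exactly the value realizing $\lambda_1=\lambda(\chi,\phi_0\sqrt{p_{\ast}}/2)$ for the matrix inequality (the factor $48r$ replacing the $64s_0$ of the sparse case reflects the nuclear-norm decomposability constant in place of the $\ell_1$ one). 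The condition $C_0\ge 1024\log(d_1+d_2)x_{\max}^2/(C_1p_{\ast}^2h^2)$ then forces each of the two exponential tails produced by Lemma \ref{lem0501} to be at most $1/(t_l+1)$, giving a conditional bound of $2/(t_l+1)$. Combining this with $\mathbb{P}(\cE_1^c)\le 4/(t_l+1)$ through the law of total probability yields the claimed $6/(t_l+1)$.

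The genuine obstacle — which is borne by Lemma \ref{lem0501} rather than by this proposition — is the matrix concentration underlying the transfer of restricted strong convexity from $\cA'$ to $\cA$: one must control the operator-norm deviation of the empirical covariance operator and of the loss gradient over non-i.i.d. samples, for which the scalar and vector arguments of \citet{bastani2015online} do not apply. As the paper indicates, this is handled via a concentration bound for matrix sub-Gaussian series together with the matrix Bernstein inequality of \citet{tropp2011freedman}. Within the present proposition the only delicate bookkeeping is to check that the constants $C_1(\phi_0)=\phi_0^4/(96r\sigma\kappa_0)^2$ and $C_2(\phi_0)=\phi_0^2/(4\kappa_0\sqrt{2r(4r\kappa_0^2+\phi_0^2)})$ entering Lemma \ref{lem0501} are compatible with the chosen $\lambda_1$ and $\chi=h/(4x_{\max})$, so that the two tail terms indeed collapse to $1/(t_l+1)$ each under the stated lower bound on $C_0$.
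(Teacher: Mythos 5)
Your proposal is correct and follows essentially the same route as the paper's proof: isolate the i.i.d. subset $\cA'=\{t\in\cR_{k,l}\mid \bX_t\in\cV_k\}$ via rejection sampling, control the forced-sample count and the fraction $|\cA'|/|\cR_{k,l}|$ with Lemma \ref{ap004} and Lemma \ref{lem:ap005}, then invoke Lemma \ref{lem0501} with $p=p_{\ast}$, $\chi=h/(4x_{\max})$ and $\lambda_1=\chi\phi_0^2p_{\ast}/(12r)=\phi_0^2p_{\ast}h/(48rx_{\max})$, using the lower bound on $C_0$ to collapse the two exponential tails. The bookkeeping of the constituent probabilities differs trivially from the paper's, but both arrive at the stated $6/(t_l+1)$ bound.
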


\vspace{4pt}
\noindent
{\bf Estimator from whole samples up to batch $l$:} To show the performance of the whole-sample estimator $\bThetah(\cW_{k,l}, \lambda_{2,l})$, we start by proving that enough number of i.i.d. data in whole samples is guaranteed. We also define the event $\mathcal{A}_l$ that the forced-sample estimator at batch $l$ is within a given distance of its true parameter:
$$
\mathcal{A}_l=\mathbbm{1}\Big(\big\| \bThetah(\cR_{k,l}, \lambda_1) - \bTheta_k^{\true} \big\|_N \le \frac{h}{4x_{\max}},~\forall k\in[K]\Big).
$$
According to Proposition \ref{propj01}, if $t_l\ge(t_0+1)^2/e^2-1$, we have $\mathbb{P}(\mathcal{A}_l)\ge1-7K/(t_l+1)$. Moreover, for $t\in\{1,\ldots,t_{l}\}$ and $k\in\cK_o$, we define
$$
M_k(t)=\mathbb{E}\Big(\sum_{m=1}^{l}\sum_{\tau=t_{m-1}+1}^{t_m}\mathbbm{1}(\bX_{\tau}\in \cU_k,\mathcal A_{m-1},\bX_{\tau}\notin \mathcal{R}_{k,m}) \,\Big|\, \mathcal{F}_t\Big),
$$
where $\mathcal{F}_t=\{(\bx_{\tau},r_{\tau}) ~\text{for}~ \tau\le t\}$. Now we know that $\{M_k(t_{l})\}_{l\in[L]}$ is a martingale, and the same proof of Proposition \ref{prop4} constructs its bound:
if $t_{l} \ge C_5$, $l\ge 2$ 
, then for $k\in\cK_o$, we have
\#\label{propj02}
\mathbb{P}\Big(M_k(t_{l})\ge\frac{p_{\ast}}{8}t_{l}\Big)\ge 1 - \exp\Big(-\frac{p_{\ast}^2}{128}(t_{l}-1)\Big).
\#
Now, we will show the convergence of the whole-sample estimators. The proof is presented in Appendix \ref{ss:Convergence of Whole-Sample Estimators_2}.
\begin{proposition}\label{propj03}
If Assumptions \ref{as0501}, \ref{as0503} and \ref{as0504} hold, when $l\ge 2$ and $t_l\ge C_5$, the whole-sample estimator $\bThetah(\cW_{k,l}, \lambda_{2,l})$ will satisfy the following inequality:
$$
\mathbb{P}\Big(\big\| \bThetah(\cW_{k,l}, \lambda_{2,l}) - \bTheta_k^{\true}\big\|_N \ge 32\sqrt{\frac{2(\log t_l+\log(d_1+d_2))}{ t_l p_{\ast}^3 C_1(\phi_0)} }\Big) \le\frac{1}{t_l}+2\exp\Big(-\frac{t_lp_{\ast}^2C_2^2(\phi_0)}{64}\Big),
$$
where
$\lambda_{2,l}=2\phi_0^2/(3r)\cdot\sqrt{2(\log t_l+\log(d_1+d_2))/(t_lp_{\ast}C_1(\phi_0))}$.
\end{proposition}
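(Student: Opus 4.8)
The plan is to mirror the proof of Proposition~\ref{prop5} in matrix form, replacing the LASSO tail inequality for non-i.i.d.\ data (Lemma~\ref{LASSOtailniid}) by its low-rank counterpart, Lemma~\ref{lem0501}, and the $\ell_1$/sparsity quantities by their nuclear-norm/rank analogues. Recall that the martingale argument underlying \eqref{propj02} shows that the i.i.d.\ subset $\cA' = \bigcup_{m=1}^{l}\mathcal{B}_m \subseteq \cW_{k,l}$ satisfies $|\cA'| \ge p_{\ast}t_l/8$ with probability at least $1 - \exp(-p_{\ast}^2(t_l-1)/128)$, and (by the matrix version of the rejection-sampling argument in the proof of Proposition~\ref{prop4general}) that these samples are i.i.d.\ from $\mathcal{P}_{\bX \mid \bX \in \cU_k}$.

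First I would set $\cA = \cW_{k,l}$, $\cA' = \bigcup_{m=1}^l \mathcal{B}_m$, and $p = p_{\ast}/4$, so that on the above event $|\cA'|/|\cA| \ge p_{\ast}/8 = p/2$, meeting the fractional-i.i.d.\ hypothesis of Lemma~\ref{lem0501}. Next I would verify the sample-size requirement of that lemma, namely the matrix analogue of $|\cA| \ge 6\log d/(pC_2^2)$ with $\log d$ replaced by $\log(d_1 d_2)$: since $t_l \ge C_5 \ge 4C_0$ and $C_0 \ge 16\log(2d_1 d_2)/(p_{\ast}^2 C_2^2)$, the bound $|\cA| \ge p_{\ast}t_l/8$ delivers the threshold. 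Applying Lemma~\ref{lem0501} with the regularization parameter coupled to the radius $\chi$ (through the matrix factor $2\phi_0^2/(3r)$, chosen so that $\lambda$ matches $\lambda_{2,l}$) then yields a two-term tail bound of the form $2\exp(-c\, t_l p_{\ast}^3 C_1(\phi_0)\chi^2 + \log(d_1+d_2)) + \exp(-t_l p_{\ast}^2 C_2^2(\phi_0)/64)$.

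Then I would combine this with the martingale failure probability $\exp(-p_{\ast}^2(t_l-1)/128)$, absorbing the latter into the $C_2$-exponential term using $C_2(\phi_0) \le 1/2$ and $t_l - 1 \ge t_1/2$, exactly as in the sparse case. Finally, choosing $\chi = 32\sqrt{2(\log t_l + \log(d_1+d_2))/(t_l p_{\ast}^3 C_1(\phi_0))}$ makes the first term at most $1/t_l$ and, by the coupling in Lemma~\ref{lem0501}, forces $\lambda = \lambda_{2,l}$, giving the stated bound $1/t_l + 2\exp(-t_l p_{\ast}^2 C_2^2(\phi_0)/64)$.

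The main obstacle is not this assembly, which is a faithful matrix transcription of Proposition~\ref{prop5}, but rather having Lemma~\ref{lem0501} in hand with the correct constants: as noted in the proof sketch, the operator/nuclear-norm geometry forces one to control a matrix sub-Gaussian series and invoke the matrix Bernstein inequality of \citet{tropp2011freedman}, and the resulting dimension dependence $\log(d_1+d_2)$ (rather than $\log d$) and rank factor $r$ (rather than $s_0$) must be threaded consistently through the definitions of $C_1(\phi_0)$ and $C_2(\phi_0)$ and through the $\chi$-to-$\lambda_{2,l}$ correspondence. A secondary point of care is confirming that the compatibility condition of Assumption~\ref{as0504}, stated for the conditional covariance $\Sigma_k = \E(\vec(\bX)\vec(\bX)^{\mathrm{T}}\mid \bX \in \cV_k)$ with $\cV_k = \cU_k$ for $k\in\cK_o$, is exactly the hypothesis consumed by Lemma~\ref{lem0501} when restricted to the i.i.d.\ subset $\cA'$.
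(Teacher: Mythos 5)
Your proposal follows essentially the same route as the paper's proof: set $\cA=\cW_{k,l}$, $\cA'=\bigcup_{m=1}^{l}\mathcal{B}_m$, $p=p_{\ast}/4$, verify the sample-size hypothesis from $t_l\ge C_5$ and the definition of $C_0$, invoke Lemma \ref{lem0501} with $\lambda=\chi\phi_0^2p_{\ast}/(48r)$ (equivalently your $2\phi_0^2/(3r)$ coupling), absorb the martingale failure probability into the $C_2$-term, and choose the stated $\chi$. The only slip is cosmetic: Lemma \ref{lem0501} carries no factor of $2$ on its first exponential (unlike the LASSO version), so the first term evaluates to $1/t_l$ rather than the $2/t_l$ your "$2\exp(\cdots)$" would give.
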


\vspace{4pt}
\noindent
{\bf Cumulative regret up to time $T$:} Eventually, we will divide all time steps $[T]$ into three groups in the same way as $\S$\ref{sub404}. We will bound the regrets in each group by applying convergence inequalities for forced-sample and whole-sample estimators in Proposition \ref{propj01} and \ref{propj03}. The full proof of Theorem \ref{th:matirx} refers to the process of proving Theorem \ref{th:linear} in Appendix \ref{ss:Bounding the Cumulative Regret}. The details are presented in Appendix \ref{ss:Convergence of Whole-Sample Estimators_2}.

\subsection{Convergence of Random-Sample Estimators}\label{ss:Convergence of Random-Sample Estimators_2}
\begin{proof}[Proof of Proposition \ref{propj01}]
We know from the proof of Proposition \ref{prop3} that $\mathcal{A}$ is a set of i.i.d. samples from $\mathcal{P}_{X|X\in \cV_k}$ for arm $k\in[K]$. Since $t_0 = 2C_0K$, $t_l \ge (t_0 +1)^2/e^2 - 1$, we can invoke Lemma \ref{ap004} to obtain
\#\label{eqj0101}
\mathbb{P}\big( |\cA| <\frac{1}{2}C_0\log(t_l+1)\big)\le\frac{2}{t_l+1}.
\#
Moreover, since $C_0 \ge \max \{10,8/p_{\ast}, 16\log (2d_1d_2)/(p_{\ast}^2C_2^2)\}$ and $16\log (2d_1d_2)/(p_{\ast}^2C_2^2)>$\\$4\log (2d_1d_2)/(p_{\ast}C_2^2)$, we have $n_k \ge C_0\log(t_l+1)/2 \ge C_0/2 > 2\log (2d_1d_2)/(p_{\ast}C_2^2)$. Then, by applying Lemma \ref{lem0501} and Lemma \ref{lem:ap005} to \eqref{eqj0101}, with $p=p_{\ast}$, $\chi = h/(4x_{\max})$, $ C_0 \ge \max \{ 4/(p_{\ast}C_2^2),1024\log(d_1+d_2)x_{\max}^2/(C_1p_{\ast}^2h^2) \}$ and $\lambda_1 = \phi_0^2 p_{\ast}h/(48rx_{\max})$, it follows that
\$
\mathbb{P}\Big( \big\| \bThetah(\cR_{k,l}, \lambda_1) - \bTheta_k^{\true} \big\|_N \ge \frac{h}{4x_{\max}}\Big)
&\le \exp\Big(-\frac{C_1(\phi_0)}{2} \big(\frac{p_{\ast}h}{16x_{\max}}\big)^2 C_0\log(t_l+1)+ \log(d_1+d_2)\Big)\\
&\qquad +\exp\Big(-\frac{1}{4}p_{\ast}C_2(\phi_0)^2C_0\log(t_l+1)\Big)+\frac{5}{t_l+1}\\
&\le \exp\big( -(d_1+d_2)\cdot(2\log(t_l+1)-1) \big) + \frac{4}{t_l+1} \le \frac{6}{t_l+1},
\$
where the last inequality is deduced since $d_1,d_1>1$ and $\log(t_l+1)\ge 1$. Hence, we finifsh the proof of Proposition \ref{propj01}.
\end{proof}

\subsection{Convergence of Whole-Sample Estimators}\label{ss:Convergence of Whole-Sample Estimators_2}
\begin{proof}[Proof of Proposition \ref{propj03}]
Take $\mathcal{A}=\mathcal{W}_{k,l}$, $\cA^{'}=\cup_{m=1}^{l}\mathcal{B}_m$, where \$
\mathcal{B}_m = \{\bX_{\tau}|\bX_{\tau}\in \cU_k,\mathcal A_{m-1},\bX_{\tau}\notin \mathcal{R}_{k,m},\tau\in\{t_{m-1}+1,\ldots,t_m\}\}.
\$
Then, it follows from \eqref{propj02} that $|\cA'|/|\cA|\ge p_*/8$ and $|\cA|\ge|\cA'|\ge p_*t_l/8$. Since $t_l\ge C_5\ge 2KC_0\ge 4C_0$ and $C_0\ge 16\log(2d_1d_2)/(p_*^2C_2^2)$, we know that $|\cA|\ge 2\log(2d_1d_2)/(pC_2^2)$, where $p=p_{\ast}/4$.
Hence, we invoke Lemma \ref{lem0501} with $p=p_{\ast}/4$ and $\lambda=\chi\phi_0^2p_{\ast}/(48r)$ to obtain the following result:
\$
\begin{aligned}
\mathbb{P}\Big(\| \bThetah(\cW_{k,l}, \lambda_{2,l}) - \bTheta_k^{\true}\|_N \ge \chi\Big) \le& \exp\Big(-C_1\big(\frac{\phi_0\sqrt{p_{\ast}/4}}{2}\big)\frac{p_{\ast}t_l}{8}\chi^2+ \log(d_1+d_2)\Big) \\
&\qquad + \exp \Big(-\frac{p_{\ast}^2C_2^2(\phi_0)t_l}{64}\Big)
+\exp \Big(-\frac{p_{\ast}^2(t_{l}-1)}{128}\Big)\\
\le& \exp\Big(-\frac{t_lp_{\ast}^3C_1(\phi_0)}{2048}\chi^2+\log(d_1+d_2)\Big) + 2\exp \Big(-\frac{p_{\ast}^2C_2^2(\phi_0)t_l}{64}\Big),
\end{aligned}
\$                                 
where the last inequality uses $C_2(\phi_0)\le 1/2$ and $t_{l}-1\ge t_l/2$. By taking 
$$
\chi=32\sqrt{\frac{2(\log t_l+\log(d_1+d_2))}{ t_l p_{\ast}^3 C_1(\phi_0)} },
$$
we get
$$
\mathbb{P}\Big(\| \bThetah(\cW_{k,l}, \lambda_{2,l}) - \bTheta_k^{\true}\|_N \ge 32\sqrt{\frac{2(\log t_l+\log(d_1+d_2))}{ t_l p_{\ast}^3 C_1(\phi_0)} }\Big) \le\frac{1}{t_l}+2\exp\Big(-\frac{t_lp_{\ast}^2C_2^2(\phi_0)}{64}\Big),
$$
which completes the proof of Proposition \ref{propj03}.
\end{proof}

\subsection{Bounding the Cumulative Regret}\label{ss:Bounding the Cumulative Regret_2}
\begin{proof}[Proof of Theorem \ref{th:matirx}]
We can divide the cumulative regret $\RG(T,L)$ into three parts in the same way as \eqref{eq:regret decompose 3 parts}. In the sequel, we bound the regrets in each group.

\vspace{4pt}
\noindent
{\bf Regret for the first group:}

From Lemma \ref{ap004}, we know that if $t_0 = 2C_0K$, $C_0 =\max\{10, 8/p_{\ast}\}$, and $T \ge (t_0 +1)^2/e^2 - 1$, the following inequality can be obtained:
\#\label{eqj0301}
\mathbb{P}\Big(\sum_{k=1}^{K}n_k>6C_0K\log T\Big)\le\sum_{k=1}^{K}\mathbb{P}\big(n_k>6C_0K\log T\big)\le\frac{2K}{T+1}.
\#
Thus, refering to \eqref{eq:RG_1 linear}, we can bound the regret of the first group as
\#\label{eq:RG_1 matrix}
\RG_1(T,L) \le R_{\max }K\left(2+6 C_{0} \log T\right)+R_{\max } t_{l_0}.
\#
\vspace{2pt}
\noindent
{\bf Regret for the second group:}

From Proposition \ref{propj01}, we know that
\$
\P\Big( \big\| \bThetah(\cR_{k,l}, \lambda_1) - \bTheta_k^{\true} \big\|_N \ge \frac{h}{4x_{\max}} \Big) \le \frac{6}{t_l+1},
\$
Therefore, $\RG_2(T,L)$ can be bounded as follows:
\#\label{eq:RG_2 matrix}
\RG_2(T,L)&\le\mathbb{E}\big(\sum_{l=l_0}^{L}(t_l-t_{l-1})\mathbbm{1}(\mathcal A_{l-1}^c)R_{\max}\big)\le \sum_{l=l_0}^{L}(t_l-t_{l-1})\frac{6KR_{\max}}{t_{l-1}+1}\notag\\
&\le 6KR_{\max}\sum_{l=l_0}^L\frac{(a t_{l-1})/((l-1)\log t_{l-1})}{t_{l-1}+1}\notag\\
&\le 6KR_{\max}a(\log L+1),
\#
where the third equality comes from the grid structure \eqref{eqgrid}.

\vspace{4pt}
\noindent
{\bf Regret for the third group:}\\
Define $\cE(t,\delta)_{1,k} = \{ \la \bX_{t}, \bTheta_{i}^{\text {true}} \ra > \la \bX_{t}, \bTheta_{k}^{\text {true}} \ra + \delta \}$. Then, similar to the decomposition in \eqref{apeg05} and \eqref{apeg06}, the regret at time $t$ can be bounded as
\#
\rg_t =& \mathbb{E}\Big(\sum_{j=1}^K\mathbbm{1}\big(j=\arg \max _{k \in [K]} \big(\la \bX_{t}, \bThetah(\cW_{k,l}, \lambda_{2,l}) \ra \big)\big)\big(\la \bX^{\mathrm{T}}_{t}, \bTheta_{i}^{\text {true}} - \bTheta_{j}^{\text {true}}\ra\big)\Big) \nonumber\\
\le& \mathbb{E}\Big(\sum_{j \neq i} \mathbbm{1}\big(\big\{ \la \bX_{t}, \bThetah(\cW_{j,l}, \lambda_{2,l}) \ra > \la \bX_{t}, \bThetah(\cW_{i,l}, \lambda_{2,l}) \ra \big\}\cap \mathcal{E}(t, \delta)_{1, j}\big) \Big) \cdot (2R_{\max}) \label{eqj03041}\\
&\qquad + \mathbb{E}\Big(\sum_{j \neq i} \mathbbm{1}\big(\big\{ \la \bX_{t}, \bThetah(\cW_{j,l}, \lambda_{2,l}) \ra > \la \bX_{t}, \bThetah(\cW_{i,l}, \lambda_{2,l}) \ra \big\}\cap \mathcal{E}(t, \delta)_{1, j}^c\big) \Big) \cdot \delta. \label{eqj03042}
\#
The term \eqref{eqj03042} can be bounded by Assumption \ref{as0502}:
\#\label{eq:bound1}
\mathbb{E}\Big(\sum_{j \neq i} \mathbbm{1}\big(\big\{ \la \bX_{t}, \bThetah(\cW_{j,l}, \lambda_{2,l}) \ra > \la \bX_{t}, \bThetah(\cW_{i,l}, \lambda_{2,l}) \ra \big\}\cap \mathcal{E}(t, \delta)_{1, j}^c\big) \Big) \cdot \delta \le C_mR_{\max}K\delta^2.
\#
Correspondingly, the term \eqref{eqj03041} is bounded by
\#\label{eq:bound2}
&\mathbb{E}\Big(\sum_{j \neq i} \mathbbm{1}\big(\big\{ \la \bX_{t}, \bThetah(\cW_{j,l}, \lambda_{2,l}) \ra > \la \bX_{t}, \bThetah(\cW_{i,l}, \lambda_{2,l}) \ra \big\}\cap \mathcal{E}(t, \delta)_{1, j}\big) \Big) \cdot (2R_{\max})\nonumber\\
&\qquad \le\mathbb{E}\Big(\sum_{j \neq i} \mathbbm{1}\big( \kappa_0\| \bThetah(\cW_{j,l}, \lambda_{2,l}) - \bTheta_j^{\true}\|_N >-\kappa_0\| \bThetah(\cW_{i,l}, \lambda_{2,l}) - \bTheta_i^{\true}\|_N +\delta \big)\Big) \cdot (2R_{\max})\nonumber\\
&\qquad \le 2R_{\max}\sum_{j\ne i}\Big( \mathbb{P}\big(\| \bThetah(\cW_{j,l}, \lambda_{2,l}) - \bTheta_j^{\true}\|_N \ge \frac{\delta}{2\kappa_0}\big) + \mathbb{P}\big(\| \bThetah(\cW_{i,l}, \lambda_{2,l}) - \bTheta_i^{\true}\|_N \ge \frac{\delta}{2\kappa_0}\big)\Big).
\#
Then, we derive from Proposition \ref{propj03} that
\$
\mathbb{P}\Big(\| \bThetah(\cW_{k,l}, \lambda_{2,l}) - \bTheta_k^{\true}\|_N \ge 32\sqrt{\frac{2(\log t_l+\log(d_1+d_2))}{ t_l p_{\ast}^3 C_1(\phi_0)} }\Big) \le\frac{1}{t_l}+2\exp\Big(-\frac{t_lp_{\ast}^2C_2^2(\phi_0)}{64}\Big),
\$
which is applied to \eqref{eq:bound2} by setting $\delta = 64\kappa_0\sqrt{2(\log t_l + \log(d_1+d_2))/(t_lp_{\ast}^3C_1(\phi_0))}$:
\#\label{eq:bound201}
&2R_{\max}\sum_{j\ne i}\Big( \mathbb{P}\big(\| \bThetah(\cW_{j,l}, \lambda_{2,l}) - \bTheta_j^{\true}\|_N \ge \frac{\delta}{2\kappa_0}\big) + \mathbb{P}\big(\| \bThetah(\cW_{i,l}, \lambda_{2,l}) - \bTheta_i^{\true}\|_N \ge \frac{\delta}{2\kappa_0}\big)\Big)\nonumber\\
&\qquad \le 2R_{\max}(K-1)\Big( \frac{2}{t_l}+4\exp\big(-\frac{t_lp_{\ast}^2C_2^2(\phi_0)}{64}\big)\Big).
\#
Combining \eqref{eq:bound1} and \eqref{eq:bound201}, we obtain that
\$
\rg_t &\le  2R_{\max}K \Big( \frac{2}{t_l}+4\exp\big(-\frac{t_lp_{\ast}^2C_2^2(\phi_0)}{64}\big)\Big) + C_mR_{\max}K\delta^2\nonumber\\
&\le R_{\max}K\Big(\frac{4}{t_l}+8\exp\big(-\frac{t_lp_{\ast}^2C_2^2(\phi_0)}{64}\big)\Big)+C_m R_{\max} K 64^2\kappa_0^2\frac{2(\log t_l + \log(d_1+d_2))}{t_lp_{\ast}^3C_1(\phi_0)}\nonumber\\
&=R_{\max}K\Big(\frac{4}{t_l}+8\exp\big(-\frac{t_lp_{\ast}^2C_2^2(\phi_0)}{64}\big)+C_3^{'}\frac{\log t_l+\log(d_1+d_2)}{t_l}\Big),
\$
where $C_3^{'} = 2(64\kappa_0)^2C_m/(p_{\ast}^3C_1(\phi_0))$. Then, we accumulate the regret in this group:
\#\label{eq:RG_3 matrix}
\RG_3(T,L)&\le\sum_{l=l_0}^{L-1}R_{\max}K\Big(\frac{4}{t_l}+8\exp\big(-\frac{t_l p_{\ast}^2C_2^2(\phi_0)}{64}\big)+C_3^{'}\frac{\log t_l+\log(d_1+d_2)}{t_l}\Big)(t_{l+1}-t_l)\notag\\
&\le \sum_{l=l_0}^{L-1}R_{\max}K\Big(\frac{4 + 3C_3^{'} \log(d_1+d_2)}{t_l}+8\exp\big(-\frac{t_lp_{\ast}^2C_2^2(\phi_0)}{64}\big)+\frac{C_3^{'}\log t_l}{t_l}\Big)\frac{at_l}{l \log t_l}\notag\\
&\le R_{\max}Ka\sum_{l=l_0}^{L-1}\Big(\frac{4+ 3C_3^{'}\log(d_1+d_2) + C_3^{'}}{l}+\frac{512 }{p_{\ast}^2C_2^2l}\Big)\notag\\
&\le Ka \bigl( C_3(\phi_0,p_{\ast})\log(d_1+d_2)+C_4(\phi_0, p_{\ast}) \bigr)(\log L+1),
\#
where $C_3(\phi_0,p_{\ast})=6C_mR_{\max}(64\kappa_0)^2/(p_{\ast}^3C_1(\phi_0))$ and $C_4(p_{\ast})= R_{\max}(C_3^{'} + 4 +512/(p_{\ast}^2C_2^2(\phi_0))$.

Finally, the regret bound can be obtained by combining the bounds \eqref{eq:RG_1 matrix}, \eqref{eq:RG_2 matrix} and \eqref{eq:RG_3 matrix} for the three groups:
\$
\RG(T,L)=&\RG_1(T,L)+\RG_2(T,L)+\RG_3(T,L)\\
\le& R_{\max }K\left(2+6 C_{0} \log T\right)+R_{\max } t_{l_0}+6KR_{\max}a(\log L+1)\\
&\qquad +Ka \big( C_3(\phi_0,p_{\ast})\log(d_1+d_2)+C_4(\phi_0, p_{\ast}) \big) (\log L+1)\\
=&\mathcal{O}\left( K r^2 \log(d_1d_2) (\log^2 T+\log(d_1d_2)) \right).
\$
Eventually, we conclude the proof of Theorem \ref{th:matirx}.
\end{proof}

\section{Proof of the Supporting Lemmas}

\subsection{A Tail Inequality for Low-Rank Matrix}\label{pflem0501}
This subsection provides a demonstration of a tail inequality for the low-rank matrix model. The inequality is stated in Lemma \ref{lem0501}. We summarize the key steps as follows. First, we demonstrate in Appendix \ref{subi01} that when the covariance matrix of $\cA$ satisfies the compatibility condition $\Sighat(\cA) \in \cC(\cM, \barcM^{\bot}, \phi)$ and $\cG(\lambda_n)=\{\| \nabla \cL_n(\bTheta^{\true})\|_{\op}\le \lambda_n/2\}$ holds, the nuclear norm of $\Deltahat$ is bounded. To establish these conditions, we prove in Appendix \ref{subi02} that $\Sighat(\cA)$ satisfies the compatibility condition with high probability by showing that $\| \Sighat(\cA') - \Sigma \|_{\op}$ is small with high probability and that $\Sighat(\cA')$ meets the compatibility condition. Finally, in Appendix \ref{subi03}, we show that $\cG(\lambda_n)$ occurs with high probability by using the matrix subgaussian series and the matrix Bernstein Concentration.

The novel concentration result for non-i.i.d. data here, which is obtained by adopting the matrix Bernstein inequality and deriving an inequality for matrix subgaussian series (Lemma \ref{lm:Matrix_Sub-Gaussian_Series}). These techniques differ from the prior low-rank bandit literature, particularly from \citet{li2022simple}. Although they proved a similar result for low-rank bandits, their work doesn't directly utilize inequalities for matrix series in Lemma C.4, leading to exploration time steps of $\Omega(d_1d_2)$ when $d_1d_2$ is large. Therefore, their regret bound has a dependence of $\cO(d_1d_2)$ instead of $\log(d_1+d_2)$. In our work, we consider a non-i.i.d. data set $\cA = { \bX_t \in \RR^{d_1 \times d_2} }_{t=1}^n$ with an unknown subset $\cA' \subset \cA$ consisting of i.i.d. samples. We assume that $\Sigma \in \cC(\cM, \barcM^{\bot}, \phi_1)$, where $\Sigma = \E ( \Vec{(\bX)\Vec{(\bX)}^{\mathrm{T}}} )$. Then, denoting $\Thetahat-\bTheta$ by $\Deltahat$, we present the following result.

\begin{lemma}[A Tail Inequality for Non-i.i.d. Data]\label{lem0501}
If $d_1,d_2 > 1$, for any $\chi > 0$, $\,|\, \mathcal{A}'\,|\,/\,|\, \mathcal{A}\,|\, \ge p/2$, $\,|\, \mathcal{A}\,|\, \ge 2\log (2d_1d_2)/(p C_2^2(\phi_1))$, and $\lambda_n =\lambda_n(\chi, \phi_1\sqrt{p}/2)=\chi \phi_1^2p/(12r)$, the following tail inequality holds:
\$
\P\big( \| \Deltahat \|_N \ge \chi \big) \le \exp\big( - C_1(\sqrt{p} \phi_1/2)\,|\, \cA \,|\,\chi^2 + \log(d_1+d_2) \big) + \exp\big( -p C_2^2(\phi_1) \,|\, \cA \,|\,/2 \big),
\$
where $C_1(\phi_1) = \phi_1^4/(96r\sigma \kappa_0)^2$, $C_2(\phi_1) = \phi_1^2/(4\kappa_0\sqrt{2r(4r\kappa_0^2+\phi_1^2)})$.
\end{lemma}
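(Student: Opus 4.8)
The plan is to follow the regularized $M$-estimation template of \citet{negahban2011estimation} and \citet{wainwright2019high}, adapted to the non-i.i.d.\ design, exactly paralleling the scalar argument behind Lemma~\ref{LASSOtailniid}. Writing $\cL_n(\bTheta)=\frac{1}{|\cA|}\sum_{t\in\cA}(r_t-\la\bX_t,\bTheta\ra)^2$, I would reduce the claim to one deterministic implication and two high-probability events: (i) a deterministic oracle inequality showing that \emph{whenever} $\Sighat(\cA)$ satisfies the compatibility condition $\Sighat(\cA)\in\cC(\cM,\barcM^{\bot},\phi_1\sqrt p/2)$ \emph{and} the event $\cG(\lambda_n)=\{\|\nabla\cL_n(\bTheta^{\true})\|_{\op}\le\lambda_n/2\}$ holds, one has $\|\Deltahat\|_N\le\chi$; (ii) a bound on $\P(\cG(\lambda_n)^c)$; and (iii) a bound on the probability that $\Sighat(\cA)$ violates the compatibility condition. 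A union bound over the complements in (ii) and (iii) then delivers the two exponential terms in the statement.

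For the deterministic step (i) (Appendix~\ref{subi01}), I would start from the basic inequality implied by optimality of $\Thetahat$, namely $\frac{1}{|\cA|}\|\mathfrak X(\Deltahat)\|_2^2\le\la-\nabla\cL_n(\bTheta^{\true}),\Deltahat\ra+\lambda_n\bigl(\|\bTheta^{\true}\|_N-\|\Thetahat\|_N\bigr)$. Since the dual norm of $\|\cdot\|_N$ is $\|\cdot\|_{\op}$, on $\cG(\lambda_n)$ the first term is at most $\frac{\lambda_n}{2}\|\Deltahat\|_N$; combining this with the decomposability of the nuclear norm with respect to $(\cM,\barcM^{\bot})$ yields the cone membership $\|\Deltahat_{\barcM^{\bot}}\|_N\le 3\|\Deltahat_{\barcM}\|_N$. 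On this cone I would apply the compatibility condition \eqref{eq:cC} with $\phi=\phi_1\sqrt p/2$ to get $\|\Deltahat_{\barcM}\|_F^2\le\vec(\Deltahat)^{\mathrm T}\Sighat(\cA)\vec(\Deltahat)/(4\phi)^2$, together with $\|\Deltahat\|_N\le 4\|\Deltahat_{\barcM}\|_N\le 4\sqrt{2r}\,\|\Deltahat_{\barcM}\|_F$ (as $\barcM$ has rank at most $2r$). Solving the resulting inequality gives $\|\Deltahat\|_N\lesssim r\lambda_n/\phi^2$, and the prescribed $\lambda_n=\chi\phi_1^2p/(12r)$ makes the right-hand side at most $\chi$.

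The matrix structure enters in the two probabilistic steps. For (iii) (Appendix~\ref{subi02}) I would first use the positive-semidefinite domination $\Sighat(\cA)\succeq(|\cA'|/|\cA|)\Sighat(\cA')\succeq\tfrac p2\Sighat(\cA')$ to reduce compatibility of $\Sighat(\cA)$ to that of the i.i.d.\ empirical covariance $\Sighat(\cA')$, and then transfer the population compatibility $\Sigma\in\cC(\cM,\barcM^{\bot},\phi_1)$ to $\Sighat(\cA')$ by controlling $\|\Sighat(\cA')-\Sigma\|_{\op}$ with the matrix Bernstein inequality, each summand $\vec(\bX)\vec(\bX)^{\mathrm T}-\Sigma$ having operator norm $\lesssim\kappa_0^2$ because $\|\bX\|_F\le\kappa_0$. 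Using on the cone that $\|\Deltahat\|_F^2\le(1+18r)\|\Deltahat_{\barcM}\|_F^2$, a small operator-norm perturbation preserves compatibility with a constant degraded by the factor $r$; this is precisely what produces $C_2(\phi_1)=\phi_1^2/(4\kappa_0\sqrt{2r(4r\kappa_0^2+\phi_1^2)})$ and, since $|\cA'|\ge p|\cA|/2$, the tail $\exp(-pC_2^2(\phi_1)|\cA|/2)$. For (ii) (Appendix~\ref{subi03}) I would condition on the design $\{\bX_t\}_{t\in\cA}$: as the noise $\varepsilon_t$ is $\sigma$-subgaussian and independent across $t$, the gradient $\nabla\cL_n(\bTheta^{\true})=-\frac{2}{|\cA|}\sum_{t\in\cA}\varepsilon_t\bX_t$ is a matrix sub-Gaussian series with deterministic coefficients. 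Applying Lemma~\ref{lm:Matrix_Sub-Gaussian_Series} together with the matrix Bernstein bound of \citet{tropp2011freedman} to its operator norm, with matrix-variance proxy controlled by $\kappa_0^2$, gives a bound of the form $(d_1+d_2)\exp\bigl(-c|\cA|(\lambda_n/2)^2/(\sigma^2\kappa_0^2)\bigr)$; substituting $\lambda_n=\chi\phi_1^2p/(12r)$ converts the prefactor into the additive $\log(d_1+d_2)$ and yields $C_1(\phi_1)=\phi_1^4/(96r\sigma\kappa_0)^2$.

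I expect the operator-norm gradient bound in step (ii) to be the main obstacle and the genuinely new ingredient relative to the LASSO case. In Lemma~\ref{LASSOtailniid} the regularizer is the $\ell_1$ norm, whose dual is the max-norm, so $\frac1n\mathbf Z^{\mathrm T}\varepsilon$ is handled coordinatewise with a union bound over the $d$ entries; here the dual of the nuclear norm is the operator norm, which does not decouple into independent scalar deviations and instead forces genuine matrix sub-Gaussian/Bernstein concentration. The delicate point is to route the estimate through the matrix series inequality so that only a $(d_1+d_2)$-dimensional prefactor---hence a mere $\log(d_1+d_2)$ cost---appears, rather than the $\Omega(d_1d_2)$ dependence incurred by \citet{li2022simple}. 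Together with the need to respect the non-i.i.d.\ design---addressed by conditioning on the covariates for the gradient and by the covariance-domination trick $\Sighat(\cA)\succeq\tfrac p2\Sighat(\cA')$ for the compatibility transfer---this constitutes the technical core of the lemma.
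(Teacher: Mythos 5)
Your proposal follows essentially the same route as the paper: the same deterministic oracle inequality on the intersection of the gradient event $\cG(\lambda_n)$ and the compatibility event (via decomposability, the cone condition $\|\Deltahat_{\barcM^{\bot}}\|_N\le 3\|\Deltahat_{\barcM}\|_N$, and $\|\Deltahat\|_N\le 4\sqrt{2r}\|\Deltahat_{\barcM}\|_F$), the same transfer of compatibility from $\Sigma$ to $\Sighat(\cA')$ by matrix Bernstein and then to $\Sighat(\cA)$ by the domination $\Sighat(\cA)\succeq(|\cA'|/|\cA|)\Sighat(\cA')$, and the same matrix sub-Gaussian series bound on $\|\frac{1}{|\cA|}\sum_t\varepsilon_t\bX_t\|_{\op}$ yielding the $(d_1+d_2)$ prefactor. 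Your identification of the operator-norm gradient bound as the genuinely new ingredient relative to the LASSO case matches the paper's own emphasis, so the proposal is correct and matches the paper's proof.
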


\subsubsection{A General Tail Inequality}\label{subi01}
Given a sample set (not necessarily i.i.d.) of size $n$, we first consider a general convex loss function $\cL_n(\bTheta)$ with the nuclear norm regularization. We aim to determine under what conditions the estimation error will be bounded.
\begin{lemma}\label{lem:generalresult}
Consider the true parameter $\bTheta^{\true}$ with rank $r$ and the induced pair of subspaces $(\cM,\barcM^{\bot})$ in \eqref{eq:subspacemuv} and \eqref{eq:subspacebmuv}. Then for some constant $\phi>0$, we have
\$
\P\left( \| \Deltahat \|_N \ge \frac{6r\lambda_n}{\phi^2} \right) \le \P\left(\cG^c(\lambda_n) \right) + \P\left( \Sighat \notin \cC(\cM, \barcM^{\bot}, \phi) \right),
\$
where $\cG^c(\lambda_0(\gamma))=\{\|1/n\sum_{t=1}^n\epsilon_t \bX_t \|_{\op} \ge \lambda_0(\gamma)/2\}$
\end{lemma}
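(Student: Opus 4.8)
The plan is to follow the standard analysis of nuclear-norm regularized $M$-estimators in the decomposable-regularizer framework, conditioning on the two good events $\cG(\lambda_n)$ and $\{\Sighat\in\cC(\cM,\barcM^{\bot},\phi)\}$ and showing that on their intersection the bound $\|\Deltahat\|_N\le 6r\lambda_n/\phi^2$ holds \emph{deterministically}; the stated inequality then follows by a union bound over the two complements. The key structural fact I would use throughout is that the loss $\cL_n$ is quadratic, so its second-order Taylor expansion is exact: $\cL_n(\Thetahat)-\cL_n(\bTheta^{\true})=\langle\nabla\cL_n(\bTheta^{\true}),\Deltahat\rangle+\vec(\Deltahat)^{\mathrm{T}}\Sighat\,\vec(\Deltahat)$, with the quadratic term nonnegative since $\Sighat\succeq0$.

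First I would write the basic inequality from optimality of $\Thetahat$, namely $\cL_n(\Thetahat)+\lambda_n\|\Thetahat\|_N\le\cL_n(\bTheta^{\true})+\lambda_n\|\bTheta^{\true}\|_N$. Dropping the nonnegative quadratic term and bounding the gradient term by the duality of the nuclear and operator norms (matrix H\"older inequality), $|\langle\nabla\cL_n(\bTheta^{\true}),\Deltahat\rangle|\le\|\nabla\cL_n(\bTheta^{\true})\|_{\op}\|\Deltahat\|_N\le(\lambda_n/2)\|\Deltahat\|_N$ on $\cG(\lambda_n)$, reduces everything to a relation among nuclear norms. I would then invoke the decomposability of $\|\cdot\|_N$ stated above together with $\bTheta^{\true}\in\cM\subseteq\barcM$ to obtain $\|\bTheta^{\true}\|_N-\|\Thetahat\|_N\le\|\Deltahat_{\barcM}\|_N-\|\Deltahat_{\barcM^{\bot}}\|_N$. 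Combining these yields the cone membership $\|\Deltahat_{\barcM^{\bot}}\|_N\le 3\|\Deltahat_{\barcM}\|_N$, which is exactly the condition under which the compatibility property $\Sighat\in\cC(\cM,\barcM^{\bot},\phi)$ becomes applicable.

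With the cone condition established I would use the definition \eqref{eq:cC} to lower-bound $\vec(\Deltahat)^{\mathrm{T}}\Sighat\,\vec(\Deltahat)\ge(4\phi)^2\|\Deltahat_{\barcM}\|_F^2$, and simultaneously use the basic inequality a second time (now retaining the quadratic term) together with $\|\Deltahat\|_N\le 4\|\Deltahat_{\barcM}\|_N$ to get the upper bound $\vec(\Deltahat)^{\mathrm{T}}\Sighat\,\vec(\Deltahat)\le 3\lambda_n\|\Deltahat_{\barcM}\|_N$. The last ingredient is the rank bound: every matrix in $\barcM$ has rank at most $2r$, so $\|\Deltahat_{\barcM}\|_N\le\sqrt{2r}\,\|\Deltahat_{\barcM}\|_F$. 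Chaining the Frobenius lower bound, the nuclear upper bound, and this rank inequality collapses to a single scalar inequality in $\|\Deltahat_{\barcM}\|_F$, which solves to $\|\Deltahat_{\barcM}\|_F\lesssim\sqrt{r}\,\lambda_n/\phi^2$; feeding this back through $\|\Deltahat\|_N\le 4\|\Deltahat_{\barcM}\|_N\le 4\sqrt{2r}\,\|\Deltahat_{\barcM}\|_F$ produces the desired $O(r\lambda_n/\phi^2)$ bound (the constant $6$ comfortably absorbs the normalization $(4\phi)^2$ of the compatibility constant).

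I expect the main obstacle to be the bookkeeping around the subspace projections rather than any analytic difficulty: specifically, establishing the decomposability step $\|\bTheta^{\true}\|_N-\|\Thetahat\|_N\le\|\Deltahat_{\barcM}\|_N-\|\Deltahat_{\barcM^{\bot}}\|_N$ cleanly and verifying the rank-$2r$ property of $\barcM$, since these are the places where the precise definitions \eqref{eq:subspacemuv}--\eqref{eq:subspacebmuv} must be unpacked rather than treated as a black box. The remaining steps (H\"older, the exact quadratic expansion, and solving the resulting scalar inequality) are routine. Finally, I emphasize that this lemma is purely deterministic given the two events; all randomness enters only through the later bounds on $\P(\cG^c(\lambda_n))$ and $\P(\Sighat\notin\cC(\cM,\barcM^{\bot},\phi))$, which are handled separately in Appendices \ref{subi02} and \ref{subi03} via the matrix sub-Gaussian series and matrix Bernstein estimates.
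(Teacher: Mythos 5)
Your proposal is correct and follows essentially the same route as the paper's proof: the basic inequality from optimality of $\Thetahat$, the exact quadratic expansion, H\"older on the event $\cG(\lambda_n)$, the cone condition $\|\Deltahat_{\barcM^{\bot}}\|_N\le 3\|\Deltahat_{\barcM}\|_N$ via decomposability (the paper isolates this as Lemma \ref{lem:cone}), the compatibility lower bound on the quadratic form, and the rank-$2r$ inequality, yielding $\|\Deltahat\|_N\le 6r\lambda_n/\phi^2$ on the intersection of the two good events. The only cosmetic difference is that you solve for $\|\Deltahat_{\barcM}\|_F$ and convert back, whereas the paper collapses everything into a single scalar inequality in $\|\Deltahat\|_N$; the constants come out identically.
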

\begin{proof}
To begin with, we use the optimality of $\bTheta^{\true}$:
\#\label{eqi101}
\cL_n(\Thetahat) + \lambda_n\| \Thetahat \|_N \le \cL_n(\ \bTheta^{\true} ) + \lambda_n\| \bTheta^{\true} \|_N.
\#
Taking the second order Tailor expansion of $\cL_n(\Thetahat)$ into \eqref{eqi101}, we deduce that
\$
\left\la \nabla \cL_n(\bTheta^{\true}), \Deltahat \right\ra + \frac{1}{2} \vec(\Deltahat)^{\mathrm{T}} \Sighat \vec(\Deltahat) \le \lambda_n\| \Deltahat \|_N.
\$
Applying H\"older's inequality for Schatten norms to the leftmost term of the equation above, it follows that
\#
\frac{1}{2} \vec(\Deltahat)^{\mathrm{T}} \Sighat_{\bX\bX} \vec(\Deltahat) \le \left\| \nabla \cL_n(\bTheta^{\true}) \right\|_{\op} \cdot \| \Deltahat \|_N + \lambda_n \| \Deltahat \|_N. \label{eqi102}
\#
If event $\cG(\lambda_n)=\{ \left\| \nabla \cL_n(\bTheta^{\true})\right\|_{\op}\le \lambda_n/2\}$ holds true, \eqref{eqi102} can be written as
\#\label{eqi1021}
\vec(\Deltahat)^{\mathrm{T}} \Sighat_{\bX\bX} \vec(\Deltahat) \le 3 \lambda_n \| \Deltahat \|_N.
\#
Additionally, by invoking Lemma \ref{lem:cone} we have
\#\label{eq:deltahatN}
\| \Deltahat \|_N \le\|\Deltahat_{\barcM}\|_N + \|\Deltahat_{\barcM^{\bot}}\|_N \le4\|\Deltahat_{\barcM}\|_N.
\#
Furthermore, if $\Sighat \in \cC(\cM,\barcM^{\bot}, \phi)$ in \eqref{eq:cC}, it follows that
\#
\vec(\Deltahat)^{\mathrm{T}} \Sighat \vec(\Deltahat) &\ge16\phi^2\| \Deltahat_{\barcM} \|_F^2\ge \frac{8\phi^2}{r}\| \Deltahat_{\barcM} \|_N^2 \ge \frac{\phi^2}{2r}\| \Deltahat \|_N^2,\label{eqi103}
\#
where the second inequality is derived since any matrix $\bDelta\in\barcM(\UU,\VV)$ has rank at most $2r$ \citep{wainwright2019high}, and the last inequality is by applying \eqref{eq:deltahatN}.
Then by combining \eqref{eqi103} and \eqref{eqi1021}, we finally get
\$
\frac{\phi^2}{2r}\| \Deltahat \|_N^2 \le3 \lambda_n \| \Deltahat \|_N
\$
thus we prove that
\$
\| \Deltahat \|_N \le \frac{6r\lambda_n}{\phi^2}.
\$
Thus, the proof is accomplished.
\end{proof}

Then, we demonstrate that by properly choosing the regularization weight $\lambda_n$, the error vector $\Deltahat=\Thetahat-\bTheta^{\true}$ is in special cone in the following lemma.
\begin{lemma}\label{lem:cone}
Suppose that $\cL_n$ is a convex and differentiable function and the following event holds:
\#\label{eq:cG}
\cG(\lambda_n)=\Big\{ \left\| \nabla \cL_n(\bTheta^{\true})\right\|_{\op}\le \frac{\lambda_n}{2}\Big\}
\#
Then for the pair $(\cM,\barcM^{\bot})$ defined in \eqref{eq:subspacemuv} and \eqref{eq:subspacebmuv}, the error $\Deltahat$ belongs to the set
\#\label{eq:cV}
\cV(\cM,\barcM^{\bot})=\left\{ \bDelta\in\RR^{d_1\times d_2} \mid \|\bDelta_{\barcM^{\bot}}\|_N\le 3\|\bDelta_{\barcM}\|_N\right\}.
\#
\end{lemma}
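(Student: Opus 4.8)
The plan is to establish the cone membership $\Deltahat\in\cV(\cM,\barcM^{\bot})$ by combining the optimality of the nuclear-norm regularized estimator with the decomposability of the nuclear norm, following the standard analysis of decomposable regularizers in \citet{wainwright2019high}. First I would invoke the optimality of $\Thetahat$, which minimizes $\cL_n(\bTheta)+\lambda_n\|\bTheta\|_N$, to get $\cL_n(\Thetahat)+\lambda_n\|\Thetahat\|_N\le\cL_n(\bTheta^{\true})+\lambda_n\|\bTheta^{\true}\|_N$. Since $\cL_n$ is convex and differentiable, the first-order inequality $\cL_n(\Thetahat)-\cL_n(\bTheta^{\true})\ge\la\nabla\cL_n(\bTheta^{\true}),\Deltahat\ra$ holds, so that after rearranging, $\lambda_n\bigl(\|\Thetahat\|_N-\|\bTheta^{\true}\|_N\bigr)\le-\la\nabla\cL_n(\bTheta^{\true}),\Deltahat\ra$. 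Applying the duality between the nuclear and operator norms (H\"older's inequality for Schatten norms) bounds the right-hand side by $\|\nabla\cL_n(\bTheta^{\true})\|_{\op}\,\|\Deltahat\|_N$, and on the event $\cG(\lambda_n)$ defined in \eqref{eq:cG} this is at most $(\lambda_n/2)\|\Deltahat\|_N$. Dividing by $\lambda_n>0$ yields the upper bound $\|\Thetahat\|_N-\|\bTheta^{\true}\|_N\le\tfrac12\|\Deltahat\|_N$.

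The second and crucial step is to produce a matching lower bound on $\|\Thetahat\|_N$ via decomposability. Because $\bTheta^{\true}=\bU_k\bD_k\bV_k^{\mathrm{T}}$ has its row and column spaces inside $\VV_k$ and $\UU_k$, it lies in $\cM=\cM(\UU_k,\VV_k)$, while $\Deltahat_{\barcM^{\bot}}\in\barcM^{\bot}$ holds by construction of the projection. Writing $\Thetahat=\bTheta^{\true}+\Deltahat_{\barcM}+\Deltahat_{\barcM^{\bot}}$ and applying the reverse triangle inequality to peel off $\Deltahat_{\barcM}$, then the decomposability identity $\|\bTheta^{\true}+\Deltahat_{\barcM^{\bot}}\|_N=\|\bTheta^{\true}\|_N+\|\Deltahat_{\barcM^{\bot}}\|_N$ for the pair of elements of $\cM$ and $\barcM^{\bot}$, gives $\|\Thetahat\|_N\ge\|\bTheta^{\true}\|_N+\|\Deltahat_{\barcM^{\bot}}\|_N-\|\Deltahat_{\barcM}\|_N$.

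Combining the two bounds cancels $\|\bTheta^{\true}\|_N$ and leaves $\|\Deltahat_{\barcM^{\bot}}\|_N-\|\Deltahat_{\barcM}\|_N\le\tfrac12\|\Deltahat\|_N\le\tfrac12\bigl(\|\Deltahat_{\barcM}\|_N+\|\Deltahat_{\barcM^{\bot}}\|_N\bigr)$, where the final step uses the triangle inequality once more. Rearranging this single scalar inequality gives $\tfrac12\|\Deltahat_{\barcM^{\bot}}\|_N\le\tfrac32\|\Deltahat_{\barcM}\|_N$, that is $\|\Deltahat_{\barcM^{\bot}}\|_N\le3\|\Deltahat_{\barcM}\|_N$, which is exactly membership in the cone $\cV(\cM,\barcM^{\bot})$ of \eqref{eq:cV}. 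I expect the only delicate point to be the bookkeeping in the decomposability step, namely verifying that $\bTheta^{\true}\in\cM$ and that the subadditivity and decomposability identities are applied to the correct pair of projected components; everything else is routine manipulation of norm inequalities, and no concentration argument is needed here since the randomness has already been absorbed into the event $\cG(\lambda_n)$.
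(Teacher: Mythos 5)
Your proposal is correct and follows essentially the same route as the paper's proof: optimality of $\Thetahat$, the first-order convexity inequality, H\"older duality between the nuclear and operator norms on the event $\cG(\lambda_n)$, and the decomposability identity $\|\bTheta^{\true}+\Deltahat_{\barcM^{\bot}}\|_N=\|\bTheta^{\true}\|_N+\|\Deltahat_{\barcM^{\bot}}\|_N$ combined with the triangle inequality. The only difference is cosmetic: the paper packages the same inequalities into the auxiliary function $\cF(\Deltahat)$ and shows $\cF(\Deltahat)\le 0$ forces the cone condition, whereas you rearrange them directly; the arithmetic yielding the factor $3$ is identical.
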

This lemma is adopted from Proposition $9.13$ in \citet{wainwright2019high}. For the completeness of the theory, we present the proof here.
\begin{proof}
The argument starts with the function $\cF(\Deltahat):\RR^{d_1\times d_2}\rightarrow\RR$ given by
\#\label{eq:cF}
\cF(\Deltahat)=\cL_n(\bTheta^{\true}+\Deltahat) - \cL_n(\bTheta^{\true}) + \lambda_n\left( \|\bTheta^{\true}+\Deltahat\|_N - \|\bTheta^{\true}\|_N\right).
\#
By the convexity of $\cL_n$, we know that \#\label{eq:convcL}
\cL_n(\bTheta^{\true}+\Deltahat) - \cL_n(\bTheta^{\true}) &\ge \left< \nabla\cL_n(\bTheta^{\true}), \Deltahat\right> \ge -\|\nabla\cL_n(\bTheta^{\true})\|_{\op}\cdot\|\Deltahat\|_N\nonumber\\
&\ge -\frac{\lambda_n}{2}(\|\Deltahat_{\barcM}\|_N + \|\Deltahat_{\barcM^{\bot}}\|_N),
\#
where the second inequality is by applying the matrix H\"older inequality for spectrum norms, and the last inequality is deduced by conditioning on $\cG(\lambda_n)$ in \eqref{eq:cG} and the triangle inequality.

Additionally, we have
\#\label{eq:bThetaN}
\|\bTheta^{\true}+\Deltahat\|_N - \|\bTheta^{\true}\|_N &= \|\bTheta^{\true}_{\cM}+\Deltahat_{\barcM}+\Deltahat_{\barcM^{\bot}}\|_N - \|\bTheta^{\true}_{\cM}\|_N\nonumber\\
&\ge \|\bTheta^{\true}_{\cM}+\Deltahat_{\barcM^{\bot}}\|_N - \|\Deltahat_{\barcM}\|_N - \|\bTheta^{\true}_{\cM}\|_N\nonumber\\
&= \|\Deltahat_{\barcM^{\bot}}\|_N - \|\Deltahat_{\barcM}\|_N,
\#
where the second inequality is by the triangle inequality, and the last is because of the decomposability of subspaces $(\cM,\barcM^{\bot})$. Taking \eqref{eq:bThetaN} and \eqref{eq:convcL} back into \eqref{eq:cF}, we obtain that
\#\label{eq:cF2}
\cF(\Deltahat) &\ge -\frac{\lambda_n}{2}(\|\Deltahat_{\barcM}\|_N + \|\Deltahat_{\barcM^{\bot}}\|_N) + \frac{\lambda_n}{2} \left( \|\Deltahat_{\barcM^{\bot}}\|_N - \|\Deltahat_{\barcM}\|_N\right)\nonumber\\
&= \frac{\lambda_n}{2}\left( \|\Deltahat_{\barcM^{\bot}}\|_N - 3\|\Deltahat_{\barcM}\|_N\right).
\#
Since the optimality of $\Thetahat$ implies that $\cF(\Deltahat) \le0$, we obtain from \eqref{eq:cF2} that
\$
\|\Deltahat_{\barcM^{\bot}}\|_N - 3\|\Deltahat_{\barcM}\|_N\le0,
\$
which completes the proof.
\end{proof}

In the next two sections, we will prove that $\Sighat \in \cC(\cM, \barcM^{\bot}, \phi)$ and event $\cG(\lambda_n)$ holds with a high probability separately.

\subsubsection{Compatibility Condition for Non-i.i.d. Samples} \label{subi02}
Recall the whole set $\cA$, the i.i.d. subset $\cA'$ and the corresponding covariance matrices $\Sighat(\cA)$ and $\Sighat(\cA')$.
In this part, we will show that $\| \Sighat - \Sigma \|_{\op}$ is small with high probability in the beginning so that we can prove $\Sighat(\cA') \in \cC(\cM, \barcM^{\bot}, \phi_1)$ with high probability. Afterwards, we finally prove that $ \Sighat(\cA) \in \cC\left( \cM, \barcM^{\bot}, \frac{\phi_1}{\sqrt{2}}\sqrt{\frac{\,|\, \cA' \,|\,}{\,|\, \cA \,|\,}} \right)$ with high probability.
\begin{lemma}\label{lemi02}
Given i.i.d. samples $\bX_1, \ldots, \bX_n \in \RR^{d_1 \times d_2}$ such that $\bX_t$ satisfy $\| \bX_t \|_{F} \le \kappa_0$ for all $t \in [n]$, then when $n\ge \log(2d_1d_2)/C_2^2(\phi_1)$, we have
\$
\P\left( \| \Sighat - \Sigma \|_{\op} \ge \frac{\phi_1^2}{4r} \right) \le \exp\left(-C_2^2(\phi_1)n\right),
\$
where $\Sigma = \E\left( \vec(\bX_t)\vec(\bX_t)^{\mathrm{T}} \right)$, $\Sighat = \frac{1}{n}\sum_{t=1}^n \vec(\bX_t)\vec(\bX_t)^{\mathrm{T}}$, and $C_2(\phi_1)=\phi_1^2/(4\kappa_0\sqrt{2r(4r\kappa_0^2+\phi_1^2)})$.
\end{lemma}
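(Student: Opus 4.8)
The plan is to cast $\Sighat - \Sigma$ as a normalized sum of i.i.d.\ centered self-adjoint matrices and invoke the matrix Bernstein inequality of \citet{tropp2011freedman}. Write $v_t = \vec(\bX_t) \in \RR^{d_1 d_2}$, so that $\|v_t\|_2 = \|\bX_t\|_F \le \kappa_0$, and set $Z_t = v_t v_t^{\mathrm{T}} - \Sigma$. Each $Z_t$ is a mean-zero self-adjoint matrix of size $d_1 d_2 \times d_1 d_2$, the $Z_t$ are i.i.d., and $\Sighat - \Sigma = \frac{1}{n}\sum_{t=1}^n Z_t$. Thus $\P(\|\Sighat-\Sigma\|_{\op}\ge\epsilon) = \P(\|\sum_t Z_t\|_{\op}\ge n\epsilon)$, and it remains to supply the two quantities required by Bernstein: a uniform bound on $\|Z_t\|_{\op}$ and a bound on the matrix variance $\|\sum_t \E Z_t^2\|_{\op}$.

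For the norm bound, both $v_t v_t^{\mathrm{T}}$ and $\Sigma = \E[v_t v_t^{\mathrm{T}}]$ are positive semidefinite with operator norm at most $\|v_t\|_2^2 \le \kappa_0^2$, so their difference satisfies $\|Z_t\|_{\op}\le\kappa_0^2 =: L$. For the variance, expanding $Z_t^2 = \|v_t\|_2^2\, v_tv_t^{\mathrm{T}} - v_tv_t^{\mathrm{T}}\Sigma - \Sigma v_tv_t^{\mathrm{T}} + \Sigma^2$ and taking expectations (using $\E[v_tv_t^{\mathrm{T}}]=\Sigma$) gives $\E Z_t^2 = \E[\|v_t\|_2^2\, v_tv_t^{\mathrm{T}}] - \Sigma^2 \preceq \kappa_0^2\Sigma$, since $\|v_t\|_2^2\le\kappa_0^2$ and $\Sigma^2\succeq 0$. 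Hence $\|\sum_t\E Z_t^2\|_{\op}\le n\kappa_0^2\|\Sigma\|_{\op}\le n\kappa_0^4 =: \nu$.

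Feeding $L$ and $\nu$ into the self-adjoint matrix Bernstein inequality with deviation $u = n\epsilon$ yields $\P(\|\Sighat-\Sigma\|_{\op}\ge\epsilon)\le 2d_1d_2\exp\!\big(-\tfrac{n\epsilon^2}{2\kappa_0^2(\kappa_0^2+\epsilon/3)}\big)$, the dimension prefactor $2d_1d_2$ being exactly the source of the $\log(2d_1d_2)$ term in the hypothesis. Specializing to $\epsilon = \phi_1^2/(4r)$, a short computation shows the exponent equals $\frac{n\phi_1^4}{32 r\kappa_0^2(r\kappa_0^2 + \phi_1^2/12)}$, which is at least $2C_2^2(\phi_1)n$ because the required inequality reduces to $4r\kappa_0^2+\phi_1^2 \ge 2(r\kappa_0^2 + \phi_1^2/12)$, i.e.\ $2r\kappa_0^2 + \tfrac{5}{6}\phi_1^2\ge 0$, which holds trivially. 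Therefore the tail is at most $2d_1d_2\exp(-2C_2^2(\phi_1)n)$, and the sample-size condition $n\ge\log(2d_1d_2)/C_2^2(\phi_1)$ gives $\log(2d_1d_2)\le C_2^2(\phi_1)n$, which absorbs the prefactor into one factor of the exponent and leaves the claimed bound $\exp(-C_2^2(\phi_1)n)$.

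The norm and variance estimates are routine; I expect the delicate step to be the constant bookkeeping in the last paragraph — matching the Bernstein exponent against the prescribed $C_2(\phi_1)$ and, crucially, arranging that the exponent beats $2C_2^2(\phi_1)n$ (rather than merely $C_2^2(\phi_1)n$) so that the sample-size hypothesis can swallow the $2d_1d_2$ prefactor and collapse the prefactor-times-exponential into the single clean exponential stated in the lemma.
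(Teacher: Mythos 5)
Your proof is correct and follows essentially the same route as the paper: vectorize, apply the matrix Bernstein inequality of \citet{tropp2011freedman} to the centered outer products, specialize to $\tau=\phi_1^2/(4r)$, and use the sample-size hypothesis to absorb the $2d_1d_2$ prefactor via $\log(2d_1d_2)\le C_2^2(\phi_1)n$. Your norm and variance bounds ($\|Z_t\|_{\op}\le\kappa_0^2$ and $\|\sum_t\E Z_t^2\|_{\op}\le n\kappa_0^4$, exploiting that both $v_tv_t^{\mathrm{T}}$ and $\Sigma$ are PSD) are in fact slightly sharper than the paper's ($2\kappa_0^2$ and $2n\kappa_0^4$), which only makes the final comparison with $2C_2^2(\phi_1)n$ easier.
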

\begin{proof}
Since 
$$
\left\|\vec(\bX_t)\vec(\bX_t)^{\top}\right\|_{\op} \le \left\|\vec(\bX_t)\vec(\bX_t)^{\top}\right\|_{F} \le \|\bX_t\|_F^2 \le \kappa_0^2,
$$
we know that
$$
\left\|\vec(\bX_t)\vec(\bX_t)^{\top} - \E\vec(\bX_t)\vec(\bX_t)^{\top}\right\|_{\op} \le 2\kappa_0^2.
$$
Moreover, we obtain the following bound
\$
\left\|\E\vec(\bX_t)\vec(\bX_t)^{\top}\vec(\bX_t)\vec(\bX_t)^{\top}\right\|_{\op} &\le \E\left\|\vec(\bX_t)\vec(\bX_t)^{\top}\vec(\bX_t)\vec(\bX_t)^{\top}\right\|_{\op}\\
&= \E\left\|\vec(\bX_t)\vec(\bX_t)^{\top}\right\|_{\op}^2 \le \kappa_0^4,
\$
where the first inequality uses Jenssen inequality, and the equality is deduced since matrix $\vec(\bX_t)\vec(\bX_t)^{\top}$ is positive semi-definite and for any positive semi-definite matrix $A$, we have $\sigma_{\max}(A^2) = (\sigma_{\max}(A))^2$. Hence, it follows that
\$
&\left\|\E\left(\vec(\bX_t)\vec(\bX_t)^{\top} - \E\vec(\bX_t)\vec(\bX_t)^{\top}\right)\cdot\left(\vec(\bX_t)\vec(\bX_t)^{\top} - \E\vec(\bX_t)\vec(\bX_t)^{\top}\right)\right\|_{\op}\\
&\qquad= \left\|\E\vec(\bX_t)\vec(\bX_t)^{\top}\vec(\bX_t)\vec(\bX_t)^{\top} - \left(\E\vec(\bX_t)\vec(\bX_t)^{\top}\right)^2\right\|_{\op}\\
&\qquad\le \left\|\E\vec(\bX_t)\vec(\bX_t)^{\top}\vec(\bX_t)\vec(\bX_t)^{\top}\right\|_{\op} + \left(\E\left\|\vec(\bX_t)\vec(\bX_t)^{\top}\right\|_{\op}\right)^2 \le 2\kappa_0^4.
\$
Then, by invoking Lemma \ref{lm:Matrix_Bernstein} with $\rho^2=2n\kappa_0^4$ and $M=2\kappa_0^2$, we get
\$
\P\big(\| \Sighat - \Sigma \|_{\op} \ge \tau\big) &= \P\Big(\big\| \frac{1}{n}\sum_{t=1}^n \left(\vec(\bX_t)\vec(\bX_t)^{\mathrm{T}} - \E\vec(\bX_t)\vec(\bX_t)^{\mathrm{T}}\right) \big\|_{\op} \ge \tau\Big)\\
&\le d_1d_2\exp\Big(\frac{-n\tau^2/2}{2\kappa_0^4 + 2\kappa_0^2\tau/3}\Big) = \exp\Big(\frac{-n\tau^2}{4\kappa_0^2(\kappa_0^2 + \tau/3)} + \log(2d_1d_2)\Big).
\$
Taking $\tau=\phi_1^2/(4r)$, we can find $-n\tau^2/4\kappa_0^2(\kappa_0^2 + \tau/3)\le -2C_2^2(\phi_1)n$ and $\log(2d_1d_2)\le C_2^2(\phi_1)n$, which completes the proof.
\end{proof}

\begin{lemma}\label{lemi03}
There exists an i.i.d. sample set $\cA'$ in a non-i.i.d. set $\cA = \{ \bX_t \}_{t=1}^n$. Suppose $\Sigma \in \cC(\cM, \barcM^{\bot}, \phi_1)$ for some $\phi_1>0$ and $\| \Sighat(\cA')-\Sigma \|_{\op} \le \phi_1^2/(4r)$, then $\Sighat(\cA') \in \cC(\cM, \barcM^{\bot}, \phi_1/\sqrt{2})$.
\end{lemma}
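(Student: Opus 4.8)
The plan is to read Lemma~\ref{lemi03} as the deterministic statement that the restricted-curvature (compatibility) constant degrades by at most a factor $\sqrt 2$ under a sufficiently small operator-norm perturbation of the covariance. Unfolding the definition \eqref{eq:cC}, it suffices to show that for every $\bDelta$ in the cone $\cV(\cM,\barcM^{\bot})=\{\bDelta:\|\bDelta_{\barcM^{\bot}}\|_N\le 3\|\bDelta_{\barcM}\|_N\}$ one has $\vec(\bDelta)^{\mathrm{T}}\Sighat(\cA')\vec(\bDelta)\ge 8\phi_1^2\|\bDelta_{\barcM}\|_F^2$, since the target constant $\phi_1/\sqrt2$ enters through $(4\cdot\phi_1/\sqrt2)^2=8\phi_1^2$. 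I would fix such a $\bDelta$ for the rest of the argument.

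First I would split the quadratic form as $\vec(\bDelta)^{\mathrm{T}}\Sighat(\cA')\vec(\bDelta)=\vec(\bDelta)^{\mathrm{T}}\Sigma\vec(\bDelta)+\vec(\bDelta)^{\mathrm{T}}(\Sighat(\cA')-\Sigma)\vec(\bDelta)$. The hypothesis $\Sigma\in\cC(\cM,\barcM^{\bot},\phi_1)$ lower-bounds the population term by $16\phi_1^2\|\bDelta_{\barcM}\|_F^2$. For the perturbation term I would use the elementary bound $\vec(\bDelta)^{\mathrm{T}}(\Sighat(\cA')-\Sigma)\vec(\bDelta)\ge -\|\Sighat(\cA')-\Sigma\|_{\op}\,\|\bDelta\|_F^2\ge -\tfrac{\phi_1^2}{4r}\|\bDelta\|_F^2$, invoking $\|\vec(\bDelta)\|_2=\|\bDelta\|_F$ together with the assumed operator-norm closeness $\|\Sighat(\cA')-\Sigma\|_{\op}\le\phi_1^2/(4r)$.

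The crux is to re-express $\|\bDelta\|_F^2$ through $\|\bDelta_{\barcM}\|_F^2$ using only the cone membership. By orthogonality of the two projections, $\|\bDelta\|_F^2=\|\bDelta_{\barcM}\|_F^2+\|\bDelta_{\barcM^{\bot}}\|_F^2$. Because $\|\cdot\|_F\le\|\cdot\|_N$ and every matrix in $\barcM$ has rank at most $2r$ (so $\|\bDelta_{\barcM}\|_N\le\sqrt{2r}\,\|\bDelta_{\barcM}\|_F$), the cone constraint yields $\|\bDelta_{\barcM^{\bot}}\|_F\le\|\bDelta_{\barcM^{\bot}}\|_N\le 3\|\bDelta_{\barcM}\|_N\le 3\sqrt{2r}\,\|\bDelta_{\barcM}\|_F$, whence $\|\bDelta\|_F^2\le(1+18r)\|\bDelta_{\barcM}\|_F^2$. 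Substituting into the split gives $\vec(\bDelta)^{\mathrm{T}}\Sighat(\cA')\vec(\bDelta)\ge\bigl(16-\tfrac{1+18r}{4r}\bigr)\phi_1^2\|\bDelta_{\barcM}\|_F^2$, and since $\tfrac{1+18r}{4r}\le \tfrac{19}{4}$ for $r\ge1$ the coefficient exceeds $8$, which is exactly the requirement; membership $\Sighat(\cA')\in\cC(\cM,\barcM^{\bot},\phi_1/\sqrt2)$ then follows.

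The only genuine obstacle is this norm-conversion step: the cone is phrased in the nuclear norm, whereas both the curvature lower bound and the perturbation are naturally controlled in the Frobenius norm, so the bridge really requires the low-rank ($\le 2r$) structure of $\barcM$ to pass between the two. The scale $1/(4r)$ in the hypothesis is calibrated precisely so that, after paying the factor $2r$ incurred by the rank bound, the curvature constant drops by at most $\sqrt2$. No probabilistic input enters here—Lemma~\ref{lemi02} supplies $\|\Sighat(\cA')-\Sigma\|_{\op}\le\phi_1^2/(4r)$ with high probability separately, and the present lemma is purely the linear-algebra step that converts that bound into a compatibility guarantee.
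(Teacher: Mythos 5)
Your proof is correct and follows essentially the same route as the paper's: lower-bound the population quadratic form via the compatibility of $\Sigma$, control the perturbation by $\|\Sighat(\cA')-\Sigma\|_{\op}\|\bDelta\|_F^2$, and use the cone constraint together with $\rank(\bDelta_{\barcM})\le 2r$ to convert $\|\bDelta\|_F^2$ into a multiple of $\|\bDelta_{\barcM}\|_F^2$. The only (immaterial) difference is in that last conversion: the paper uses $\|\bDelta\|_F^2\le\|\bDelta\|_N^2\le(4\sqrt{2r}\|\bDelta_{\barcM}\|_F)^2=32r\|\bDelta_{\barcM}\|_F^2$, whereas you use Pythagoras on the orthogonal decomposition to get the slightly sharper $(1+18r)\|\bDelta_{\barcM}\|_F^2$; both comfortably yield the $\sqrt{2}$ degradation.
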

\begin{proof}
Since $\Sigma \in \cC(\cM, \barcM^{\bot}, \phi_1)$, for any matrix $\bDelta \in \cV(\cM,\barcM^{\bot})$ in \eqref{eq:cV}, we have $\| \bDelta_{\barcM}\|_F^2 \le  \vec(\bDelta)^{\mathrm{T}} \Sigma \vec(\bDelta)/(4\phi_1)^2$. Then, using $\| \Sighat(\cA')-\Sigma \|_{\op} \le \phi_1^2/(4r)$, we deduce that
\#
\big| \vec(\bDelta)^{\mathrm{T}} \Sighat(\cA') \vec(\bDelta) - \vec(\bDelta)^{\mathrm{T}} \Sigma \vec(\bDelta)\big| &= \| \bDelta \|_{F}^2 \,|\, \vec(\tbv)^{\mathrm{T}} \left(\Sighat(\cA')- \Sigma\right) \vec(\tbv) \,|\, \nonumber\\
&\le \| \Sighat(\cA')-\Sigma \|_{\op} \| \bDelta \|_{F}^2  \le \frac{\phi_1^2}{4r} \| \bDelta \|_F^2, \label{eqi301}
\#
where we denote $\vec(\bDelta)/\| \bDelta \|_{F}$ as $\vec(\tbv)$.
In addition, by the matrix inequalities, it is deduced that
\#
\| \bDelta \|_F^2\le\| \bDelta \|_N^2 \le (4\| \bDelta_{\barcM} \|_N)^2 \le (4\sqrt{2r}\| \bDelta_{\barcM} \|_F)^2, \label{eqi302}
\#
where the second inequality is by applying \eqref{eq:deltahatN} and the reason for the last inequality is that $\rank(\bDelta_{\barcM})\le 2r$.
We obtain from \eqref{eqi301} and \eqref{eqi302} that
\#
\Big| \frac{\vec(\bDelta)^{\mathrm{T}} \Sighat(\cA') \vec(\bDelta)}{\vec(\bDelta)^{\mathrm{T}} \Sigma \vec(\bDelta)} -1 \Big| &\le \frac{ \phi_1^2/(4r)\cdot(4\sqrt{2r}\| \bDelta_{\barcM} \|_F)^2}{(4\phi_1\| \bDelta_{\barcM} \|_F)^2} =\frac{1}{2}. \label{eqi303}
\#
Thus the following result is obvious:
\#
\| \bDelta_{\barcM} \|_F^2 \le \frac{\vec(\bDelta)^{\mathrm{T}} \Sigma \vec(\bDelta)}{(4\phi_1)^2} \le \frac{2\vec(\bDelta)^{\mathrm{T}} \Sighat(\cA') \vec(\bDelta)}{(4\phi_1)^2},
\#
which completes the proof.
\end{proof}

\begin{lemma}\label{lemi04}
Suppose that $\Sigma \in \cC(\cM, \barcM^{\bot}, \phi_1)$ for some $\phi_1>0$. If $\,|\, \cA' \,|\, \ge \log(2d_1d_2)/C_2^2(\phi_1)$, we have
\$
\P\Big( \Sighat(\cA) \in \cC\Big( \cM, \barcM^{\bot}, \frac{\phi_1}{\sqrt{2}}\sqrt{\frac{\,|\, \cA' \,|\,}{\,|\, \cA \,|\,}} \Big) \Big) \ge 1 - \exp\left( -C_2^2(\phi_1) \,|\, \cA' \,|\, \right),
\$
where $C_2(\phi_1) = \phi_1^2/(4\kappa_0\sqrt{2r(4r\kappa_0^2+\phi_1^2)})$.
\end{lemma}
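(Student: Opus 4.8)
The plan is to obtain the compatibility condition for the full (non-i.i.d.) sample covariance $\Sighat(\cA)$ by transferring it from the i.i.d. subset $\cA'$, for which the two preceding lemmas already do most of the work. Concretely, I would combine Lemma \ref{lemi02} (an operator-norm concentration bound for the i.i.d. covariance) with Lemma \ref{lemi03} (which upgrades a small perturbation $\|\Sighat(\cA')-\Sigma\|_{\op}$ into a compatibility guarantee for $\Sighat(\cA')$), and then pass from $\Sighat(\cA')$ to $\Sighat(\cA)$ using positive-semidefinite monotonicity of sample covariance matrices under adding samples.

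First I would apply Lemma \ref{lemi02} to the i.i.d. samples indexed by $\cA'$ with $n=|\cA'|$. Since by hypothesis $|\cA'|\ge \log(2d_1d_2)/C_2^2(\phi_1)$, the lemma yields that the event
\[
\cE=\Big\{\|\Sighat(\cA')-\Sigma\|_{\op}\le \tfrac{\phi_1^2}{4r}\Big\}
\]
holds with probability at least $1-\exp(-C_2^2(\phi_1)|\cA'|)$. Next, on $\cE$, Lemma \ref{lemi03} (using the hypothesis $\Sigma\in\cC(\cM,\barcM^{\bot},\phi_1)$) gives $\Sighat(\cA')\in\cC(\cM,\barcM^{\bot},\phi_1/\sqrt2)$; that is, for every $\bDelta$ with $\|\bDelta_{\barcM^{\bot}}\|_N\le 3\|\bDelta_{\barcM}\|_N$ we have $\|\bDelta_{\barcM}\|_F^2\le \vec(\bDelta)^{\mathrm{T}}\Sighat(\cA')\vec(\bDelta)/(8\phi_1^2)$, since $(4\cdot\phi_1/\sqrt2)^2=8\phi_1^2$.

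The remaining step is the deterministic comparison between $\Sighat(\cA)$ and $\Sighat(\cA')$. Because $\cA'\subseteq\cA$ and each summand $\vec(\bX_t)\vec(\bX_t)^{\mathrm{T}}$ is positive semidefinite, dropping the terms in $\cA\setminus\cA'$ can only decrease the quadratic form, so
\[
\vec(\bDelta)^{\mathrm{T}}\Sighat(\cA)\vec(\bDelta)=\frac{1}{|\cA|}\sum_{t\in\cA}\la\vec(\bX_t),\vec(\bDelta)\ra^2\ge \frac{|\cA'|}{|\cA|}\,\vec(\bDelta)^{\mathrm{T}}\Sighat(\cA')\vec(\bDelta).
\]
Combining this with the compatibility bound for $\Sighat(\cA')$ on $\cE$, and writing $\psi=\tfrac{\phi_1}{\sqrt2}\sqrt{|\cA'|/|\cA|}$ so that $(4\psi)^2=8\phi_1^2\,|\cA'|/|\cA|$, I would chain the inequalities as
\[
\|\bDelta_{\barcM}\|_F^2\le \frac{\vec(\bDelta)^{\mathrm{T}}\Sighat(\cA')\vec(\bDelta)}{8\phi_1^2}\le \frac{(|\cA|/|\cA'|)\,\vec(\bDelta)^{\mathrm{T}}\Sighat(\cA)\vec(\bDelta)}{8\phi_1^2}=\frac{\vec(\bDelta)^{\mathrm{T}}\Sighat(\cA)\vec(\bDelta)}{(4\psi)^2},
\]
which is exactly the statement $\Sighat(\cA)\in\cC(\cM,\barcM^{\bot},\psi)$. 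Since this holds on $\cE$, the claimed probability bound follows immediately.

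The argument is essentially a clean composition, so there is no deep obstacle; the only points requiring care are (i) getting the positive-semidefinite monotonicity to point in the correct direction, namely that we lower-bound $\Sighat(\cA)$ by a fraction of $\Sighat(\cA')$, which is the useful direction because the cone condition is a lower bound on the quadratic form; and (ii) bookkeeping the constants through the $(4\phi)^2$ normalization in the definition of $\cC(\cdot)$ so that the factor $|\cA'|/|\cA|$ lands precisely inside the square root of the degraded constant $\psi$. I expect the verification of these constant factors, rather than any conceptual difficulty, to be the most error-prone part of the write-up.
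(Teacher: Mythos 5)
Your proposal is correct and follows essentially the same route as the paper's proof: Lemma \ref{lemi02} plus Lemma \ref{lemi03} give compatibility of $\Sighat(\cA')$ with constant $\phi_1/\sqrt{2}$ on a high-probability event, and then the positive-semidefinite decomposition $\Sighat(\cA)\succeq (|\cA'|/|\cA|)\,\Sighat(\cA')$ transfers this to $\Sighat(\cA)$ with the degraded constant. Your constant bookkeeping through the $(4\phi)^2$ normalization matches the paper exactly.
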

\begin{proof}
Since $\,|\, \cA' \,|\, \ge \log(2d_1d_2)/C_2^2(\phi_1)$, we can combine Lemma \ref{lemi02} and Lemma \ref{lemi03} to get
\$
\P\Big( \Sighat(\cA') \in \cC\big( \cM, \barcM^{\bot}, \frac{\phi_1}{\sqrt{2}} \big) \Big) \ge 1-\exp\left( -C_2^2(\phi_1) \,|\, \cA' \,|\, \right).
\$
Furthermore, in order to prove the compatibility of $\Sighat(\cA)$, we write it as:
\$
\Sighat(\cA) = \frac{\,|\, \cA' \,|\,}{\,|\, \cA \,|\,}\Sighat( \cA' ) + \frac{\,|\, \cA \backslash \cA' \,|\,}{\,|\, \cA \,|\,}\Sighat(\cA \backslash \cA') \ge \frac{\,|\, \cA' \,|\,}{\,|\, \cA \,|\,}\Sighat(\cA').
\$
If $\Sighat(\cA') \in \cC\left( \cM, \barcM^{\bot}, \phi_1/\sqrt{2} \right)$, for any $\bDelta \in \cV$, we have
\$
\| \bDelta_{\barcM} \|_F^2 &\le \frac{2\vec(\bDelta)^{\mathrm{T}} \Sighat(\cA') \vec(\bDelta)}{(4\phi_1)^2}\nonumber\\
&\le \frac{2\,|\, \cA \,|\,/\,|\, \cA' \,|\, \vec(\bDelta)^{\mathrm{T}} \Sighat(\cA) \vec(\bDelta)}{(4\phi_1)^2}
\$
Therefore, under the condition that $\Sighat(\cA') \in \cC\left( \cM, \barcM^{\bot}, \frac{\phi_1}{\sqrt{2}} \right)$, we prove that $ \Sighat(\cA) \in \cC\left( \cM, \barcM^{\bot}, \frac{\phi_1}{\sqrt{2}}\sqrt{\frac{\,|\, \cA' \,|\,}{\,|\, \cA \,|\,}} \right)$.
Therefore, the desired result is obtained.
\end{proof}

\subsubsection{Probability of \texorpdfstring{$\cG(\lambda_n)$}{cG}}\label{subi03}

Given the observations $(r,\mathfrak{X}_n)$ from the matrix regression model in $\S$\ref{s:LinearModels}, recall the loss function $\cL_n(\bTheta)=\|r-\mathfrak{X}_n(\bTheta)\|_{\op}^2/n$. Then the event $\cG(\lambda_n)$ is given by
\$
\cG(\lambda_n) = \Big\{ \big\| \frac{1}{n}\sum_{t=1}^n\epsilon_t \bX_t \big\|_{\op} \le \frac{\lambda_n}{2} \Big\}.
\$
\begin{lemma}\label{lemi06}
Suppose Assumption \ref{as0501} holds and define $\lambda_0(\gamma) = 8\sigma\kappa_0\sqrt{(\gamma^2+\log(d_1+d_2))/n}$. Then, we have
\$
\P\left( \cG(\lambda_0(\gamma)) \right) \ge 1-\exp( -\gamma^2).
\$
\end{lemma}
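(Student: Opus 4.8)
The plan is to recognize the random matrix $\frac{1}{n}\sum_{t=1}^n \epsilon_t \bX_t$ as a \emph{matrix sub-Gaussian series}: its coefficient matrices are the fixed $\bX_t/n$ and its multipliers are the independent $\sigma$-sub-Gaussian noises $\epsilon_t$. I would first condition on the design $\{\bX_t\}_{t=1}^n$, which is legitimate since the noise is independent of the covariates and the Frobenius bound $\|\bX_t\|_F \le \kappa_0$ from Assumption \ref{as0501} holds surely; consequently the variance bound derived below is deterministic and the resulting probability estimate is uniform over the design. The target event is then precisely the complement of $\{\|\frac{1}{n}\sum_t \epsilon_t\bX_t\|_{\op} > \lambda_0(\gamma)/2\}$.

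The key quantity controlling the series is the matrix variance statistic
\$
v^2 = \max\left\{ \Big\| \frac{1}{n^2}\sum_{t=1}^n \bX_t\bX_t^{\mathrm{T}} \Big\|_{\op},~ \Big\| \frac{1}{n^2}\sum_{t=1}^n \bX_t^{\mathrm{T}}\bX_t \Big\|_{\op} \right\}.
\$
I would bound each term by the triangle inequality together with the elementary facts $\|\bX_t\bX_t^{\mathrm{T}}\|_{\op} = \|\bX_t\|_{\op}^2 \le \|\bX_t\|_F^2 \le \kappa_0^2$ and the symmetric statement for $\bX_t^{\mathrm{T}}\bX_t$. This gives $\|\frac{1}{n^2}\sum_t \bX_t\bX_t^{\mathrm{T}}\|_{\op} \le \kappa_0^2/n$ and likewise for the other term, hence $v^2 \le \kappa_0^2/n$.

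Feeding this into the matrix sub-Gaussian series tail bound of Lemma \ref{lm:Matrix_Sub-Gaussian_Series} (the scaling in which is what motivates adopting the matrix Bernstein machinery of \citet{tropp2011freedman}) yields, for any $\tau>0$,
\$
\P\Big( \Big\| \frac{1}{n}\sum_{t=1}^n \epsilon_t \bX_t \Big\|_{\op} \ge \tau \Big) \le (d_1+d_2)\exp\Big( -\frac{\tau^2}{2\sigma^2 v^2}\Big) \le (d_1+d_2)\exp\Big( -\frac{n\tau^2}{2\sigma^2\kappa_0^2}\Big).
\$
I would then substitute $\tau = \lambda_0(\gamma)/2 = 4\sigma\kappa_0\sqrt{(\gamma^2+\log(d_1+d_2))/n}$, so that the exponent becomes $n\tau^2/(2\sigma^2\kappa_0^2) = 8(\gamma^2 + \log(d_1+d_2))$. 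The right-hand side then equals $(d_1+d_2)^{-7}\exp(-8\gamma^2) \le \exp(-\gamma^2)$, using $d_1+d_2 \ge 1$ and $8\gamma^2 \ge \gamma^2$. Taking complements delivers $\P(\cG(\lambda_0(\gamma))) \ge 1-\exp(-\gamma^2)$.

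The main obstacle is essentially bookkeeping rather than conceptual: one must invoke the sub-Gaussian series lemma with the multipliers carrying the sub-Gaussian parameter $\sigma$ while the matrices carry the $1/n$ normalization, and correctly identify the variance proxy $v^2$. The generous numerical constant $8$ hard-coded into $\lambda_0(\gamma)$ leaves a large margin, so no delicate constant tracking is required; the only genuine modeling point to confirm is the independence of $\{\epsilon_t\}$ from the design, which validates the conditioning argument in the first step.
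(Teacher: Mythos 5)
Your proposal follows essentially the same route as the paper: bound the per-term variance proxy by $\|\bX_t\bX_t^{\mathrm{T}}\|_{\op}\vee\|\bX_t^{\mathrm{T}}\bX_t\|_{\op}\le\kappa_0^2$ using the Frobenius bound from Assumption \ref{as0501}, feed this into the matrix sub-Gaussian series tail bound, and plug in $\tau=\lambda_0(\gamma)/2$. One caveat: the tail inequality you quote has denominator $2\sigma^2 v^2$, whereas Lemma \ref{lm:Matrix_Sub-Gaussian_Series} as proved in the paper (via the sub-Gaussian moment bound $\E e^{\gamma\theta\bA}\preceq e^{4\sigma^2\theta^2\bA^2}$) only gives $16\sigma^2\sum_t v_t$ in the denominator; the factor $2$ is the Gaussian/Rademacher-series constant and is not available for general $\sigma$-sub-Gaussian multipliers here. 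Consequently the ``large margin'' you describe is illusory: with the correct constant $16$, the exponent evaluates to exactly $\gamma^2+\log(d_1+d_2)$ and the bound comes out as precisely $\exp(-\gamma^2)$, with no slack. The final conclusion therefore still holds, but the intermediate display should be corrected to match the lemma actually being invoked.
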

\begin{proof}
Since $\|X_t\|_F\le 1$ for all $t\in[n]$, we get the following bounds
$$
\max\{\left\|X_tX_t^{\top}\right\|_{op},\left\|X_t^{\top}X_t\right\|_{op}\}\le \kappa_0^2.
$$
Hence, we can invoke Lemma \ref{lm:Matrix_Sub-Gaussian_Series} with $v_t=\kappa_0^2$ and $u=\lambda_n/2$ to derive that for all $u>0$,
\$
\P\Big(\big\| \frac{1}{n}\sum_{t=1}^n\epsilon_t \bX_t \big\|_{\op} \ge \frac{\lambda_n}{2}\Big) &\le (d_1+d_2)\exp\Big(-\frac{n(\lambda_n/2)^2}{16\sigma^2\kappa_0^2}\Big)\\
&= \exp\Big(-\frac{n\lambda_n^2}{64\sigma^2\kappa_0^2} + \log(d_1+d_2)\Big) = \exp( -\gamma^2),
\$
which finishes the proof.
\end{proof}

\subsubsection{Proof of Lemma \ref{lem0501}}
\begin{proof}
Eventually, we present the final result by combining Lemma \ref{lem:generalresult}, \ref{lemi04} and \ref{lemi06} as follows.

Firstly, since $\,|\, \cA' \,|\,/\,|\, \cA \,|\, \ge p/2$ and $\,|\, \cA \,|\, \ge 2\log(2d_1d_2)/(pC_2^2(\phi_1))$, we get $\,|\, \cA' \,|\, \ge \log(2d_1d_2)/(C_2^2(\phi_1))$. Using Lemma \ref{lemi04}, it holds that
\#
\P\left( \Sighat(\cA) \notin \cC\left( \cM, \barcM^{\bot}, \phi \right) \right) \le \exp\left( -p C_2^2(\phi_1) \,|\, \cA \,|\,/2 \right), \label{eqi1}
\#
where $\phi = \sqrt{p} \phi_1/2$. Then from Lemma \ref{lemi06}, 
\#\label{eqi2}
\P\left( \cG^c(\lambda_0(\gamma)) \right) \le\exp( -\gamma^2),
\#
where $\lambda_0(\gamma) = 8\sigma \kappa_0\sqrt{(\gamma^2+\log(d_1+d_2)/n}$. Take \eqref{eqi1} and \eqref{eqi2} back into Lemma \ref{lem:generalresult} and let $\lambda_n\ge2\lambda_0(\gamma)$ and $\phi=\phi_1\sqrt{p}/2$. Then, it follows that
\#\label{eq:DeltahatN1}
\P\Big( \| \Deltahat \|_N \ge \frac{6r\lambda_n}{\phi^2} \Big) &\le \P\left( \cG^c(\lambda_0(\gamma)) \right) + \P\big( \Sighat_{\cA} \notin \cC(\cM, \barcM^{\bot}, \phi) \big) \nonumber\\
& \le \exp( -\gamma^2) + \exp\left( -p C_2^2(\phi_1) \,|\, \cA \,|\,/2 \right).
\#
Now let $\chi=6r\lambda_n(\chi,\phi)/\phi^2$ and choose $\gamma(\chi)=\sqrt{nC_1(\phi)\chi^2-\log(d_1+d_2)}$, where $C_1(\phi)=\phi^4/(96r\sigma\kappa_0)^2$. This choice guarantees that $\lambda_n(\chi,\phi)\ge2\lambda_0(\gamma)$. Then taking the selections back into \eqref{eq:DeltahatN1} yields the final result:
\$
\P\big( \| \Deltahat \|_N \ge \chi \big) \le \exp\left( - C_1(\sqrt{p} \phi_1/2)\,|\, \cA \,|\,\chi^2 + \log(d_1+d_2) \right) + \exp\left( -p C_2^2(\phi_1) \,|\, \cA \,|\,/2 \right). 
\$
Finally, we finish the proof of Lemma \ref{lem0501}.
\end{proof}

\subsection{Bounds about the Forced Samples }\label{proofboundsrandom}
In this section, we focus on bounding the size of the forced sample. Lemma \ref{boundsrandom} and Lemma \ref{ap004} are adopted from Proposition 2 and Lemma EC.8 of \citet{wang2018online}. For the sake of completeness and clarity, the full proof is presented here.
\begin{lemma}[Proposition 2 of \citet{wang2018online}]\label{boundsrandom} Let $C_0 \ge 10$, $t_0 = 2C_0 K$ and $t \ge (t_0+1)^2/e^2-1$. Then, for arm $k \in [K]$, under the $\varepsilon$-decay forced sampling method, with probability at least $1-2/(t+1)$, the size of forced sample $n_k$ up to time $t$ is bounded by
\$
C_0(1+\text{log}(t+1)-\text{log}(t_0+1)) \le n_k \le3C_0(1+\text{log}(t)-\log(t_0)).
\$
\end{lemma}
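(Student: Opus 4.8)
The plan is to recognize $n_k$ as a sum of independent (though not identically distributed) Bernoulli indicators and then apply two-sided multiplicative Chernoff bounds, with the stated threshold on $t$ being precisely what makes the concentration kick in at the required rate. First I would write $n_k = \sum_{\tau=1}^{t} B_\tau$, where $B_\tau$ is the indicator that at time $\tau$ a forced sample is drawn ($\mathcal{D}_\tau = 1$) \emph{and} the uniformly chosen arm equals $k$. Since the forcing coin $\mathcal{D}_\tau$ and the subsequent uniform arm choice are drawn freshly and independently at each step, the $B_\tau$ are mutually independent with
$$
\mathbb{P}(B_\tau = 1) = \frac{1}{K}\min\{1,\, t_0/\tau\} =: p_\tau.
$$

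Next I would control the mean $\mu := \mathbb{E}[n_k] = \frac{1}{K}\sum_{\tau=1}^t \min\{1, t_0/\tau\}$. Because $t_0 = 2C_0K \ge 20$, the summand equals $1$ for $\tau \le t_0$ and $t_0/\tau$ for $\tau > t_0$, so $\mu = \frac{t_0}{K}\bigl(1 + \sum_{t_0 < \tau \le t} 1/\tau\bigr)$. Comparing the harmonic sum with integrals, namely $\log\frac{t+1}{t_0+1} \le \sum_{t_0 < \tau \le t} 1/\tau \le \log\frac{t}{t_0}$, and using $t_0/K = 2C_0$, yields the two-sided envelope
$$
\mu_L := 2C_0\bigl(1 + \log(t+1) - \log(t_0+1)\bigr) \le \mu \le 2C_0\bigl(1 + \log t - \log t_0\bigr) =: \mu_U.
$$
Observe that the target lower bound in the lemma is exactly $\mu_L/2$ and the target upper bound is $\tfrac{3}{2}\mu_U$, so each target is a constant factor away from the corresponding envelope.

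Then I would invoke the standard multiplicative Chernoff inequalities for independent Bernoulli sums with $\delta = 1/2$: the lower tail $\mathbb{P}(n_k \le \mu/2) \le \exp(-\mu/8)$ and the upper tail $\mathbb{P}(n_k \ge \tfrac{3}{2}\mu) \le \exp\bigl(-\mu\delta^2/(2+\delta)\bigr) = \exp(-\mu/10)$. Since $\mu_L \le \mu \le \mu_U$, the events $\{n_k < \mu_L/2\}$ and $\{n_k > \tfrac{3}{2}\mu_U\}$ are contained in $\{n_k \le \mu/2\}$ and $\{n_k \ge \tfrac{3}{2}\mu\}$ respectively, hence bounded by $\exp(-\mu/8)$ and $\exp(-\mu/10)$. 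A union bound then delivers the claimed probability $1 - 2/(t+1)$, provided each tail is at most $1/(t+1)$.

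The step deserving the most care, and the main obstacle, is verifying that the threshold $t \ge (t_0+1)^2/e^2 - 1$ together with $C_0 \ge 10$ forces $\mu \ge 10\log(t+1)$ — exactly the condition making $\exp(-\mu/10) \le 1/(t+1)$ (the lower-tail requirement $\mu \ge 8\log(t+1)$ being then automatic). Rewriting the hypothesis as $\log(t+1) \ge 2\log(t_0+1) - 2$ and substituting into $\mu \ge \mu_L \ge 20\bigl(1 + \log(t+1) - \log(t_0+1)\bigr)$, the inequality $\mu \ge 10\log(t+1)$ collapses to $10\log(t+1) \ge 20\log(t_0+1) - 20$, which is precisely the assumed lower bound on $\log(t+1)$. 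This calibration of the constants $10$ and $(t_0+1)^2/e^2$ is the delicate point; the harmonic-sum comparisons and Chernoff estimates are otherwise routine, and the minor floor/ceiling effects from $t_0$ possibly being non-integer are harmless and can be absorbed into the envelopes.
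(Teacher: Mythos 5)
Your proposal is correct and follows essentially the same route as the paper: decompose $n_k$ into independent Bernoulli indicators with success probability $\frac{1}{K}\min\{1,t_0/\tau\}$, sandwich the mean via integral comparison of the harmonic sum, apply a multiplicative Chernoff bound with deviation factor $1/2$, and use the hypothesis $t+1\ge (t_0+1)^2/e^2$ together with $C_0\ge 10$ to make the tail at most $2/(t+1)$. The only cosmetic difference is that the paper invokes a single two-sided Chernoff statement with rate $\exp(-\mu/10)$ while you split into two one-sided tails and union-bound; the calibration step at the end is identical.
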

\begin{proof}
Via the $\epsilon$-decay forced sampling method, the system randomly
draws arm $k$ at time $t$ with the probability of $\min\{1, t_0/t\}/K$, where $K$ denotes the number of arms. Therefore, at time $t$, the expected total number of time steps at which arm $k$ were randomly sampled is
\$
\mathbb{E}[n_k]=\frac{1}{K}\sum_{\tau=1}^{t}\min\left\{ 1, \frac{t_0}{\tau}\right\}.
\$
When $t>t_0$, it follows that
\#\label{ecb2}
\mathbb{E}[n_k]=\frac{1}{K}\Big( t_0+\sum_{\tau=t_0+1}^{t}\frac{t_0}{\tau}\Big) =\frac{t_0}{K}\Big( 1+\sum_{\tau=t_0+1}^{t}\frac{1}{\tau}\Big).
\#
The function $f(\tau) = 1/\tau$ is decreasing in $\tau$ , so $f(\tau)$ can be bounded from both sides for any $t \geq 2$:
\$
\int_{t}^{t+1} \frac{1}{t} \md t<\frac{1}{t}<\int_{t-1}^{t} \frac{1}{t} \md t.
\$
Since $t_0 \ge 1$, we plug $t$ from $t_0 +1$ to $t$ into the above inequality respectively and sum them up:
\#\label{ecb4}
\log(t+1)-\log(t_0+1) < \sum_{\tau=t_0+1}^{\mathrm{T}} \frac{1}{\tau} < \log (t)-\log(t_0).
\#
From \eqref{ecb2} and \eqref{ecb4}, the $\mathbb{E}(n_k)$ can be bounded as follows:
\#\label{ecb5}
\frac{t_0}{K}(1+\log(t+1)-\log(t_0+1)) < \mathbb{E}(n_k) < \frac{t_0}{K}(1+\log(t)-\log(t_0)).
\#
Because $n_k=\sum_{\tau=1}^{t} \mathbbm{1}\{ \text{forced sampling for arm } k \text{ at } \tau\}$, it is regarded as the summation of bounded i.i.d. random variables. We can connect $n_k$ and $\mathbb{E}(n_k)$ with the Chernoff bound:
\#\label{ecb6}
\mathbb{P}\big( \frac{1}{2}\mathbb{E}(n_k) \le n_k \le \frac{3}{2}\mathbb{E}(n_k)\big) > 1-2\exp\big( -\frac{1}{10}\mathbb{E}(n_k)\big).
\#
Using the upper and lower bounds provided in \eqref{ecb5}, the $\mathbb{E}(n_k)$ in \eqref{ecb6} can be relaxed:
\#\label{ecb7}
\mathbb{P}\Big(\frac{t_{0}\left(1+\log (t+1)-\log \left(t_{0}+1\right)\right)}{2K} \leq n_{k} \leq \frac{3 t_{0}\left(1+\log (t)-\log \left(t_{0}\right)\right)}{2K}\Big)  \geq 1-2\Big(\frac{t_{0}+1}{e(t+1)}\Big)^{\frac{t_{0}}{10K}}.
\#
When $t_0 = 2C_0K, C_0 \ge 10$, and $t\ge(t_0+1)^2/e^2-1$, the right-hand size of \eqref{ecb7} can be simplified as
\$
1-2\left(\frac{t_{0}+1}{e(t+1)}\right)^{\frac{t_{0}}{10K}} \geq 1-2\left(\frac{e \sqrt{t+1}}{e(t+1)}\right)^{C_{0} / 5} \geq 1-\frac{2}{t+1}.
\$
Therefore, we conclude the proof.
\end{proof}

\begin{lemma}[Lemma EC.8 of \citet{wang2018online}]\label{ap004}
Let $t_0 = 2C_0K$, $C_0 =\max\{10, 8/p_{\ast}\}$, and $t_l \ge (t_0 +1)^2/e^2 - 1$ (for $l \in [L]$). Under Assumptions \ref{armoptimality} and \ref{compatibilitycondition}, the following inequalities stay valid:
\begin{enumerate}
\item $\mathbb{P}\left(n_k<\frac{1}{2}C_0\log(t_l+1)~\text{or}~n_k>6C_0\log(t_l+1)\right)\le\frac{2}{t_l+1}$;
\item $\mathbb{P}\left(|\cA^{'}|<\frac{1}{2}p_{\ast}C_0\log(t_l+1)\right)\le\frac{1}{t_l+1}$;
\item  $\mathbb{P}\left(|\cA^{'}|/n_k<\frac{1}{12}p_{\ast}\right) \le\frac{3}{t_l+1}$.
\end{enumerate} 
\end{lemma}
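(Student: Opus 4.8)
The plan is to derive all three bounds from the two-sided control on $n_k$ already established in Lemma~\ref{boundsrandom}, supplemented by a single fresh Chernoff estimate for $|\cA'|$. Throughout I write $\cA=\cR_{k,l}$, $n_k=|\cA|$, and $\cA'=\{\tau\in\cR_{k,l}:X_\tau\in\cV_k\}$, and I use the identity $t_0/K=2C_0$ together with the standing hypothesis $t_l\ge(t_0+1)^2/e^2-1$, which rearranges to the logarithmic inequality $\log(t_0+1)\le\tfrac12\log(t_l+1)+1$.

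For claim (1) I would simply simplify the sandwich of Lemma~\ref{boundsrandom}, namely $C_0(1+\log(t_l+1)-\log(t_0+1))\le n_k\le 3C_0(1+\log t_l-\log t_0)$, which holds with probability at least $1-2/(t_l+1)$. On the lower side, the inequality $\log(t_0+1)\le\tfrac12\log(t_l+1)+1$ makes the left endpoint at least $\tfrac12 C_0\log(t_l+1)$; on the upper side, $\log t_0\ge 0$ together with $\log(t_l+1)\ge 1$ (valid since $t_l+1\ge(t_0+1)^2/e^2\ge e$ for the admissible $t_0$) makes the right endpoint at most $6C_0\log(t_l+1)$. Hence the bad event $\{n_k<\tfrac12 C_0\log(t_l+1)\text{ or }n_k>6C_0\log(t_l+1)\}$ sits inside the $2/(t_l+1)$-probability exceptional set of Lemma~\ref{boundsrandom}, giving claim (1).

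For claim (2) I would analyze $|\cA'|$ directly rather than conditioning on $n_k$. Since the forced-sampling indicator $\cD_\tau$, the uniform arm draw, and the covariate $X_\tau$ are mutually independent and independent across $\tau$, the summand $\mathbbm{1}(\cD_\tau=1,\ \pi_\tau=k,\ X_\tau\in\cV_k)$ is, for each $\tau$, a Bernoulli variable of success probability $\tfrac1K\min\{1,t_0/\tau\}\,\mathbb{P}(X_\tau\in\cV_k)\ge\tfrac{p_{\ast}}{K}\min\{1,t_0/\tau\}$, and these indicators are independent across $\tau$. Thus $|\cA'|$ is a sum of independent (non-identical) Bernoulli variables with mean $\mu=\mathbb{E}[|\cA'|]\ge p_{\ast}\,\mathbb{E}[n_k]$; invoking the expectation lower bound $\mathbb{E}[n_k]>2C_0(1+\log(t_l+1)-\log(t_0+1))\ge C_0\log(t_l+1)$ from \eqref{ecb5} and the same logarithmic inequality as above, I obtain $\mu\ge p_{\ast} C_0\log(t_l+1)$. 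A multiplicative Chernoff lower tail with deviation $\tfrac12$ then yields
\$
\mathbb{P}\Big(|\cA'|<\tfrac12 p_{\ast} C_0\log(t_l+1)\Big)\le\mathbb{P}\Big(|\cA'|<\tfrac12\mu\Big)\le\exp\!\big(-\mu/8\big)\le(t_l+1)^{-p_{\ast} C_0/8}\le\frac{1}{t_l+1},
\$
where the last step uses $C_0\ge 8/p_{\ast}$.

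Claim (3) follows by combining the two preceding estimates. On the event $\{n_k\le 6C_0\log(t_l+1)\}\cap\{|\cA'|\ge\tfrac12 p_{\ast} C_0\log(t_l+1)\}$ one has $|\cA'|/n_k\ge p_{\ast}/12$, so $\{|\cA'|/n_k<p_{\ast}/12\}$ is contained in $\{n_k>6C_0\log(t_l+1)\}\cup\{|\cA'|<\tfrac12 p_{\ast} C_0\log(t_l+1)\}$; a union bound together with claim (1) (upper tail, probability $\le 2/(t_l+1)$) and claim (2) (probability $\le 1/(t_l+1)$) gives $3/(t_l+1)$. The only genuinely new ingredient is claim (2), and its crux—the step I expect to require the most care—is justifying that $|\cA'|$ is an honest sum of independent Bernoulli variables, so that the multiplicative Chernoff bound applies, and then pinning the constant through $C_0\ge 8/p_{\ast}$; claims (1) and (3) are bookkeeping layered on top of Lemma~\ref{boundsrandom}.
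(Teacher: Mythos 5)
Your proposal is correct and follows essentially the same route as the paper's proof: claim (1) by absorbing the two-sided bound of Lemma~\ref{boundsrandom} into the simpler endpoints via $\log(t_0+1)\le\tfrac12\log(t_l+1)+1$, claim (2) by lower-bounding $\mathbb{E}|\cA'|\ge p_{\ast}C_0\log(t_l+1)$ and applying a multiplicative Chernoff bound with deviation $\tfrac12$ together with $C_0\ge 8/p_{\ast}$, and claim (3) by the same union bound over the two complementary events. The one small improvement is that you view $|\cA'|$ as a sum of independent Bernoulli indicators over the full time index $\tau$, which cleanly justifies the Chernoff step, whereas the paper writes it as $\sum_{i=1}^{n_k}\mathbbm{1}(x_i\in\cV_k)$ with a random number of terms and leaves that point implicit.
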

\begin{proof}
From Lemma \ref{boundsrandom}, we have
\#\label{aped01}
\mathbb{P}\left(C_0(1+\log(t_l+1)-\log(t_0+1))\le n_k \le 3C_0(1+\log(t_l)-\log(t_0))\right)\ge 1-\frac{2}{t_l+1}.
\#
According to Assumption \ref{armoptimality}, for $k \in [K]$, there exists a region $\cV_k$ such that $\mathbb{P}(x\in \cV_k)\ge p_{\ast}$. Therefore, the expected number of  $\cA^{'}$ will be lower bounded by:
\#\label{aped02}
\mathbb{E}\big(\sum_{i=1}^{n_k}\mathbbm{1}(x_i\in \cV_k)\big) \ge p_{\ast} \EE(n_k) \ge 2 p_{\ast}C_0(1+\log(t_l+1)-\log(t_0+1)),
\#
where the last inequality comes from \eqref{ecb5}. Because $t_l\ge(t_0+1)^2/e^2-1$ indicates $\frac{1}{2}\log(t_l+1)\ge\log(t_0+1)-1$, we simplify \eqref{aped02} as follows:
\#\label{aped03}
\mathbb{E}\big(\sum_{i=1}^{n_k}\mathbbm{1}(x_i\in \cV_k)\big)\ge p_{\ast}C_0\log(t_l+1).
\#
Applying the Chernoff inequality to $\sum_{i=1}^{n_k}\mathbbm{1}(x_i\in \cV_k)$, we obtain:
\$
\mathbb{P}\Big(\sum_{i=1}^{n_k} \mathbbm{1}\left(\boldsymbol{x}_{i} \in \cV_k\right)<\frac{1}{2} \mathbb{E}\big(\sum_{i=1}^{n_k} \mathbbm{1}\left(\boldsymbol{x}_{i} \in \cV_k\right)\big)\Big) \leq \exp \Big(-\frac{1}{8} \mathbb{E}\big(\sum_{i=1}^{n} \mathbbm{1}\left(\boldsymbol{x}_{i} \in \cV_k\right)\big)\Big) .
\$
Using \eqref{aped03}, we deduce that:
\#\label{aped05}
\mathbb{P}\big(\sum_{i=1}^{n_k} \mathbbm{1}\left(\boldsymbol{x}_{i} \in \cV_k\right)<\frac{1}{2} p_{\ast} C_{0} \log (t_l+1)\big) \leq \exp \big(-\frac{1}{8} p^{*} C_{0} \log (t_l+1)\big).
\#
Because $C_0\ge 10$, we have $t_l \ge (t_0 +1)^2/e^2 - 1\ge e$. It follows that
\#\label{aped06}
\begin{aligned}
3C_0(1+\log(t_l)-\log(t_0))&\le3C_0(\log(t_l)+\log(t_l)-0)\\
&\le6C_0\log(t_l+1).
\end{aligned}
\#
From $\frac{1}{2}\log(t_l+1)\ge\log(t_0+1)-1$, we obtain:
\#\label{aped07}
C_0(1+\log(t_l+1)-\log(t_0+1))&=C_0(1+\frac{1}{2}\log(t_l+1)+\frac{1}{2}\log(t_l+1)-\log(t_0+1))\nonumber\\
&\ge C_0(1+\frac{1}{2}\log(t_l+1)-1)\nonumber\\
&=\frac{1}{2}C_0\log(t_l+1).
\#
Furthermore, from \eqref{aped06} and \eqref{aped07}, the following result is deduced:
\#\label{aped08}
&\big\{n_k<\frac{1}{2} C_{0} \log (t_l+1) \text { or } n_k>6 C_{0} \log (t_l+1)\big\} \nonumber\\
&\qquad=\big\{n_k<\frac{1}{2} C_{0} \log (t_l+1)\big\} \cup \big\{n_k>6 C_{0} \log (t_l+1)\big\} \nonumber\\
&\qquad \subseteq \left\{n_k<C_{0}\left(1+\log (t_l+1)-\log \left(t_{0}+1\right)\right)\right\} \cup\left\{n_k>3 C_{0}\left(1+\log (t_l)-\log \left(t_{0}\right)\right)\right\} \nonumber\\
&\qquad =\left\{n_k<C_{0}\left(1+\log (t_l+1)-\log \left(t_{0}+1\right)\right) \text { or } n_k>3 C_{0}\left(1+\log (t_l)-\log \left(t_{0}\right)\right)\right\}.
\#
By combining \eqref{aped08} and \eqref{aped01}, we obtain the following inequality:
\#\label{aped09}
&\mathbb{P}\big\{n_k<\frac{1}{2} C_{0} \log (t_l+1) \text { or } n_k>6 C_{0} \log (t_l+1)\big\} \nonumber\\
&\qquad \le\mathbb{P}\big\{n_k<C_{0}\left(1+\log (t_l+1)-\log \left(t_{0}+1\right)\right) \text { or } n_k>3 C_{0}\left(1+\log (t_l)-\log \left(t_{0}\right)\right)\big\}\nonumber\\
&\qquad \le\frac{2}{t_l+1}.
\#
Now, we prove the second part of Lemma \ref{ap004}. We only assume that for $x_k\in \cV_k, \Sigma_k=\mathbb{E}(XX^{\mathrm{T}} \,|\, X \in \cV_k) \in \mathcal{C}(\supp(\beta_k), \phi_0)$ in assumption \ref{compatibilitycondition}. By \eqref{aped05} and $C_0\ge 8/p_{\ast}$, it is easy to verify that:
\#
& \mathbb{P}\Bigl(|\cA^{'}|=\sum_{i=1}^{n_k} \mathbbm{1}\left(\boldsymbol{x}_{i} \in \cV_k\right)<\frac{1}{2} p^{*} C_{0} \log (t_l+1) \Bigr) \notag \\
& \qquad  \leq  \exp \big(-\frac{1}{8} p^{*} C_{0} \log (t_l+1)\big)\le\exp \left(-\log (t_l+1)\right) = \frac{1}{t_l+1}.\label{aped10}
\#
Finally, we will show the third part. Notice that the following result holds:
\#\label{aped11}
\Big\{|\cA^{'}|/n_k\ge\frac{1}{12}p_{\ast}\Big\} \supseteq& \Big\{|\cA^{'}|\ge\frac{1}{2}C_0p_{\ast}\log(t_l+1)\Big\}\cap\Big\{n_k\le6 C_{0} \log (t_l+1)\Big\}\nonumber\\
=&\Big(\Big\{|\cA^{'}|<\frac{1}{2}C_0p_{\ast}\log(t_l+1)\Big\}\cup\Big\{n_k >6 C_{0} \log (t_l+1)\Big\}\Big)^c.
\#
Thus we deduce that
\#\label{aped111}
\mathbb{P}\Big(|\cA^{'}|/n_k\ge\frac{1}{12}p_{\ast}\Big) &\ge \mathbb{P}\Big(\Big(\Big\{|\cA^{'}|<\frac{1}{2}C_0p_{\ast}\log(t_l+1)\Big\}\cup\Big\{n_k >6 C_{0} \log (t_l+1)\Big\}\Big)^c\Big)\nonumber\\
&=1-\mathbb P\Big(\Big\{|\cA^{'}|<\frac{1}{2}C_0p_{\ast}\log(t_l+1)\Big\}\cup\Big\{n_k >6 C_{0} \log (t_l+1)\Big\}\Big)\nonumber\\
&\ge1-\mathbb{P}\Big\{|\cA^{'}|<\frac{1}{2}C_0p_{\ast}\log(t_l+1)\Big\}-\mathbb{P}\left\{n_k>6 C_{0} \log (t_l+1)\right\}.
\#
Combining \eqref{aped09}, \eqref{aped10} and \eqref{aped111}, the following result is obtained:
\$
\mathbb{P}\Big(|\cA^{'}|/n_k<\frac{1}{12}p_{\ast}\Big) &\le ~\mathbb{P}\Big(|\cA^{'}|<\frac{1}{2}C_0p_{\ast}\log(t_l+1)\Big)+\mathbb{P}\Big(n>6 C_{0} \log (t_l+1)\Big)\\
&\le \frac{1}{t_l+1}+\frac{2}{t_l+1} =\frac{3}{t_l+1},
\$
which leads to the desired result.
\end{proof}
\begin{lemma}\label{lem:ap005}
For $\cA$ and $\cA^{'}$ defined as Lemma \ref{ap004}, if $|\cA|\ge C_0\log(t_l+1)/2$, the following inequality holds
\$
\PP\Big( \frac{|\cA^{'}|}{|\cA|}\ge\frac{p_{\ast}}{2} \Big)\ge1-\frac{2}{t_l+1}.
\$
\end{lemma}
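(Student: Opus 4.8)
The plan is to view $|\cA'|$ as a sum of conditionally independent Bernoulli indicators and apply a multiplicative Chernoff bound for the lower tail. Recall from the setting of Lemma \ref{ap004} that $\cA=\cR_{k,l}$ consists of $n_k=|\cA|$ forced samples drawn i.i.d.\ from $\mathcal{P}_X$, and $\cA^{'}=\{\tau\in\cR_{k,l}\mid X_\tau\in\cV_k\}$, so that $|\cA^{'}|=\sum_{i=1}^{n_k}\mathbbm{1}(X_i\in\cV_k)$. By Assumption \ref{armoptimality} (and Assumption \ref{compatibilitycondition}) we have $p:=\P(X\in\cV_k)\ge p_{\ast}$, and conditionally on the value of $n_k$ the indicators $\mathbbm{1}(X_i\in\cV_k)$ are i.i.d.\ $\mathrm{Bernoulli}(p)$. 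The key observation is that conditioning on the event $\{n_k\ge C_0\log(t_l+1)/2\}$ only restricts the value of $n_k$ and leaves this conditional Bernoulli structure intact, which is what makes a clean Chernoff argument possible.

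First I would fix an arbitrary $m\ge C_0\log(t_l+1)/2$ and work on $\{n_k=m\}$. Then $|\cA^{'}|\sim\mathrm{Bin}(m,p)$ has mean $\mu=pm\ge p_{\ast}m$, and since $\tfrac{p_{\ast}}{2}m\le\tfrac12\mu$ we get the inclusion $\{|\cA^{'}|<\tfrac{p_{\ast}}{2}m\}\subseteq\{|\cA^{'}|<\tfrac12\mu\}$. Applying the Chernoff lower-tail inequality in the same form $\P(X<\tfrac12\E X)\le\exp(-\tfrac18\E X)$ already used in \eqref{aped05} yields
$$
\P\Big(\frac{|\cA^{'}|}{n_k}<\frac{p_{\ast}}{2}\,\Big|\,n_k=m\Big)\le\exp\Big(-\frac{\mu}{8}\Big)\le\exp\Big(-\frac{p_{\ast}m}{8}\Big)\le\exp\Big(-\frac{p_{\ast}C_0}{16}\log(t_l+1)\Big).
$$
Because the right-hand side is independent of $m$, averaging over the conditional law of $n_k$ restricted to $\{n_k\ge C_0\log(t_l+1)/2\}$ (an event measurable with respect to $n_k$) gives the identical bound for $\P(|\cA^{'}|/|\cA|<p_{\ast}/2\mid|\cA|\ge C_0\log(t_l+1)/2)$. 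Provided $p_{\ast}C_0/16\ge1$, the exponent is at least $\log(t_l+1)$, so this probability is at most $(t_l+1)^{-1}\le 2/(t_l+1)$, which is exactly the claimed bound.

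The main obstacle is bookkeeping of the constant $C_0$ rather than any probabilistic subtlety: the lower-tail Chernoff estimate produces decay $(t_l+1)^{-p_{\ast}C_0/16}$, so reaching the advertised $2/(t_l+1)$ requires $C_0\gtrsim16/p_{\ast}$. The bare requirement $C_0\ge 8/p_{\ast}$ carried over from Lemma \ref{ap004} only delivers $(t_l+1)^{-1/2}$, so I would invoke the stronger lower bound on $C_0$ fixed in Theorem \ref{th:linear}, whose term $48\log d/(p_{\ast}^2C_2^2)$ (with $C_2\le\tfrac12$, hence $C_0\ge 192\log d/p_{\ast}^2\ge 16/p_{\ast}$ for $d>2$) closes the gap. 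Finally, one should note that $p$ is known only to satisfy $p\ge p_{\ast}$; this is handled cleanly above by the bound $\tfrac{p_{\ast}}{2}m\le\tfrac12\mu$ (equivalently, by stochastic dominance of $\mathrm{Bin}(m,p)$ over $\mathrm{Bin}(m,p_{\ast})$), so the argument never needs the exact value of $\P(X\in\cV_k)$.
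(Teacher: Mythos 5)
Your proof is correct and takes essentially the same route as the paper's: condition on the forced-sample size, observe that $|\cA^{'}|$ is a binomial count with success probability at least $p_{\ast}$, and apply a Chernoff lower-tail bound together with $|\cA|\ge C_0\log(t_l+1)/2$ to get the $2/(t_l+1)$ decay. Your explicit bookkeeping of the constant is in fact slightly more careful than the paper's own final step, which passes from $2\exp\bigl(-p_{\ast}C_0\log(t_l+1)/20\bigr)$ to $2/(t_l+1)$ using only $C_0\ge 8/p_{\ast}$ even though this requires $p_{\ast}C_0/20\ge 1$, so the same appeal to the larger value of $C_0$ fixed in Theorem \ref{th:linear} that you invoke is implicitly needed there as well.
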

\begin{proof}
The proof is adapted from Lemma EC.10 in \citet{bastani2015online}. By invoking Corollary A.1.14 in page 268 of \citet{alon2016probabilistic} for $\epsilon=1/2$ and $c_{\epsilon}$ (a version of Chernoff inequality), we let $y=\sum_{\tau\in \cA}\mathbbm{1}(\tau\in\cA^{'})$ and $\mu$ be its mean $\EE(y)$. Then, we get
\$
\PP\left( |y-\mu|>\mu/2\right)<2\exp(-0.1\mu).
\$
Combining this and
\$
\mu=\EE\big( \mathbbm{1}(\tau\in\cA^{'})\big)\ge p_{\ast}|\cA|,
\$
we have
\$
\PP\Big( |\cA^{'}|<\frac{p_{\ast}}{2}|\cA|\Big)< 2\exp\Big( \frac{p_{\ast}}{10}|\cA|\Big).
\$
Then, by using $|\cA|\ge C_0\log(t_l+1)/2$ and $C_0\ge 8/p_{\ast}$, we obtain that
\$
\PP\Big( |\cA^{'}|<\frac{p_{\ast}}{2}|\cA|\Big)< 2\exp\Big(\frac{p_{\ast}C_0\log(t_l+1)}{20}\Big)\le \frac{2}{t_l+1}.
\$
Therefore, we conclude the proof.
\end{proof}

\section{Proofs of Auxiliary Results}

\subsection{Margin Condition Implies Arm Optimality Condition}\label{pfoptimality}
Recall that when the margin condition (Assumption \ref{margincondition}) holds, there exists a constant $C_m\ge 0$ satisfying that $\mathbb{P}\left( 0 \le \,|\, X^{\mathrm{T}}(\beta_k^{\true} - \beta_j^{\true})\,|\, \le \gamma \right) \le C_m R_{\max}\gamma$ for any $k \ne j$. For readability, the definition of the sub-optimal set and the optimal set in Assumption \ref{armoptimality} are shown as follows: 
\$
&\mathcal{K}_s = \{k \in [K] \,|\, x^{\mathrm{T}}\beta_k < \max_{j \ne k} x^{\mathrm{T}}\beta_j - h ,~\text{a.e.}~ \bx \in \mathcal{X} \},\\
&\mathcal{K}_o = \{ k \in [K] \,|\, \exists ~\cU_k \subseteq \mathcal X ~\text{s.t.}~\P(x \in \cU_k) > p_{\ast}~\text{and}~ x^{\mathrm{T}}\beta_k > \max_{j \ne k} x^{\mathrm{T}}\beta_j + h ~\text{for}~ x \in \cU_k \}.
\$
It is obvious from the formulation that $\cK_s\cap\cK_o=\emptyset$. Thus, if we can prove that when the margin condition holds and $C_m$ is small enough, $\cK_o \cup \cK_s = \cK$, we can claim that the margin condition implies the arm optimality condition in some cases.
\begin{lemma}\label{lm:pfoptimality}
Under Assumption \ref{margincondition} with $0< C_m \le p_1/(h R_{\max})$, we have $\cK_o \cup \cK_s = [K]$.
\end{lemma}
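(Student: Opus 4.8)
The plan is to verify the two inclusions separately: that $\cK_s$ and $\cK_o$ are disjoint, and that together they exhaust $[K]$. Disjointness is immediate from the definitions, since $k \in \cK_s$ forces $x^{\mathrm{T}}\beta_k^{\true} < \max_{j\ne k} x^{\mathrm{T}}\beta_j^{\true} - h$ for almost every $x$, which directly contradicts the existence of any positive-measure region on which $x^{\mathrm{T}}\beta_k^{\true} > \max_{j\ne k} x^{\mathrm{T}}\beta_j^{\true} + h$. The substantive content is the covering $\cK_s \cup \cK_o = [K]$, which I would establish by contraposition: fix an arm $k \notin \cK_s$ and show $k \in \cK_o$. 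To organize this, I introduce the gap $\Delta_k(x) = x^{\mathrm{T}}\beta_k^{\true} - \max_{j\ne k} x^{\mathrm{T}}\beta_j^{\true}$ and split $\cX$ into the winning region $\{\Delta_k > h\}$, the margin band $\{|\Delta_k| \le h\}$, and the losing region $\{\Delta_k < -h\}$. In these terms $k \in \cK_o$ amounts to $\P(\Delta_k > h) > p_{\ast}$, while $k \notin \cK_s$ amounts to $\P(\Delta_k \ge -h) > 0$.

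The key quantitative input is a bound on the margin band, supplied by the margin condition. Applying Assumption \ref{margincondition} with $\gamma = h$ to each ordered pair $(k,j)$ gives $\P(0 \le |X^{\mathrm{T}}(\beta_k^{\true} - \beta_j^{\true})| \le h) \le C_m R_{\max} h$, and the hypothesis $C_m \le p_1/(h R_{\max})$ upgrades this to $\P(|X^{\mathrm{T}}(\beta_k^{\true} - \beta_j^{\true})| \le h) \le p_1$. Since on the margin band $\{|\Delta_k| \le h\}$ the index $j^\star$ attaining $\max_{j\ne k} x^{\mathrm{T}}\beta_j^{\true}$ satisfies $|X^{\mathrm{T}}(\beta_k^{\true} - \beta_{j^\star}^{\true})| \le h$, a union bound over the at most $K-1$ possible maximizers yields $\P(|\Delta_k| \le h) \le (K-1) p_1$. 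Thus the margin band carries probability controlled by $p_1$, which is precisely the sense in which the probability of falling into the margin is made small.

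With this estimate in hand I would combine the pieces. For $k \notin \cK_s$ we have $\P(\Delta_k \ge -h) > 0$, and the disjoint decomposition $\{\Delta_k \ge -h\} = \{\Delta_k > h\} \cup \{|\Delta_k| \le h\}$ gives $\P(\Delta_k > h) \ge \P(\Delta_k \ge -h) - (K-1)p_1$. Choosing the threshold $p_{\ast}$ below the resulting positive quantity (equivalently, taking $C_m$, hence $p_1$, small relative to the structural gap masses of the problem) forces $\P(\Delta_k > h) > p_{\ast}$, so that $\cU_k = \{\Delta_k > h\}$ witnesses $k \in \cK_o$. Running this over all $k \notin \cK_s$ and setting $p_{\ast} = \min_{k \in \cK_o} \P(x \in \cU_k)$ closes the argument.

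The main obstacle is the one genuinely analytic step: relating \emph{arm $k$ is within $h$ of being best on a positive-measure set} to \emph{arm $k$ is best by a margin of $h$ on a set of measure exceeding $p_{\ast}$}. A priori an arm could linger entirely inside the margin band, never losing by $h$ yet never winning by $h$, and such an arm would belong to neither $\cK_s$ nor $\cK_o$; ruling this out is exactly what the smallness of $C_m$ buys us, since it caps the total margin mass by $(K-1)p_1$ and thereby prevents an ambiguous arm from absorbing more probability than the optimality threshold permits. I would therefore be careful to record the admissible range of $p_{\ast}$ explicitly in terms of $p_1$ and the per-arm gap masses, because the partition is only meaningful once $p_{\ast}$ is pinned down consistently with the requirement $\min_{k\in\cK_o}\P(x\in\cU_k) \ge p_{\ast}$.
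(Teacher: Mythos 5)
Your overall strategy---control the probability of the margin band $\{|\Delta_k|\le h\}$ via Assumption \ref{margincondition} and conclude that any arm not in $\cK_s$ must win by margin $h$ on a set of positive measure---is the same idea as the paper's, but your quantification of the margin band does not close under the stated hypothesis. The union bound over the $K-1$ possible maximizers gives $\P(|\Delta_k|\le h)\le (K-1)C_mR_{\max}h\le (K-1)p_1$, whereas the only lower bound available on $\P(\Delta_k\ge -h)$ for $k\notin\cK_s$ is $p_1$ itself; indeed $p_1=\min_{k\notin\cK_s}\P(x\in\cB_k)$, so for the minimizing arm the two quantities coincide when $\cB_k$ is taken to be the maximal region. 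Hence your key difference $\P(\Delta_k>h)\ge \P(\Delta_k\ge -h)-(K-1)p_1$ is bounded below only by $(2-K)p_1\le 0$, and the ``resulting positive quantity'' you invoke is not in fact guaranteed positive for any $K\ge 2$. Your proposed escape---``taking $C_m$, hence $p_1$, small''---quietly changes the hypothesis of the lemma: $p_1$ is a structural constant of the problem, not a tunable parameter, and the lemma grants you only $C_m\le p_1/(hR_{\max})$, whereas your route needs roughly $C_m< p_1/((K-1)hR_{\max})$.

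The paper avoids the factor $K-1$ by arguing by contradiction with a single competitor: if $k_0\notin\cK_s\cup\cK_o$, then on almost all of $\cB_{k_0}$ the arm $j_0$ attaining $\max_{j\ne k_0}x^{\mathrm{T}}\beta_j$ satisfies $|x^{\mathrm{T}}(\beta_{k_0}-\beta_{j_0})|\le h$, and a single application of the margin condition to that one pair already contradicts $\P(x\in\cB_{k_0})\ge p_1\ge C_mR_{\max}h$. You correctly identified the ambiguous-arm scenario as the crux, and your disjointness argument and the reformulation of $k\notin\cK_s$ as $\P(\Delta_k\ge -h)>0$ are both fine; it is only the union-bound step that destroys the constant. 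To repair the write-up you should either pin down the specific maximizing competitor (as the paper does, possibly adding a pigeonhole step to handle the fact that the maximizer varies with $x$), or explicitly strengthen the assumed bound on $C_m$ by the factor $K-1$ your decomposition requires.
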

\begin{proof}
For any arm $k\in[K]\setminus \cK_s$, there exists a region $\cB_k$ such that $x^{\mathrm{T}}\beta_k \ge \max_{j \ne k} x^{\mathrm{T}}\beta_j - h$ for any $x \in \cB_k$ and $\PP(\bx \in \cB_k) > 0$. Then, define the positive constant $p_1 = \min_{k \notin \cK_s} \mathbb P(\bx \in \cB_k)$. Now it suffices to prove that there exists a positive measure subset of $\cB_k$ such that $x^{\mathrm{T}}\beta_k > \max_{j \ne k} x^{\mathrm{T}}\beta_j+h$ for $x$ in this subset. If this does not hold, there exists an arm $k_0\in[K]\setminus \cK_s$ such that $x^{\mathrm{T}}\beta_{k_0} \le \max_{j \ne k_0} x^{\mathrm{T}}\beta_j + h$ for a.e. $x \in \cB_{k_0}$. Therefore, there always exists an arm $j_0 = \argmax_{j\ne k_0}x^{\mathrm{T}}\beta_j$ such that $\,|\, x^{\mathrm{T}} \beta_{k_0} - x^{\mathrm{T}}\beta_{j_0} \,|\, \le h$. Now, we let $C_m \le p_1/(h R_{\max})$ and deduce that
$$
\P \left( \{ 0 < \,|\, x^{\mathrm{T}}(\beta_{k_0} - \beta_{j_0})\,|\, \le h \} \right) \ge \P(x \in \cB_{k_0}) \ge p_1 > C_m R_{\max}h,
$$
which contradicts Assumption \ref{margincondition}, since for any $x \in \cX$, we have 
\$
\mathbb{P}\left( \{ 0 \le \,|\, x^{\mathrm{T}}(\beta_{k_0} - \beta_{j_0})\,|\, \le h \} \right) &\le C_m R_{\max}h.
\$
Therefore, we conclude the proof.
\end{proof}

\section{Experimental Details}\label{s:Experimental_Details}
This section contains the details about the experiment: dynamic pricing in retail data, including the description of the dataset and the pseudocode for the dynamic pricing algorithm.

\vspace{4pt}
\noindent
\textbf{Dataset.} We choose the fulfillment centers in the largest cities with sample size of $35084$. At each time step $t$, an order arrives with order-specific contexts $X_t$ of dimension $22$, including dummy variables characterizing the meat category and cusine, the indicator of promotions, the base price (normalized) and an intercept. The decision made by the centers was the checkout price.

\vspace{4pt}
\noindent
\textbf{Algorithm.} To propose the BIASEX algorithm in Algorithm \ref{algo:price}, we combined the batched sparse bandit with the ILSX algorithmic design \citep{ban2021personalized}. To commit to exploration, we fixed two experimental prices ($200$ and $600$), and at the forced-sampling time step, the price was uniformly sampled from the two values. During other time steps, the price was charged greedily to optimize the revenue as follows:
\#\label{eq:price_optimal}
\hat p_t = \Big[\frac{X_t^\top \betah_0}{-2X_t^\top \betah_1}\Big]_{[p_{\min},p_{\max}]},
\#
where we set the minimum and maximum prices $p_{\min}=0$ and $p_{\max}=1000$. The expression is truncated at $p_{\min}$ and $p_{\max}$ if the result exceeds the boundaries.

\begin{algorithm}[th]
\caption{Batched High-dimensional Sparse Bandit with Price Experimentation (BIASEX)}\label{algo:price}
\begin{spacing}{1.35}
\begin{algorithmic}[1]
\STATE \textbf{Require:} Input parameters time $T$, number of batches $L$, $t_0$, $\lambda_{0}$ and $a$ 
\STATE \hspace{0.15in} Choose grid $\mathcal{T}=\{ t_1,\ldots,t_L\}$ in \eqref{eqgrid}
\STATE \hspace{0.15in} Initialize $\betah = \mathbf{0}$
\STATE \hspace{0.15in} \textbf{For} batch $l=1,\ldots,L$,~ \textbf{do}
\STATE \hspace{0.3in} \textbf{For} time $t=t_{l-1}+1,\ldots,t_l$ $\textbf{do}$
\STATE \hspace{0.45in} Observe $x_t$
\STATE \hspace{0.45in} Draw a binary random variable $\mathcal{D}_t$, where $\mathcal{D}_t=1$ with probability $\min\{1, t_0/t\}$
\STATE \hspace{0.45in} $\mathbf{If}$ $\mathcal{D}_t=1$
\STATE \hspace{0.6in} Assign price $p_t$ to a random decision in $\{200, 600\}$ with equal probability
\STATE \hspace{0.45in} $\mathbf{Else}$
\STATE \hspace{0.6in} Charge price $\hat p_t$ in \eqref{eq:price_optimal}
\STATE \hspace{0.45in} $\mathbf{End~If}$
\STATE \hspace{0.3in} $\mathbf{End~For}$
\STATE \hspace{0.3in} Compute regularization parameter $\lambda_{l}=\lambda_{0} t_l^{1/4} \sqrt{\log t_l+\log d}$
\STATE \hspace{0.3in} Observe demands $\{Y_t = X_t^\top \beta_0 + p_t\cdot X_t^\top \beta_1 + \epsilon_t\}_{t=t_{l-1}+1}^{t_l}$ and update LASSO estimator $\betah(\lambda_{l})$
\STATE \hspace{0.15in} $\mathbf{End~For}$
\end{algorithmic}
\end{spacing}
\end{algorithm}

\section{Technical Lemmas}
To begin with, we introduce the following Bernstein Concentration for non-i.i.d. random vectors.
\begin{lemma}[Vector Bernstein]\label{lemi05}
Let $\{ D_t, \mfS_t \}_{t=1}^{\infty}$ be a martingale difference sequence, and suppose that $D_t$ is $\sigma$-subgaussian in an adapted sense, i.e., for all $\alpha \in \RR$, $\E\left( e^{\alpha D_t} \mid \mfS_{t-1} \right) \le e^{\alpha^2\sigma^2/2}$ almost surely. Then, for all $s>0$, $\P\left( \,|\, \sum_{t=1}^n D_t \,|\, \ge s \right) \le 2\exp\left( -s^2/(2n\sigma^2) \right)$.
\end{lemma}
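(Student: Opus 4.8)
The plan is to invoke the Chernoff (exponential-moment) method in conjunction with the martingale structure. Writing $S_n=\sum_{t=1}^n D_t$, I would first treat the upper tail: for any $\alpha>0$, Markov's inequality applied to $e^{\alpha S_n}$ gives
\[
\P(S_n\ge s)\le e^{-\alpha s}\,\E\big(e^{\alpha S_n}\big).
\]
The crux is to control the moment generating function by peeling off one increment at a time. Conditioning on $\mfS_{n-1}$ and using the tower property,
\[
\E\big(e^{\alpha S_n}\big)=\E\big(e^{\alpha S_{n-1}}\,\E(e^{\alpha D_n}\mid\mfS_{n-1})\big)\le e^{\alpha^2\sigma^2/2}\,\E\big(e^{\alpha S_{n-1}}\big),
\]
where the inequality is exactly the adapted $\sigma$-subgaussian hypothesis (holding almost surely, so that the deterministic factor $e^{\alpha^2\sigma^2/2}$ pulls outside the outer expectation). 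Iterating this $n$ times yields $\E(e^{\alpha S_n})\le e^{n\alpha^2\sigma^2/2}$.

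Combining the two displays, $\P(S_n\ge s)\le\exp(-\alpha s+n\alpha^2\sigma^2/2)$ for every $\alpha>0$. I would then optimize the exponent over $\alpha$; setting its derivative to zero gives the minimizer $\alpha=s/(n\sigma^2)$, which produces the one-sided bound $\P(S_n\ge s)\le\exp\!\big(-s^2/(2n\sigma^2)\big)$.

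To obtain the two-sided statement I would run the identical argument on the sequence $\{-D_t\}$, which is again a martingale difference sequence and, since the hypothesis is assumed for all $\alpha\in\RR$ (in particular for negative $\alpha$), inherits the same adapted $\sigma$-subgaussian bound. This yields $\P(S_n\le -s)\le\exp\!\big(-s^2/(2n\sigma^2)\big)$, and a union bound over the two tails supplies the factor of $2$ in the claimed inequality. I do not expect a genuine obstacle; the one point requiring care is the peeling step, where one must use that the conditional subgaussian bound holds \emph{almost surely} — so that it may legitimately be factored through the nested expectations — rather than being merely a statement about the marginal law of each $D_t$.
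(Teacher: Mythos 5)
Your argument is correct: the Chernoff bound combined with peeling off one increment at a time via the tower property (using that the conditional subgaussian bound holds almost surely), optimizing at $\alpha=s/(n\sigma^2)$, and applying the same bound to $\{-D_t\}$ for the lower tail is a complete and valid proof. The paper itself does not prove this lemma but simply cites Theorem 2.19 of \citet{wainwright2019high}, and your argument is precisely the standard proof underlying that cited result, so there is no substantive difference in approach.
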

This lemma is from Theorem 2.19 of \cite{wainwright2019high} when $\alpha_* = \alpha_k = 0$ and $\nu_k = \sigma$ for all $k$.

Then, we present the Bernstein Inequality for i.i.d. random matrices as follows.
\begin{lemma}[Matrix Bernstein]\label{lm:Matrix_Bernstein}
Consider an independent zero-mean sequence of random matrices $\{X_t\}_{t=1}^n$ of dimension $d_1\times d_2$. If $\rho^2=\max\{\|\sum_{t=1}^n\E(\bX_t\bX_t^{\top})\|_{\op}, \|\sum_{t=1}^n\E(\bX_t^{\top}\bX_t)\|_{\op}\}$ and $\|\bX_k\|_{\op}\le M$ almost surely for all $t\in[n]$, then for any $\tau>0$, we have
$$
\P\bigg( \Big\|\sum_{t=1}^n\bX_t\Big\|_{\op} > \tau\bigg) \le (d_1+d_2)\cdot\exp\Big(\frac{-\tau^2/2}{\rho^2 + M\tau/3}\Big).
$$
\end{lemma}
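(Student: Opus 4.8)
The plan is to prove the Matrix Bernstein inequality by the matrix Laplace-transform (Chernoff) method, following the approach of \citet{tropp2011freedman}: first reduce the rectangular statement to a Hermitian one through the self-adjoint dilation, then control the trace of the matrix moment generating function via Lieb's concavity theorem, and finally optimize a scalar exponent. Since $\{\|\sum_t\bX_t\|_{\op}>\tau\}\subseteq\{\|\sum_t\bX_t\|_{\op}\ge\tau\}$, it suffices to bound the probability of the closed event.

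First I would introduce the self-adjoint dilation $\mathcal{H}:\RR^{d_1\times d_2}\to\RR^{D\times D}$ with $D=d_1+d_2$,
$$
\mathcal{H}(\bX)=\begin{pmatrix} 0 & \bX\\ \bX^{\top} & 0\end{pmatrix}.
$$
The map $\mathcal{H}$ is linear and self-adjoint, its eigenvalues are $\{\pm\sigma_i(\bX)\}$ together with zeros, so $\lambda_{\max}(\mathcal{H}(\bX))=\|\mathcal{H}(\bX)\|_{\op}=\|\bX\|_{\op}$, and $\mathcal{H}(\bX)^2$ is block-diagonal with blocks $\bX\bX^{\top}$ and $\bX^{\top}\bX$. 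Writing $\bY_t=\mathcal{H}(\bX_t)$ and $\bS=\sum_t\bY_t=\mathcal{H}(\sum_t\bX_t)$, the $\bY_t$ are independent, zero-mean, symmetric, satisfy $\|\bY_t\|_{\op}\le M$, and $\|\sum_t\E(\bY_t^2)\|_{\op}=\max\{\|\sum_t\E(\bX_t\bX_t^{\top})\|_{\op},\|\sum_t\E(\bX_t^{\top}\bX_t)\|_{\op}\}=\rho^2$. Because $\|\sum_t\bX_t\|_{\op}=\lambda_{\max}(\bS)$, it reduces to proving the Hermitian bound $\P(\lambda_{\max}(\bS)\ge\tau)\le D\exp(-(\tau^2/2)/(\rho^2+M\tau/3))$.

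For the Hermitian bound I would run the matrix Chernoff argument: for $\theta>0$, Markov's inequality and the trace bound $\lambda_{\max}(e^{\theta\bS})\le\tr(e^{\theta\bS})$ give $\P(\lambda_{\max}(\bS)\ge\tau)\le e^{-\theta\tau}\,\E[\tr e^{\theta\bS}]$. The per-summand estimate comes from the scalar inequality $e^{\theta y}\le 1+\theta y+g(\theta)y^2$, valid for $y\le M$ with $g(\theta)=(e^{\theta M}-\theta M-1)/M^2$ (using that $y\mapsto(e^{\theta y}-1-\theta y)/y^2$ is increasing). Transferring this to the eigenvalues of $\bY_t$ and taking expectations with $\E\bY_t=0$ yields $\E e^{\theta\bY_t}\preceq I+g(\theta)\E\bY_t^2\preceq\exp(g(\theta)\E\bY_t^2)$, hence $\log\E e^{\theta\bY_t}\preceq g(\theta)\E\bY_t^2$. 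The main obstacle is the next step, the subadditivity of the matrix cumulant generating function: via Lieb's concavity theorem (concavity of $A\mapsto\tr\exp(H+\log A)$) together with independence and Jensen applied term by term, one obtains $\E\tr\exp(\sum_t\theta\bY_t)\le\tr\exp(\sum_t\log\E e^{\theta\bY_t})$. This deep functional-analytic ingredient is what allows the summands to be decoupled while retaining the sharp variance proxy; the cruder Golden--Thompson route would worsen the constant. Combining with the per-term bound and $\tr\exp(C)\le D\,e^{\lambda_{\max}(C)}$ gives $\E\tr e^{\theta\bS}\le D\exp(g(\theta)\rho^2)$, since $\lambda_{\max}(\sum_t\E\bY_t^2)\le\rho^2$.

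Finally I would optimize the exponent $-\theta\tau+g(\theta)\rho^2$. Using the standard bound $g(\theta)\le(\theta^2/2)/(1-\theta M/3)$ for $0<\theta<3/M$ and choosing $\theta=\tau/(\rho^2+M\tau/3)\in(0,3/M)$, a direct computation gives $-\theta\tau+g(\theta)\rho^2\le-(\tau^2/2)/(\rho^2+M\tau/3)$. Substituting back and recalling $D=d_1+d_2$ yields exactly
$$
\P\Big(\Big\|\sum_{t=1}^n\bX_t\Big\|_{\op}>\tau\Big)\le(d_1+d_2)\exp\Big(\frac{-\tau^2/2}{\rho^2+M\tau/3}\Big),
$$
completing the proof. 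The only genuinely nontrivial step is the Lieb-based subadditivity; the dilation reduction, the scalar mgf estimate, and the final scalar optimization are routine once that tool is in hand.
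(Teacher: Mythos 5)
Your proof is correct and complete: the self-adjoint dilation, the Lieb-based subadditivity of the matrix cumulant generating function, the per-summand bound $e^{\theta y}\le 1+\theta y+g(\theta)y^2$ with $g(\theta)=(e^{\theta M}-\theta M-1)/M^2$, and the choice $\theta=\tau/(\rho^2+M\tau/3)$ all check out, including the final exponent computation. The paper supplies no proof of its own for this lemma—it simply cites Theorem 1.6 of \citet{tropp2011freedman}—and your argument is precisely the standard Laplace-transform proof underlying that citation (specialized from Tropp's martingale setting to independent sums, where the Jensen-plus-Lieb subadditivity step replaces the stopping-time argument), so the two approaches coincide in substance.
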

The proof is provided in Theorem 1.6 of \citet{tropp2011freedman}.

\begin{lemma}[Matrix Sub-Gaussian Series]\label{lm:Matrix_Sub-Gaussian_Series}
Consider a sequence of matrix $\{\bA_t\}_{t=1}^{\infty}$ with dimension $d_1\times d_2$ and a martingale difference sequnece $\{\gamma_t,\mfS_t\}_{t=1}^{\infty}$, where $\gamma_t$ is conditional $\sigma$-subgaussian (i.e., $\E(e^{\alpha\gamma_t} \,|\, \bA_t, \mfS_t) \le e^{\alpha^2\sigma^2/2}$ almost surely for all $\alpha\in\RR$), and $\mfS_t=\{\bA_s,\gamma_s\}_{s=1}^{t-1}$. Define the matrix sub-Gaussian series
$\bZ = \sum_{t=1}^n \gamma_t \bA_t$ with bounded matrix variance statistic:
$$
\max\left\{\left\|\bA_t\bA_t^{\top}\right\|_{op},\left\|\bA_t^{\top}\bA_t\right\|_{op}\right\}\le v_t.
$$
Then, for all $u>0$, we have 
$$
\P\left(\|\bZ\|_{op} \ge u\right) \le (d_1+d_2)\exp\Big(-\frac{u^2}{16\sigma^2\sum_{t=1}^n v_t}\Big).
$$
\end{lemma}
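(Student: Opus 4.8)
The plan is to prove this operator-norm tail bound via the \emph{Hermitian dilation} combined with the matrix Laplace-transform (Chernoff) method, following the framework of \citet{tropp2011freedman}. For a rectangular matrix $\bZ \in \RR^{d_1 \times d_2}$ I introduce the self-adjoint dilation
$$
\mathcal{H}(\bZ) = \begin{pmatrix} \mathbf{0} & \bZ \\ \bZ^{\mathrm{T}} & \mathbf{0} \end{pmatrix} \in \RR^{(d_1+d_2)\times(d_1+d_2)},
$$
which is symmetric, is linear in $\bZ$ (so that $\mathcal{H}(\bZ) = \sum_{t=1}^n \gamma_t\, \mathcal{H}(\bA_t)$), and satisfies $\lambda_{\max}(\mathcal{H}(\bZ)) = \|\bZ\|_{\op}$, since the eigenvalues of $\mathcal{H}(\bZ)$ are exactly the signed singular values $\pm\sigma_i(\bZ)$. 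Moreover $\mathcal{H}(\bA_t)^2 = \mathrm{diag}(\bA_t\bA_t^{\mathrm{T}},\, \bA_t^{\mathrm{T}}\bA_t)$, so that $\|\mathcal{H}(\bA_t)^2\|_{\op} = \max\{\|\bA_t\bA_t^{\mathrm{T}}\|_{\op}, \|\bA_t^{\mathrm{T}}\bA_t\|_{\op}\} \le v_t$, i.e. $\mathcal{H}(\bA_t)^2 \preceq v_t \mathbf{I}$. It therefore suffices to control $\P(\lambda_{\max}(\mathcal{H}(\bZ)) \ge u)$.

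The crux is a bound on the \emph{conditional matrix moment generating function}. Fix $t$ and condition on $\bA_t$ and $\mfS_t$, so that $\bH_t := \mathcal{H}(\bA_t)$ is a fixed symmetric matrix while $\gamma_t$ remains conditionally $\sigma$-subgaussian. Diagonalizing $\bH_t = \sum_j \mu_j\, u_j u_j^{\mathrm{T}}$ and applying scalar subgaussianity $\E(e^{\theta\mu_j\gamma_t}\mid \bA_t,\mfS_t) \le e^{\theta^2\mu_j^2\sigma^2/2}$ eigenvalue-by-eigenvalue, I obtain
$$
\E(e^{\theta\gamma_t\bH_t} \mid \bA_t,\mfS_t) \preceq e^{\theta^2\sigma^2 \bH_t^2/2} \preceq e^{\theta^2\sigma^2 v_t/2}\,\mathbf{I},
$$
where the last inequality compares $e^{\theta^2\sigma^2\bH_t^2/2}$ with a multiple of the identity eigenvalue-wise using $\bH_t^2 \preceq v_t\mathbf{I}$. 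The decisive feature is that this bound is a \emph{scalar multiple of the identity}, which makes the subsequent martingale peeling collapse.

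With this in hand I run the trace-exponential recursion. Writing $\mathbf{S}_k = \theta\sum_{t=1}^k \gamma_t\bH_t$, I peel off the last term by conditioning first on $\mfS_n$ and then on $\bA_n$: Lieb's concavity theorem asserts that $\mathbf{M}\mapsto \tr\exp(\mathbf{S}_{n-1} + \log\mathbf{M})$ is concave, so Jensen gives $\E(\tr\exp(\mathbf{S}_{n-1}+\theta\gamma_n\bH_n)\mid\mfS_n) \le \tr\exp(\mathbf{S}_{n-1} + \log\E(e^{\theta\gamma_n\bH_n}\mid\bA_n,\mfS_n))$; substituting the identity-multiple MGF bound, the added term is the scalar $(\theta^2\sigma^2 v_n/2)\mathbf{I}$, which commutes out to leave $e^{\theta^2\sigma^2 v_n/2}\,\E\,\tr\exp(\mathbf{S}_{n-1})$. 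Iterating down to $\mathbf{S}_0 = \mathbf{0}$ (where $\tr\exp(\mathbf{0}) = \tr\mathbf{I} = d_1+d_2$) yields $\E\,\tr\exp(\theta\mathcal{H}(\bZ)) \le (d_1+d_2)\exp(\tfrac{1}{2}\theta^2\sigma^2\sum_{t=1}^n v_t)$. The matrix Markov inequality $\P(\lambda_{\max}(\mathcal{H}(\bZ))\ge u)\le e^{-\theta u}\,\E\,\tr\exp(\theta\mathcal{H}(\bZ))$ followed by optimizing over $\theta>0$ (minimizer $\theta = u/(\sigma^2\sum_t v_t)$) gives $(d_1+d_2)\exp(-u^2/(2\sigma^2\sum_t v_t))$, which implies the stated bound a fortiori since $2 \le 16$.

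The main obstacle is the \emph{adapted/martingale} structure: because $\bA_t$ is revealed together with $\gamma_t$ rather than being predictable, one cannot invoke the i.i.d.\ Lieb splitting directly, and must instead use the conditional form of the matrix MGF bound while integrating the filtration in the correct order (over $\gamma_t$ given $\bA_t,\mfS_t$, then over $\bA_t$ given $\mfS_t$). Handling this carefully is the only nontrivial point, and it also requires checking that the $\gamma_t$ genuinely form a martingale difference, which holds because conditional subgaussianity forces $\E(\gamma_t\mid\bA_t,\mfS_t)=0$; once the conditional MGF is dominated by a multiple of the identity, the remaining algebra is routine. I would also remark that the constant $16$ in the statement is not tight — the argument above produces $2$ — so nothing is lost by quoting the weaker form.
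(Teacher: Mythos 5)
Your proposal is correct, and it follows the same overall scheme as the paper's proof: Hermitian dilation to reduce $\|\bZ\|_{\op}$ to $\lambda_{\max}$ of a symmetric series, the matrix Laplace-transform bound, and a conditional peeling of the trace-exponential one index at a time, ending with optimization over $\theta$. Where you genuinely diverge is in the two technical ingredients. First, for the conditional matrix MGF you diagonalize $\bH_t$ and apply the scalar subgaussian bound eigenvalue-by-eigenvalue, obtaining $\E(e^{\theta\gamma_t\bH_t}\mid\bA_t,\mfS_t)\preceq e^{\theta^2\sigma^2\bH_t^2/2}$; the paper instead proves a moment-series estimate (its Lemma \ref{lm:subgaussian_moment}, $\E e^{\gamma\theta\bA}\preceq e^{4\sigma^2\theta^2\bA^2}$) via the bound $\E|\gamma|^k\le(2\sigma^2)^{k/2}k\Gamma(k/2)$, which costs a factor of $8$ in the exponent. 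Your diagonalization argument is valid precisely because, conditionally, $\gamma_t$ is a scalar multiplying a fixed symmetric matrix, so $e^{\theta\gamma_t\bH_t}$ commutes with the eigenbasis of $\bH_t$ and the expectation acts coordinatewise; and $\bH_t^2\preceq v_t\mathbf{I}$ legitimately upgrades this to a multiple of the identity (operator monotonicity of $\exp$ is not needed when the majorant is scalar). Second, your peeling uses Lieb's concavity plus conditional Jensen, whereas the paper's corresponding step is justified only by the remark that ``the trace function is concave'' followed by a trace-H\"older bound $\tr(e^{S}M)\le\tr(e^{S})\|M\|_{\op}$ — your route is the cleaner and more rigorous one (the paper's step really needs a Golden--Thompson-type inequality, not mere concavity of the trace). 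The net effect is that you obtain the tail bound with constant $2$ in the denominator rather than the paper's $16$, so the stated lemma follows a fortiori; as you note, nothing in the downstream analysis is lost by quoting the weaker constant. One cosmetic caution: the displayed Lieb/Jensen inequality as you wrote it conditions the left side on $\mfS_n$ but the right side on $(\bA_n,\mfS_n)$; the clean version applies Lieb conditional on $(\bA_n,\mfS_n)$, replaces $\E(e^{\theta\gamma_n\bH_n}\mid\bA_n,\mfS_n)$ by the a.s.\ uniform bound $e^{\theta^2\sigma^2 v_n/2}\mathbf{I}$, and only then integrates out $\bA_n$ — which is exactly the order you describe in your closing paragraph, so this is a matter of presentation rather than a gap.
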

\begin{proof}
Define a sequence of matrix with dimension $(d_1+d_2)\times(d_1+d_2)$:
$$
\bB_t = \left[
\begin{array}{cc}
    0 & \bA_t \\
    \bA_t^{\top} & 0
\end{array}
\right],\quad t=1,\ldots,n.
$$
Then, it is obvious that the new series $Y = \sum_{t=1}^n\gamma_t\bB_t$ has the same bounded matrix variance statistic as $Z$:
$$
\left\|\bB_t^2\right\|_{op} = \max\left\{\left\|\bA_t\bA_t^{\top}\right\|_{op},\left\|\bA_t^{\top}\bA_t\right\|_{op}\right\}\le v_t.
$$
Additionally, $\bB_t$ preserves spectral information:
$$
\lambda_{\max}(\bY) = \lambda_{\max}(\bZ) = \|\bZ\|_{op}.
$$

By invoking Proposition 3.2.1 in \citet{tropp2011freedman}, we get
\#\label{eq:prop_3.2.1}
\P\left(\lambda_{\max}(\bY)\ge u\right) \le \inf_{\theta>0}e^{-\theta u}\E\tr \exp\left(\theta \bY\right).
\#
Then, applying iterated expectations due to the tower property of conditional expectation, we obtain
\$
\E\tr \exp\left(\theta \bY\right) &= \E\tr \exp\big(\sum_{t=1}^n \theta \gamma_t \bB_t\big)\\
&= \E\Big[\E\Big(\tr \exp\big(\sum_{t=1}^{n-1} \theta \gamma_t \bB_t + \theta \gamma_n \bB_n\big)\,\bigg|\, \bA_n, \mfS_n\Big)\Big]\\
&\le \E\Big[\tr \Big(\exp\big(\sum_{t=1}^{n-1} \theta \gamma_t \bB_t\big)\cdot \E\left(e^{\theta \gamma_n \bB_n}\,|\,A_n,\mfS_n\right)\Big)\Big],
\$
where the inequality is deduced since the trace function is concave. We then use Lemma \ref{lm:subgaussian_moment} to get
\$
\E\tr \exp\left(\theta \bY\right) 
&\le \E\Big[\tr \Big(\exp\big(\sum_{t=1}^{n-1} \theta \gamma_t \bB_t\big)\cdot e^{4\sigma^2\theta^2 \bB_n^2}\Big)\Big]\\
&\le \E\Big[\tr \Big(\exp\big(\sum_{t=1}^{n-1} \theta \gamma_t \bB_t\big)\Big)\Big]\cdot e^{4\sigma^2\theta^2 v_n},
\$
where and the last inequality is because of the Cauchy-Schwartz inequality:
$$
\tr \bigg(\exp\Big(\sum_{t=1}^{n-1} \theta \gamma_t \bB_t\Big)\cdot e^{4\sigma^2\theta^2 \bB_n^2}\bigg) \le \tr \bigg(\exp\Big(\sum_{t=1}^{n-1} \theta \gamma_t \bB_t\Big)\bigg)\cdot \|e^{4\sigma^2\theta^2\bB_n^2}\|_{op}.
$$
Repeating this step leads to the following result:
$$
\E\tr \exp\left(\theta \bY\right) \le (d_1+d_2)\exp\Big(4\sigma^2\theta^2\sum_{t=1}^nv_t\Big).
$$
The result above can be taken back into \eqref{eq:prop_3.2.1} to derive that
\$
\P\left(\lambda_{\max}(\bY)\ge u\right) &\le \inf_{\theta>0}e^{-\theta u}(d_1+d_2)\exp\Big(4\sigma^2\theta^2\sum_{t=1}^nv_t\Big)\\
&= (d_1+d_2)\inf_{\theta>0}\exp\Big(-\theta u + 4\sigma^2\theta^2\sum_{t=1}^nv_t\Big)\\
&= (d_1+d_2)\exp\Big(-\frac{u^2}{16\sigma^2\sum_{t=1}^nv_t}\Big),
\$
which implies that
$$
\Pb\left(\lambda_{\max}(\bZ)\ge u\right) \le (d_1+d_2)\exp\Big(-\frac{u^2}{16\sigma^2\sum_{t=1}^nv_t}\Big).
$$
\end{proof}

\begin{lemma}\label{lm:subgaussian_moment}
Given a fixed symmetric matrix $A$ and a $\sigma$-subgaussian variable $\gamma$, we have for any $\theta>0$,
$$
\E e^{\gamma\theta \bA} \preceq e^{4\sigma^2\theta^2 \bA^2}.
$$
\end{lemma}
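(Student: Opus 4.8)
The plan is to reduce the matrix statement to a family of one-dimensional subgaussian moment bounds by diagonalizing $\bA$. Since $\bA$ is symmetric, the spectral theorem supplies an orthonormal eigenbasis $\{u_i\}$ with $\bA=\sum_i \lambda_i\, u_i u_i^{\mathrm{T}}$, so that for every realization of the scalar $\gamma$ we have $e^{\gamma\theta\bA}=\sum_i e^{\gamma\theta\lambda_i}\, u_i u_i^{\mathrm{T}}$. Because the rank-one projections $u_i u_i^{\mathrm{T}}$ are deterministic, taking expectations commutes with the sum and acts only on the scalar exponentials:
\[
\E\, e^{\gamma\theta\bA}=\sum_i \E\big(e^{\gamma\theta\lambda_i}\big)\, u_i u_i^{\mathrm{T}}.
\]

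First I would invoke the scalar subgaussian hypothesis $\E(e^{\alpha\gamma})\le e^{\alpha^2\sigma^2/2}$ with $\alpha=\theta\lambda_i$ on each coordinate, giving $\E(e^{\gamma\theta\lambda_i})\le e^{\sigma^2\theta^2\lambda_i^2/2}$. The key structural observation is that all the matrices in play, namely $\E\, e^{\gamma\theta\bA}$, the intermediate bound $e^{\sigma^2\theta^2\bA^2/2}$, and the target $e^{4\sigma^2\theta^2\bA^2}$, are simultaneously diagonalized in the eigenbasis $\{u_i\}$, since $\bA^2=\sum_i \lambda_i^2\, u_i u_i^{\mathrm{T}}$ shares the eigenvectors of $\bA$. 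Consequently the positive-semidefinite ordering $\preceq$ collapses to an eigenvalue-by-eigenvalue comparison in this fixed basis, and no operator-monotonicity of the matrix exponential is needed. Recombining the coordinatewise bound yields $\E\, e^{\gamma\theta\bA}\preceq e^{\sigma^2\theta^2\bA^2/2}$.

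The final step is purely scalar: for each $i$ one has $e^{\sigma^2\theta^2\lambda_i^2/2}\le e^{4\sigma^2\theta^2\lambda_i^2}$ because $\lambda_i^2\ge 0$ and $1/2\le 4$, so reassembling in the common eigenbasis gives $e^{\sigma^2\theta^2\bA^2/2}\preceq e^{4\sigma^2\theta^2\bA^2}$ and hence the claim. The only place where care is genuinely required, and the natural spot for an error to slip in, is the ordering of matrix exponentials: the map $X\mapsto e^X$ is not operator monotone in general, so one must justify the inequality through the shared eigendecomposition of $\bA$ rather than by a direct appeal to monotonicity. Routing the entire argument through the eigenbasis of $\bA$ sidesteps this obstacle cleanly, and in fact recovers the sharper constant $1/2$, leaving ample slack in the stated constant $4$.
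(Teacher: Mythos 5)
Your proof is correct, but it takes a genuinely different route from the paper's. The paper bounds the matrix moment generating function through a Taylor expansion of $e^{\gamma\theta\bA}$, controls $\E|\gamma|^k$ term by term via the subgaussian moment bound $\E|\gamma|^k\le(2\sigma^2)^{k/2}k\Gamma(k/2)$, and then resums the series using estimates on ratios of Gamma functions; the constant $4$ is an artifact of the slack in those estimates. You instead observe that $\bA$, $\bA^2$, and hence $\E\,e^{\gamma\theta\bA}$ and $e^{c\sigma^2\theta^2\bA^2}$ are all diagonal in the same orthonormal eigenbasis, so the Loewner comparison collapses to scalar comparisons of the form $\E(e^{\theta\lambda_i\gamma})\le e^{\sigma^2\theta^2\lambda_i^2/2}$, which is exactly the defining property of subgaussianity evaluated at $t=\theta\lambda_i$. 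Your caution about the lack of operator monotonicity of $X\mapsto e^X$ is well placed, and routing everything through the shared eigendecomposition is the right way to dodge it. Two things your argument buys: it is considerably shorter, and it yields the sharper bound $\E\,e^{\gamma\theta\bA}\preceq e^{\sigma^2\theta^2\bA^2/2}$, which would propagate to a better constant in the exponent of Lemma \ref{lm:Matrix_Sub-Gaussian_Series}. The moment-expansion route of the paper is the one that survives in settings where the relevant matrices do not commute (e.g., matrix-valued variance proxies), but that generality is not needed here since $\bA$ is a single fixed symmetric matrix; your argument also applies verbatim in the conditional form in which the lemma is actually invoked, since conditioning on $\bA_n$ and $\mfS_n$ fixes the matrix.
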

\begin{proof}
By invoking Proposition 3.2 in \citet{rivasplata2012subgaussian}, we know that a $\sigma$-subgaussian variable $\gamma$ has bounded moments:
\#\label{eq:gamma_moment}
\E|\gamma|^k \le (2\sigma^2)^{k/2}k\Gamma(k/2).
\#
We may assume $\theta=1$ by absorbing $\theta$ into the matrix $\bA$. It follows that
\$
\E e^{\gamma \bA} &= I + \sum_{k=1}^{\infty}\frac{\E(\gamma^k)}{k!}\bA^k \preceq I + \sum_{k=2}^{\infty}\frac{(2\sigma^2\bA^2)^{k/2}k\Gamma(k/2)}{k!}\\
&= I + \sum_{k=1}^{\infty} \frac{(2\sigma^2\bA^2)^k2k\Gamma(k)}{(2k)!} + \sum_{k=1}^{\infty} \frac{(2\sigma^2\bA^2)^{k+1/2}(2k+1)\Gamma(k+1/2)}{(2k+1)!}.
\$
By using 
$$
\frac{\Gamma(k+1/2)}{k!} = \frac{(2k)!\sqrt{\pi}}{(k!)^24^k} \le 1,
$$
we obtain that
\$
\E e^{\gamma \bA} 
&\preceq I + \left(2I + (2\sigma^2\bA^2)^{1/2}\right) \sum_{k=1}^{\infty} \frac{(2\sigma^2\bA^2)^kk!}{(2k)!}\\
&\preceq I + \left(I + (\sigma^2\bA^2/2)^{1/2}\right) \sum_{k=1}^{\infty} \frac{(2\sigma^2\bA^2)^k}{k!}\\
&\preceq e^{2\sigma^2\bA^2} + (\sigma^2\bA^2/2)^{1/2}\cdot(e^{2\sigma^2\bA^2} - 1) \preceq e^{4\sigma^2\bA^2},
\$
where the second inequality is due to $(2k)!\ge 2(k!)^2$.
\end{proof}

\begin{lemma}\label{lem:monint}
If function $f$ is monotone increasing and integrable on the interception $[s_1-1,s_2]$, then
\$
\sum_{t=s_1}^{s_2}f(t)\ge\int_{s_1-1}^{s_2}f(t)\md t.
\$
\end{lemma}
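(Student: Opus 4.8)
The plan is to exploit monotonicity on each unit subinterval and then concatenate. First I would observe that since $f$ is monotone increasing, for every integer $t$ and every real $s \in [t-1, t]$ we have $s \le t$ and hence $f(s) \le f(t)$. Integrating this pointwise inequality over the unit interval $[t-1,t]$ (which is legitimate because $f$ is assumed integrable there) yields $\int_{t-1}^{t} f(s)\,\md s \le f(t)$, the constant $f(t)$ dominating the integral of the smaller values of $f$ on that interval.

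Next I would sum this bound over the consecutive integers $t = s_1, s_1+1, \ldots, s_2$. The right-hand sides sum to $\sum_{t=s_1}^{s_2} f(t)$, the quantity we want to lower bound. The left-hand sides sum to $\sum_{t=s_1}^{s_2} \int_{t-1}^{t} f(s)\,\md s$; since the subintervals $[t-1,t]$ are non-overlapping and their union over $t=s_1,\ldots,s_2$ is exactly $[s_1-1,\,s_2]$ (using crucially that $s_1$ and $s_2$ are integers), additivity of the integral gives $\sum_{t=s_1}^{s_2} \int_{t-1}^{t} f(s)\,\md s = \int_{s_1-1}^{s_2} f(s)\,\md s$. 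Combining the two displays produces $\sum_{t=s_1}^{s_2} f(t) \ge \int_{s_1-1}^{s_2} f(s)\,\md s$, which is the claimed inequality.

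There is no genuine obstacle here; this is the standard left-endpoint (for increasing $f$, sum-at-the-right-endpoint) comparison between a Riemann sum and an integral. The only points requiring minor care are the integrability hypothesis, which guarantees each piece $\int_{t-1}^{t} f(s)\,\md s$ is finite and makes the additivity step valid, and the integrality of $s_1,s_2$, which ensures the unit subintervals tile $[s_1-1,s_2]$ with no gap or overlap. In the paper this lemma is invoked with functions such as $f(\tau) = 1 - 2C_0K/\tau$ and $f(\tau)=1-7K/(\tau/\alpha+1)$ on ranges where they are increasing and integrable, so the hypotheses are met there.
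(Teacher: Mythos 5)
Your proof is correct and is exactly the argument the paper uses: bound each term $f(t)$ below by $\int_{t-1}^{t} f(\tau)\,\md\tau$ via monotonicity, sum over $t=s_1,\ldots,s_2$, and concatenate the unit intervals into $[s_1-1,s_2]$. You merely spell out the details that the paper's one-line computation leaves implicit.
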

\begin{proof}
By direct calculation, we have 
\$
\sum_{t=s_1}^{s_2}f(t)\ge\sum_{t=s_1}^{s_2}\int_{t-1}^{t}f(\tau)\md\tau=\int_{s_1-1}^{s_2}f(t)\md t.
\$
\end{proof}

\end{appendix}
\end{document}